\theoremstyle{plain}
\newtheorem{theorem}{Theorem}[section]
\newtheorem{lemma}[theorem]{Lemma}
\newtheorem{corollary}[theorem]{Corollary}
\theoremstyle{definition}
\newtheorem{definition}[theorem]{Definition}
\theoremstyle{remark}
\newtheorem{remark}[theorem]{Remark}
\icmltitlerunning{Towards Theoretical Understandings of Self-Consuming Generative Models}
\begin{document}

\twocolumn[
\icmltitle{Towards Theoretical Understandings of Self-Consuming Generative Models}




\begin{icmlauthorlist}
\icmlauthor{Shi Fu}{ustc}
\icmlauthor{Sen Zhang}{usyd}
\icmlauthor{Yingjie Wang}{sch,erc}
\icmlauthor{Xinmei Tian}{ustc,hefei}
\icmlauthor{Dacheng Tao}{ntu}
\end{icmlauthorlist}

\icmlaffiliation{ustc}{University of Science and Technology of China, Hefei, China}
\icmlaffiliation{usyd}{The University of Sydney, Sydney, Australia}
\icmlaffiliation{sch}{College of Control Science and Engineering, China University of Petroleum (East China), Qingdao, China}
\icmlaffiliation{erc}{Engineering Research Center of Intelligent Technology for Agriculture, Ministry of Education, China}
\icmlaffiliation{ntu}{Nanyang Technological University, Singapore}
\icmlaffiliation{hefei}{Institute of Artificial Intelligence, Hefei Comprehensive National Science Center, China}

\icmlcorrespondingauthor{Shi Fu}{fs311@mail.ustc.edu.cn}
\icmlcorrespondingauthor{Dacheng Tao}{dacheng.tao@gmail.com}

\icmlkeywords{Machine Learning, ICML}

\vskip 0.3in
]



\printAffiliationsAndNotice{}  

\begin{abstract}
This paper tackles the emerging challenge of training generative models within a self-consuming loop, wherein successive generations of models are recursively trained on mixtures of real and synthetic data from previous generations. We construct a theoretical framework to rigorously evaluate how this training procedure impacts the data distributions learned by future models, including parametric and non-parametric models. Specifically, we derive bounds on the total variation (TV) distance between the synthetic data distributions produced by future models and the original real data distribution under various mixed training scenarios for diffusion models with a one-hidden-layer neural network score function. Our analysis demonstrates that this distance can be effectively controlled under the condition that mixed training dataset sizes or proportions of real data are large enough. Interestingly, we further unveil a phase transition induced by expanding synthetic data amounts, proving theoretically that while the TV distance exhibits an initial ascent, it declines beyond a threshold point. Finally, we present results for kernel density estimation, delivering nuanced insights such as the impact of mixed data training on error propagation.
\end{abstract}

\section{Introduction}
With the rapid advancements in deep generative models, synthetic data of all varieties is expanding swiftly. Notably, publicly accessible generative models, such as Stable Diffusion \cite{rombach2022high} for images and ChatGPT \cite{OpenAI_2023} for text, have directly enabled the creation and dissemination of synthetic content at scale, thereby accelerating the flow of synthetic data towards the Internet. Consequently, this surge in synthetic data has led to a situation where even existing web-scale datasets are known to contain generated content \cite{schuhmann2022laion}. Additionally, the identification of such generated content introduces distinctive technical challenges \cite{sadasivan2023can,huschens2023you}. 

Despite potential risks, synthetic data is also being deliberately leveraged in various applications for several reasons. Firstly, generating synthetic training data offers a more efficient alternative to sourcing real-world samples, and has been shown to improve model performance through data augmentation \cite{antoniou2017data,azizi2023synthetic}. Secondly, in sensitive domains such as medical imaging, synthetic data enables critical privacy protection \cite{dumont2021overcoming}. 
More importantly, the expanding scale of deep generative models necessitates synthetic data, as these models are now trained on web-scale datasets that likely exhaust the supply of readily available real data on the internet \cite{villalobos2022will}. Thus, future generations of deep generative models will inevitably need to confront the presence of synthetic data in their training datasets. Consequently, a self-consuming training loop emerges in which future models are repeatedly trained on synthetic data generated from previous generations.

The study of generative models within the self-consuming loop has attracted substantial attention in current research. Empirical results from \citet{shumailov2023curse} and \citet{briesch2023large} suggest that output diversity inevitably decreases after sufficient training generations. \citet{alemohammad2023self} perform experiments under various mixed training scenarios and conclude that injecting real data can mitigate model collapse. Despite these empirical observations, however, theoretical insights are still lacking. \citet{shumailov2023curse} and \citet{alemohammad2023self} provide theoretical intuition by analyzing simple Gaussian toy models, but their approach targets intuitive understanding rather than in-depth analysis. \citet{bertrand2023stability} further establishes an upper bound on the deviation of the output parameters of the likelihood-based generative model from optimal values. However, a key limitation of \citet{bertrand2023stability} is their direct assumption on the upper bounds of optimization errors and statistical errors resulting from finite sampling, rather than providing a rigorous theoretical analysis. Furthermore, their theoretical results are limited to parameter differences when training on mixed datasets comprising real data and synthetic data generated solely from the most recent generative model.  

In contrast, our work aims to provide a comprehensive theoretical understanding of how training generative models, such as diffusion models and kernel density estimators, within self-consuming loops on various mixed datasets affects the fidelity of learned data distributions. We analyze this issue by exploring more general and diverse compositions of training data, going beyond simplistic assumptions. Additionally, we tackle the challenge of distributional discrepancy between synthetic and real-world data, transcending the limitations of analyzing model parameter discrepancy. Moreover, to overcome the direct assumptions on the bounds of statistical and optimization errors, we conduct tailored analyses of the training dynamics for specific generative models, including kernel density estimators and simplified diffusion models. Ultimately, this enables us to establish upper bounds on the TV distance. By eschewing simplistic assumptions and undertaking more nuanced analyses, our work offers key insights into the dynamics within this rapidly evolving domain of self-consuming generative modeling. The main contributions of this work include:

1. We propose a theoretical framework to assess the impact of training generative models within self-consuming loops. Specifically, We derive TV distance bounds between the original and the future learned data distributions under various mixed training scenarios for diffusion models with a one-hidden-layer neural network score function.

2. We provide requirements on sample sizes and proportions of real data to control errors. Notably, for the most extreme case of full synthetic data, we demonstrate the necessity of quartic sample growth or incorporating $\Omega((i-1)/i)$ proportions of real data in the final generation to restrict errors, where $i$ denotes the number of training generations.

3. We analyze the dynamics of increasing synthetic data on error propagation, unveiling a phase transition as synthetic data expands while real data remains fixed. Interestingly, before this transition point, more synthetic data impairs performance. However, beyond the threshold, incorporating additional synthetic data enhances performance.


4. We present a theoretical analysis of self-consuming non-parametric generative models, particularly deriving TV distance bounds between future learned synthetic data distributions and the original real data distribution using kernel density estimation and efficient decomposition techniques.

\section{Related Work}
The study of generative models within a self-consuming loop has garnered significant attention recently. Current works primarily analyze this phenomenon from an empirical perspective. \citet{shumailov2023curse} observe a degeneration of diversity for variational autoencoders and Gaussian mixture models when a portion of model outputs are recursively reused as inputs. Similarly, \citet{briesch2023large} examines the behavior of language models trained in a self-consuming loop from scratch, finding that while quality and diversity improve over initial generations, their output inevitably becomes less diverse after successive training iterations. Advocating the integration of real data, \citet{alemohammad2023self} performs experiments under various mixed training scenarios, concluding that injecting real data can mitigate model collapse. \citet{martinez2023combining} and \citet{martinez2023towards} further demonstrate that training generative models on web-scale datasets polluted by synthetic samples also corrodes the quality of generated data.

In contrast to the empirical observations, theoretical insights into self-consuming loops remain sparse. Both \citet{shumailov2023curse} and \citet{alemohammad2023self} analyze a simple Gaussian toy model to provide theoretical intuition. \citet{bertrand2023stability} establishes that the incorporation of real data can enhance stability within self-consuming loops under assumptions of infinite sample sizes and negligible initial model approximation error. Furthermore, they demonstrate, through direct assumptions on the upper bounds of statistical and optimization errors of generative models, the feasibility of achieving stability even with finite sample sizes. Comparatively, our work establishes theoretical bounds on the distributional discrepancy between synthetic and real-world data without relying on assumptions regarding the bounds of statistical and optimization errors.

\begin{figure}[t]
\vskip 0.2in
\begin{center}
\centerline{\includegraphics[width=\columnwidth]{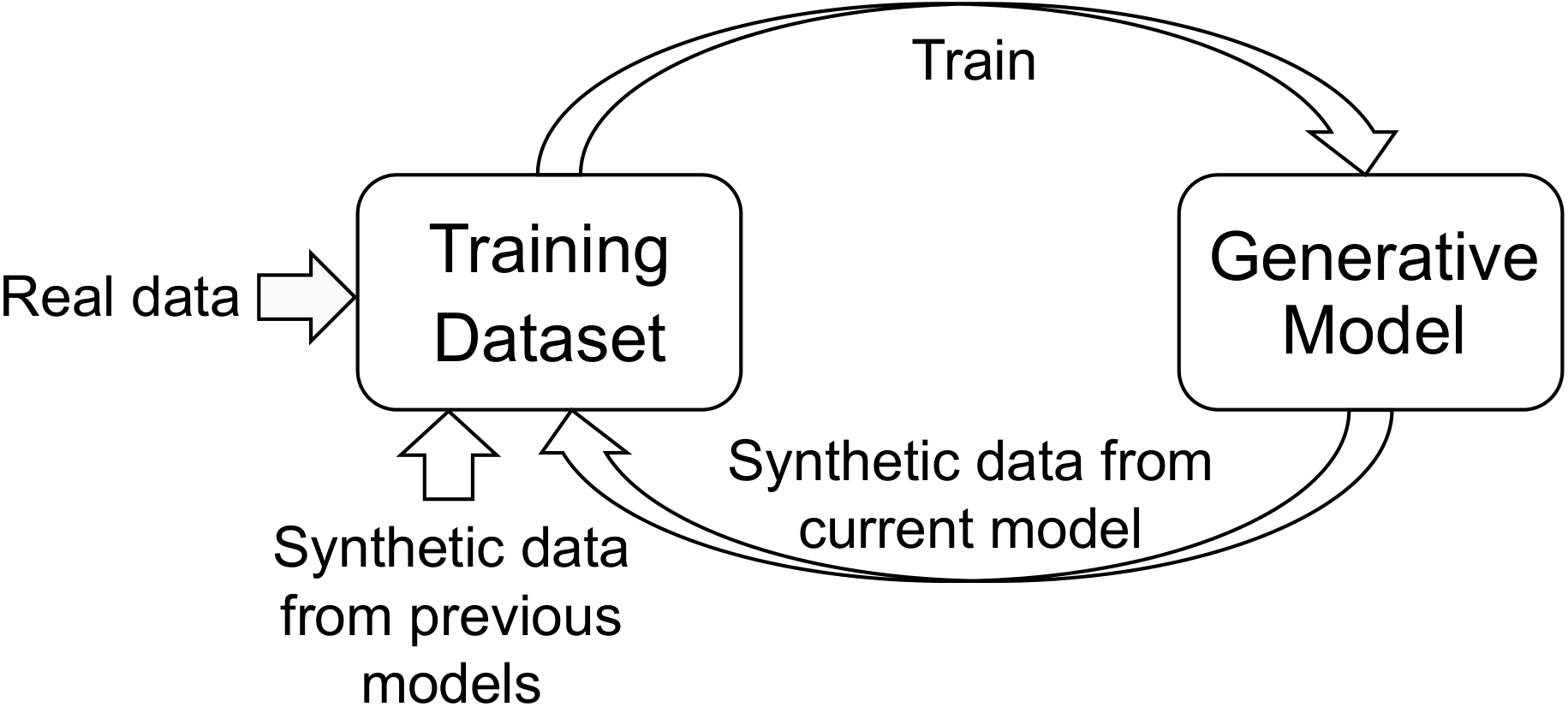}}
\caption{Self-consuming training loop: The initial model $\mathcal{G}_0$ is trained on real dataset $\mathcal{D}_0$. In each subsequent generation $i>0$, $\mathcal{G}_i$ generates samples combined with previous synthetic data and real data into a new training dataset $D_i$. The new model $\mathcal{G}_{i+1}$ is then trained from scratch using $D_i$ for the next generation $i+1$. This repeats until the maximum generation is reached.
}
\label{icml-historical}
\end{center}
\vskip -0.2in
\end{figure}

\section{Background}
\subsection{Self-Consuming Training of Generative Models}
Generative models have advanced in synthesizing realistic data including imagery and text. The resulting synthetic data is widely distributed online and often indistinguishable from genuine content. As generative models evolve, datasets for model training would unintentionally \cite{schuhmann2022laion} or intentionally \cite{huang2022large} include growing proportions of synthetic data alongside real-world samples. The resulting models, in return, create new content, leading to a cycle where successive generations train on datasets with increasingly synthetic proportions. This is termed as a self-consuming training loop, illustrated in Figure \ref{icml-historical}.

\subsubsection{Model Evolution in Self-Consuming Loop}
This paper explores a stochastic process with sequential generations. At generation $i$, we possess a training dataset of $n_i$ samples, \(\{x_i^j\}_{j=1}^{n_i}\), independently and identically distributed, drawn from the distribution \(p_i\). We denote the empirical distribution inferred from this dataset as $\overline{p}_i$. Notably, $p_0$ symbolizes the original distribution w.r.t. real data. Going from generation $i$ to generation $i+1$, our objective is to estimate distribution $p_i$ using samples \(\{x_i^j\}_{j=1}^{n_i}\) via generative model, through parametric models $F_{\theta_{i+1}}: \overline{p}_i \rightarrow p_{\theta_{i+1}}$ or non-parametric estimators $\widehat{F}_{i+1}: \overline{p}_i \rightarrow \widehat{p}_{i+1}$. Here $p_{\theta_{i+1}}$ indicates a generative model parameterized by $\theta_{i+1}$, approximating $p_i$. For generation $i+1$, we resample training data from $p_{i+1} = \sum_{k=1}^{i+1} \beta_{i+1}^k p_{\theta_k} + \alpha_{i+1}p_0$ for parametric models or $p_{i+1} = \sum_{k=1}^{i+1} \beta_{i+1}^k \widehat{p}_{k} + \alpha_{i+1}p_0$ for non-parametric models, with non-negative parameters \(\alpha_{i+1}\) and \(\{\beta_{i+1}^k\}_{k=1}^{i+1}\) summing up to 1. The dataset, sourced from \(p_{i+1}\), comprises a mixture of original data at \(\alpha_{i+1}\) proportion, data generated by previous generations at \(\{\beta_{i+1}^k\}_{k=1}^{i}\) proportions, and current model data at $\beta_{i+1}^{i+1}$ proportion.

Furthermore, the initial generative model is trained on the original dataset from $p_0$. Subsequently, each next generation of models is trained from scratch on a new dataset drawn from the mixed distribution $p_i$. This self-consuming loop repeats until reaching the maximum generation, as shown in Algorithm \ref{alg:example} in the appendix. We then define total variation distance, a metric used to compare probability distributions.

\begin{definition}[Total Variation Distance] Given two probability distributions \( p \) and \( q \) over a multidimensional space \( \mathbb{R}^d \), the Total Variation Distance between \( p \) and \( q \) is:
\[ TV(p, q) = \frac{1}{2} \int_{\mathbb{R}^d} |p(x) - q(x)| \, dx. \]
\end{definition}

\subsubsection{Data Cycles}
We introduce three different ways of mixing real and synthetic data in self-consuming training loops.

\textbf{General Data Cycle}. In this scenario, each model $\mathcal{G}_i$ (for $i\geq 1$) is trained on a mixture of real data and synthetic data from previous models $\{\mathcal{G}_j\}_{j=0}^{i-1}$. The training distribution $p_i$ is a weighted mixture of the original real data distribution $p_0$ and the synthetic data distributions $\{p_{\theta_j}\}_{j=1}^i$ from previous models, expressed as $p_{i} = \sum_{j=1}^{i} \beta_{i}^j p_{\theta_j} + \alpha_{i}p_0$. Furthermore, when utilizing non-parametric estimation, the training distribution $p_i$ is a weighted mixture of $p_0$ and the non-parametric synthetic distributions $\{\widehat{p}_j\}_{j=1}^i$, expressed as $p_{i} = \sum_{j=1}^{i} \beta_{i}^j \widehat{p}_j + \alpha_{i}p_0$.

\textbf{Full Synthetic Data Cycle}. This extreme case explores the use of training datasets that entirely comprise synthetic data recursively generated by the latest model, without real data. At generation $i$, the training distribution is defined either as the synthetic distribution $p_i=p_{\theta_i}$ or the non-parametric synthetic distribution $p_i=\widehat{p}_i$.

\textbf{Balanced Data Cycle}. This scenario blends real data from the original distribution and synthetic data from all previous models into a training distribution with equal contributions. This results in $p_i$ expressed as $\frac{1}{i+1} (p_0 + p_{\theta_1} + \cdots + p_{\theta_i})$ or $\frac{1}{i+1} (p_0 + \widehat{p}_{1} + \cdots + \widehat{p}_{i})$.

\subsection{Background on Diffusion models}
\textbf{Forward and Reverse processes.} Given a dataset \(\mathcal{D}_x = \{x_i^j\}_{j=1}^{n_i} \subseteq \mathbb{R}^d\) with sample $x_i^j \stackrel{\text{i.i.d.}}{\sim} p_i(x)$, where \(p_i\) denotes the target mixed distribution in the $i$-th generation, the forward diffusion process is:
\begin{align}
    dx_i = f(x_i,t)dt + g(t)dw_t, \quad x_i(0) \sim p_0, \notag
\end{align}
where \( w_t \) is the standard Wiener process, \( f(\cdot, t) : \mathbb{R}^d \rightarrow \mathbb{R}^d \) is the \textit{drift coefficient}, and \( g(\cdot) : \mathbb{R} \rightarrow \mathbb{R} \) is the \textit{diffusion coefficient}. The reverse process generates \(x_i(0) \sim p_i\) from \(x_i(T) \sim p_{i,T}\). The reverse-time SDE \cite{anderson1982reverse} is: 
\begin{align}
    dx_i = \left[ f(x_i,t) - g(t)^2 \nabla_{x_i} \log p_{i,t}(x_i) \right] dt + g(t) d\bar{w}_t, \notag
\end{align}
where $\bar{w}_t$ is a Wiener process from $ T $ to $ 0 $, starting with $ p_{i,T} \approx \pi $, with $\pi$ a known prior such as Gaussian noise.

\textbf{Loss objectives.} The task is to estimate the unknown Stein score function $\nabla_{x_i} \log p_{i,t}(x_i)$ by minimizing weighted denoising score matching objectives:
\begin{align}
    &\mathcal{L}(\theta; \lambda(\cdot)) := \mathbb{E}_{t\sim \mathcal{U}(0,T)} \left[ \lambda(t) \cdot \mathbb{E}_{x_i(0)\sim p_{i,0}} \left[ \mathbb{E}_{x_i(t)\sim p_{i,t|0}} \left[ \right.\right.\right. \notag\\
    &\left.\left.\left.\|s_{t,\theta}(x_i(t)) - \nabla_{x_i(t)} \log p_{i,t|0}(x_i(t)|x_i(0))\|_2^2 \right] \right] \right], \label{loss}
\end{align}
where $\theta^*:=\arg\min_{\theta}\mathcal{L}(\theta;\lambda(\cdot))$ and $\mathcal{U}(0,T)$ denotes the uniform distribution. 
 $\lambda(t):[0,T]\to\mathbb{R}_+$ is a weighting function. The score function $s_{t,\theta}:\mathbb{R}^d\to\mathbb{R}^d$ can be parameterized as a neural network. The time-dependent score-matching loss is:
 \begin{align}
     &\widetilde{\mathcal{L}}(\theta;\lambda(\cdot))\notag:=\mathbb{E}_{t\sim\mathcal{U}(0,T)}\left[\lambda(t)\right.\notag \\ &\left.\mathbb{E}_{x_i(t)\sim p_{i,t}}\left[\|s_{t,\theta}(x_i(t))-\nabla_{x_i(t)}\log p_{i,t}(x_i(t))\|_2^2\right]\right]. \notag
 \end{align}
 
 \textbf{Training}. Let $\hat{\mathcal{L}}_{n_i}$ denote the Monte-Carlo estimation of $\mathcal{L}$ defined in Equation \ref{loss} on the training dataset. The gradient flows over the empirical loss and the  population loss are:
\begin{align}
    &\frac d{d\tau}\hat{\theta}_i(\tau)=-\nabla_{\hat{\theta}_i(\tau)}\hat{\mathcal{L}}_{n_i}(\hat{\theta}_i(\tau);\lambda(\cdot)),\quad\hat{\theta}_i(0):=\theta_0,\notag \\
    &  \frac d{d\tau}\theta_i(\tau)=-\nabla_{\theta_i(\tau)}\mathcal{L}_{n_i}(\theta_i(\tau);\lambda(\cdot)),\quad\theta_i(0):=\theta_0. \notag
\end{align}
The learned score functions at training time $\tau$ for SDE time $t$ are $s_{t,\hat{\theta}_i(\tau)}(x_i(t))$ and $s_{t,\theta_i(\tau)}(x_i(t))$, respectively.

\textbf{Score networks}.The score function 
$s_{t, \theta}(x_i)$ is parameterized using the subsequent random feature model:
$$
\frac{1}{m_i} A\sigma(Wx_i + Ue(t)) = \frac{1}{m_i} \sum_{j=1}^{m_i} a_j\sigma(w_j^\top x_i + u_j^\top e(t)).
$$
Here $\sigma$ is ReLU activation, \(A = (a_1, \ldots, a_{m_i}) \in \mathbb{R}^{d \times m_i}\) is trainable, \(W = (w_1, \ldots, w_{m_i})^\top \in \mathbb{R}^{m_i \times d}\) and \(U = (u_1, \ldots, u_{m_i})^\top \in \mathbb{R}^{m_i \times d_e}\) are initially randomized and remain fixed during training. The function \(e : \mathbb{R}_{\geq 0} \to \mathbb{R}^{d_e}\) embeds time.  Assuming 
$a_j$, $w_j$ and $u_j$ are i.i.d. from a distribution $\rho$, as $m_i\to\infty$, this approaches:
$$
\bar{s}_{t,\bar{\theta}}(x)= \mathbb{E}_{(w,u)\sim \rho_0} \left[ a(w, u)\sigma(w^\top x + u^\top e(t)) \right],
$$
where \(a(w, u)\) is \(\frac{1}{\rho_0(w,u)} \int_{\mathbb{R}^d} a \rho(a, w, u) da \) and \( \rho_0(w, u)\) is \(\int_{\mathbb{R}^d} \rho(a, w, u) da \). By the positive homogeneity of ReLU, we assume \( \|w\| + \|u\| \leq 1 \). The optimal solution is denoted by $\bar{\theta}^*$ when replacing $s_{t,\theta}(x)$ in loss objectives with $\bar{s}_{t,\bar{\theta}}(x)$.

The kernel \( k_{\rho_0} (x, x')\) is defined as \(\mathbb{E}_{(w,u)\sim p_0} [\sigma(w^\top x + u^\top e(t))\sigma(w^\top x' + u^\top e(t))] \), and denoted by \( \mathcal{H}_{k_{\rho_0}} \), the induced reproducing kernel Hilbert space (RKHS). It follows that \( \bar{s}_{t,\bar{\theta}} \in \mathcal{H}_{k_{\rho_0}} \) if the RKHS norm \( \|\bar{s}_{t,\bar{\theta}}\|_{\mathcal{H}_{k_{\rho_0}}}^2 = \mathbb{E}_{(w,u)\sim \rho_0} [\|a(w, u)\|_2^2] \) is finite. The discrete version is the empirical average \( \|s_{t,\theta}\|_{\mathcal{H}_{k_{\rho_0}}}^2 = \frac{1}{m} \sum_{j=1}^{m} \|a(w_j, u_j)\|_2^2 \).

\section{Theoretical Results for Diffusion Models within Self-Consuming Loops}\label{applica}
In this section, we apply our theoretical framework to diffusion models with a one-hidden-layer neural network score function. The diffusion models have recently gained significant popularity due to their outstanding performance across various applications. Notable examples include DALL·E \cite{ramesh2022hierarchical} and Stable Diffusion \cite{rombach2022high}, both demonstrating notable advancements. However, the curse of dimensionality renders it challenging to obtain upper bounds on the statistical errors and optimization errors that compound over successive training generations. Inspired by \citet{yang2022mathematical,yang2022generalization,li2023generalization}, we conduct a more fine-grained analysis of this error propagation in diffusion models in the self-consuming loops. This enables us to obtain an upper bound on the TV distance between the synthetic data distributions produced by future models and the original real data distribution under various mixed training scenarios.

\subsection{General Data Cycle: Flexibly Regulating Data Composition Mixture}

This framework allows us to flexibly control the compositional mixture between real and synthetic data in the training distribution $p_i$. Our theoretical analysis quantifies the impacts of this adaptive training approach with mixed data on the cumulative error and the fidelity of models in self-consuming loops.

\begin{theorem}\label{diffusion model}
Suppose that \( p_i \) is continuously differentiable and has a \textit{compact support set}, i.e., \( \|x\|_{\infty} \) is uniformly bounded, and there exists a RKHS \( \mathcal{H}_{k_{p_0}} \) such that \( \bar{s}_{0,\bar{\theta}^*} =  \mathbb{E}_{(w,u)\sim \rho_0} \left[ a^*(w, u)\sigma(w^\top x + u^\top e(0)) \right] \in \mathcal{H}_{k_{p_0}} \). Suppose that the initial loss, trainable parameters, the embedding function \( e(t) \) and weighting function \( \lambda(t) \) are all bounded. Let $n_{i}$ be the number of training samples obtained from the distribution $p_{i} = \sum_{j=1}^{i} \beta_{i}^j p_{\theta_j} + \alpha_{i}p_0$. Choose $m_i\asymp n_i$ and $\tau_{i+1}\asymp \sqrt{n_i}$\footnote{We denote $B\asymp \widetilde{B}$ if there are absolute constants $c_1$ and $c_2$ such that $c_1B\leq \widetilde{B}\leq c_2B$.}. Then,  with probability at least \( 1 - \delta \),
\begin{align}
    &TV(p_{\theta_{i+1}},p_0)\notag \\
    &\lesssim \sum_{k=0}^i A_{i-k} \left(n_{i-k}^{-\frac{1}{4}}\sqrt{d\log \frac{di}{\delta}}+\sqrt{KL(p_{i-k,T}\|\pi)}\right),\notag
\end{align}
   where $A_i=1, A_{i-k}=\sum_{j=i-k+1}^i \beta_j^{i-k+1} A_j$ for $1\leq k \leq i$ and $\lesssim$ hides universal positive constants that depend solely on $T$.
\end{theorem}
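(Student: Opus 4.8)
### Proof Proposal

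\textbf{Overall strategy.} The plan is to combine a one-step distributional error bound for a single diffusion-model training step with a recursive unfolding over generations. For a single generation, I would control $TV(p_{\theta_{i+1}}, p_i)$ — the distance between the newly learned distribution and the mixed distribution it was trained on — in terms of (i) the score-matching estimation error coming from finite samples $n_i$ and the random-feature width $m_i$, (ii) the optimization error from running gradient flow only up to time $\tau_{i+1}$, and (iii) the prior-initialization error $\sqrt{KL(p_{i,T}\|\pi)}$ from starting the reverse SDE at $\pi$ rather than the true diffused terminal law. Steps (i)–(ii) are exactly where the ``fine-grained analysis'' advertised after the theorem comes in: using the NTK/random-feature structure with $m_i \asymp n_i$, a Rademacher-complexity argument on the RKHS ball gives a score-error rate of order $n_i^{-1/2}$ in $L^2(p_{i,t})$, and choosing $\tau_{i+1}\asymp\sqrt{n_i}$ balances the optimization residual against it; then a Girsanov / data-processing argument (in the style of the diffusion-convergence literature the paper cites) converts the $L^2$ score error into a KL bound on path measures, and Pinsker plus the square-root gives the $n_i^{-1/4}\sqrt{d\log(di/\delta)}$ term. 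I would fold the $\log(di/\delta)$ into the high-probability bound via a union bound over the $i$ generations (hence $di/\delta$ rather than $d/\delta$), with each generation's failure probability set to $\delta/i$.

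\textbf{The recursion.} Once the one-step bound
\[
TV(p_{\theta_{k+1}}, p_k) \lesssim \varepsilon_k := n_k^{-\frac14}\sqrt{d\log\tfrac{di}{\delta}} + \sqrt{KL(p_{k,T}\|\pi)}
\]
is in hand, I would exploit the mixture structure $p_k = \sum_{j=1}^k \beta_k^j p_{\theta_j} + \alpha_k p_0$. Since $TV$ is jointly convex and the mixture weights are nonnegative and sum to one,
\[
TV(p_k, p_0) \le \sum_{j=1}^k \beta_k^j\, TV(p_{\theta_j}, p_0),
\]
and combining with the triangle inequality $TV(p_{\theta_{k+1}},p_0) \le TV(p_{\theta_{k+1}},p_k) + TV(p_k,p_0)$ gives
\[
TV(p_{\theta_{k+1}}, p_0) \le \varepsilon_k + \sum_{j=1}^k \beta_k^j\, TV(p_{\theta_j}, p_0).
\]
This is a linear recursion in the quantities $T_k := TV(p_{\theta_k},p_0)$ with inhomogeneous term $\varepsilon_k$ and coupling coefficients $\beta_k^j$. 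Unrolling it is exactly what produces the claimed coefficients: one checks by (reverse) induction that the total multiplier attached to $\varepsilon_{i-k}$ after full expansion is precisely $A_{i-k}$, where $A_i = 1$ and $A_{i-k} = \sum_{j=i-k+1}^{i}\beta_j^{i-k+1}A_j$. I would prove this by induction on $k$: the base case $k=0$ is the bare one-step bound; for the inductive step, substitute the recursion for $T_k$ into the already-unrolled bound for $T_{i+1}$ and collect, matching the definition of $A_{i-k}$ term by term.

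\textbf{Main obstacle.} The genuinely hard part is the one-step score-estimation bound with the correct $n_i^{-1/4}$ (equivalently $n_i^{-1/2}$ in squared $L^2$ score error) dependence under the random-feature parameterization, \emph{uniformly across generations}. Three complications stack up: the target score $\nabla\log p_{i,t}$ changes every generation because $p_i$ changes, so I must control the RKHS norm of the optimal $\bar{s}_{t,\bar\theta^*}$ and the approximation error of the finite-width model \emph{without generation-dependent blowup} — this is where the compact-support, bounded-initial-loss, and bounded-parameter hypotheses, plus the assumed membership $\bar s_{0,\bar\theta^*}\in\mathcal H_{k_{p_0}}$, are doing real work; the gradient-flow dynamics must be shown to stay in a neighborhood where the loss is effectively strongly convex in function space (via the kernel integral operator), so that stopping at $\tau_{i+1}\asymp\sqrt{n_i}$ gives optimization error $O(n_i^{-1/2})$; and the Girsanov-type passage from $L^2(p_{i,t})$ score error to $TV(p_{\theta_{i+1}},p_i)$ must be uniform in $t\in(0,T)$, which is why the bound's hidden constants are allowed to depend on $T$. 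I would handle the score-error step by importing the random-feature diffusion analysis of \citet{yang2022mathematical,li2023generalization} essentially verbatim for a single step, and isolate all generation-uniformity into the boundedness assumptions; the recursion and the $A_{i-k}$ bookkeeping are then routine by comparison.
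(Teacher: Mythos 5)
Your proposal follows essentially the same route as the paper: triangle inequality to split $TV(p_{\theta_{i+1}},p_0)$ into a one-step term and a mixture term, Pinsker plus the Song et al.\ KL bound to reduce the one-step term to the training loss plus $KL(p_{i,T}\|\pi)$, a random-feature/RKHS decomposition of the training loss bounded via Rademacher complexity and gradient-flow norm estimates from \citet{li2023generalization}, the parameter choices $m_i\asymp n_i$, $\tau_{i+1}\asymp\sqrt{n_i}$ to balance to $n_i^{-1/4}$, and finally unrolling the linear recursion to produce the $A_{i-k}$ coefficients. You also correctly identify the union bound over generations as the source of $\log(di/\delta)$ and the boundedness/compact-support assumptions as what keeps constants generation-uniform, so the proposal matches the paper's argument in both structure and substance.
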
 
\textbf{Proof sketch of Theorem \ref{diffusion model}.} We begin by leveraging the triangle inequality to decompose the TV distance between $p_{\theta_{i+1}}$ and $p_0$ as follows:
$$
TV(p_{\theta_{i+1}},p_0)\leq TV(p_{\theta_{i+1}},p_i)+TV(p_i,p_0)
$$
Next, we focus on bounding the first term $TV(p_{\theta_{i+1}},p_i)$. Through an application of Pinsker's inequality, we relate this term to the KL divergence between $p_i$ and $p_{\theta_{i+1}}$. Subsequently, according to Theorem 1 in \citet{song2021maximum}, the KL divergence can be upper bounded by the training loss $\widetilde{\mathcal{L}}(\hat{\theta}_{i+1}(\tau_{i+1}))$ up to a small error. To dissect the training loss $\widetilde{\mathcal{L}}(\hat{\theta}_{i+1}(\tau_{i+1}))$, we leverage several decompositions and partition it into multiple constituent terms: $\widetilde{\mathcal{L}}(\hat{\theta}_{i+1}(\tau_{i+1})) - \widetilde{\mathcal{L}}(\theta_{i+1}(\tau_{i+1}))$, $\widetilde{\mathcal{L}}(\theta_{i+1}(\tau_{i+1}))-\bar{\tilde{\mathcal{L}}}(\bar{\theta}_{i+1}(\tau_{i+1}))$, and $\bar{\tilde{\mathcal{L}}}(\bar{\theta}_{i+1}(\tau_{i+1}))$.

For the first term, $\widetilde{\mathcal{L}}(\hat{\theta}_{i+1}(\tau_{i+1})) - \widetilde{\mathcal{L}}(\theta_{i+1}(\tau_{i+1}))$, we bound the gap through general norm estimates of parameters trained under gradient flow dynamics, as delineated in \citet{li2023generalization}, in conjunction with classical analyses leveraging Rademacher complexity. For the second term, $\widetilde{\mathcal{L}}(\theta_{i+1}(\tau_{i+1}))-\bar{\tilde{\mathcal{L}}}(\bar{\theta}_{i+1}(\tau_{i+1}))$, we derive the bound via gradient flow analysis and properties of the RKHS norm. Regarding the third term, $\bar{\tilde{\mathcal{L}}}(\bar{\theta}_{i+1}(\tau_{i+1}))$, we establish the bound through properties of the score function space.

By analyzing each constituent term and selecting the model width $m_i \asymp n_i$, optimal early-stopping training time $\tau_{i+1} \asymp \sqrt{n_i}$, in accordance with properties of the underlying data distribution, we arrive at upper bounds for $TV(p_{\theta_{i+1}},p_i)$. Finally, we substitute this bound back into the original decomposition to obtain the overall bound on $TV(p_{\theta_{i+1}},p_0)$. We then derive the final result through recursive solving.

\begin{remark}\textbf{Comparision with Previous Works.} In the field of self-consuming generative modeling, \citet{shumailov2023curse} and \citet{alemohammad2023self} provided foundational insights through their analysis of a simplistic multivariate Gaussian toy model. However, their approach prioritized providing theoretical intuition instead of a detailed and rigorous theoretical analysis. A more pertinent comparison to our research can be drawn with Theorem 2 in \citet{bertrand2023stability}.

\citet{bertrand2023stability} made significant strides by establishing an upper bound on the deviation of generative model output parameters from the optimal parameters, denoted as $\|\theta_{i+1}-\theta^*\|$. This was achieved under specific assumptions on the upper bounds of statistical and optimization errors in generative models, as outlined in their Assumption 3. 

Our research diverges from \citet{bertrand2023stability}'s work in several key aspects: (1) We eschew the direct assumption of upper bound constraints on statistical and optimization errors. Instead, our analysis delves deeply into the nature of statistical errors by employing kernel density estimation theory and concentration inequalities for a nuanced understanding. (2) We specifically examine optimization errors in generative models, particularly focusing on diffusion models as elaborated in Section \ref{applica}. This approach allows us to circumvent the need for direct assumptions on errors. (3) Our analysis extends beyond the scope of \citet{bertrand2023stability}. While they focused on the parameter difference 
$\|\theta_{i+1}-\theta^*\|$, our research tackles the more complex task of assessing the distributional discrepancy between synthetic and real data, represented as $TV(p_{\theta_{i+1}}, p_0)$, which is inherently more challenging since the optimal estimator does not perfectly mirror the original real data distribution $p_0$. (4) Finally, our assumptions regarding the composition of the training set are more general and practical compared to those of \citet{bertrand2023stability}. We consider a training set that incorporates data from all previous generations alongside real data, offering a broader and more realistic foundation for training the generative model. This stands in contrast to their assumption, which limits the training set to real data and synthetic data from only the most recent generation.

In summary, our work extends beyond existing literature by overcoming assumptions on statistical and optimization error bounds, assessing distributional discrepancy between synthetic and real data, and employing more realistic assumptions about the training dataset.
\end{remark}

\begin{remark}\textbf{Optimal Early-Stopping Strategy}.
When we select $m_i \asymp n_i$ and omit the $\sqrt{d\log (d/\delta)}$ term, the bound for $TV(p_{\theta_{i+1}},p_i)$ in Theorem \ref{diffusion model} can be expressed as:
$$
\sum_{k=0}^i A_{i-k} \left(\frac{\tau_{i+1-k}^{3/2}}{n_{i-k}}+\frac{1}{\sqrt{\tau_{i+1-k}}}+\sqrt{KL(p_{i-k,T}\|\pi)}\right).
$$
By optimally choosing the early-stopping time $\tau_{i+1} \asymp \sqrt{n_i}$, we arrive at the final result. In essence, our analysis provides that through early stopping, the TV distance can be controlled when retraining diffusion models on mixed datasets over successive generations. The key insight is that by early-stopping the training at the optimal time, we prevent overfitting to the training distribution, thereby controlling the discrepancy between the synthetic data distribution and the original data distribution. 
\end{remark}

\begin{remark}\label{diffu real} \textbf{Real Data Integration Across Generations.} 
Previous experimental results suggest that when the proportion of real data is sufficiently large, the error can be effectively controlled \cite{alemohammad2023self, shumailov2023curse}. This remark examines the impact of incorporating real data at each generation on the theoretical outcomes for diffusion models. Specifically, we assume $p_j=\alpha p_0+(1-\alpha)p_{\theta_j}$ for $1\leq j \leq i$ and $0<\alpha<1$. Additionally, we assume that the training set sizes and $KL$ terms are of
the same order of magnitude across all generations. This setup allows us to derive the following bound:
\begin{align}
 &TV(p_{\theta_{i+1}},p_0) \lesssim \notag \\
   &
    \left(1-(1-\alpha)^{i+1}\right)\alpha^{-1} (n^{-\frac{1}{4}}\sqrt{d\log \frac{di}{\delta}}+\sqrt{KL(p_{i,T}\|\pi)}).\notag
\end{align}
Based on our theoretical results, we observe that to control the error, the requirement on the proportion of real data added at each generation is more relaxed compared to the approach discussed in Remark \ref{remark 4.9} in the following subsection, where real data is only added in the final generation. Specifically, while Remark \ref{remark 4.9} requires the proportion of real data to increase with the number of training generations, i.e., $\alpha=\Omega(\frac{i-1}{i})$, this analysis shows that if real data is added at each generation, a suitable constant value for $\alpha$ is sufficient to effectively control the error. Additionally, due to the presence of term $\alpha^{-1}$, the value of $\alpha$ should not be too small, as it may lead to difficulties in controlling the error. For instance, as $\alpha$ tends to 0, employing Taylor expansion, we observe that the error accumulates linearly with the increase of generation $i$.

\end{remark}

\subsection{Full Synthetic Data Cycle: Error Control with Quartically Sampling or High Real Data Ratio}
Here we thoroughly analyze the fully synthetic data cycle, wherein each model is trained using synthetic data from the most recent generative model. Although in practical data collection processes, new datasets typically retain portions of original real data even in successive generations, analyzing this extreme synthetic-only training loop provides valuable theoretical insights. Specifically, our theoretical analysis shows that the error can be controlled by increasing the sampling quantity quartically across generations or by injecting sufficient real data in the final generation.
\begin{corollary}[Worst Case]\label{full synthetic data cycle} 
Let $n_{i}$ represent the number of training samples obtained from the distribution $p_{i}$ at the $i$-th generation. We define $p_i$ as $p_i=p_{\theta_i}$. Then, with probability at least $1-\delta$,
\begin{align}
&TV(p_{\theta_{i+1}},p_0)\notag \\
&\lesssim \sum_{k=1}^i\left(n_{k}^{-\frac{1}{4}}\sqrt{d\log \frac{di}{\delta}}+\sqrt{KL(p_{k,T}\|\pi)}\right). \notag
\end{align}
\end{corollary}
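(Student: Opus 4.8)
\textbf{Proof sketch of Corollary~\ref{full synthetic data cycle}.}
The plan is to read the corollary off Theorem~\ref{diffusion model} after specializing the mixing weights, since the full synthetic data cycle is exactly the degenerate scheme in which, inside $p_i = \sum_{j=1}^i \beta_i^j p_{\theta_j} + \alpha_i p_0$, one takes $\alpha_i = 0$, $\beta_i^i = 1$, and $\beta_i^j = 0$ for $j<i$ (every new training set is resampled purely from the most recent model, so $p_i = p_{\theta_i}$). First I would check that the trajectory $p_0, p_{\theta_1}, p_{\theta_2}, \dots$ still meets the hypotheses of Theorem~\ref{diffusion model}: compact support, $C^1$ regularity, RKHS membership of $\bar{s}_{0,\bar{\theta}^*}$, and boundedness of the initial loss, trainable parameters, $e(t)$ and $\lambda(t)$. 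Granting this, Theorem~\ref{diffusion model} applies verbatim with the choices $m_i\asymp n_i$ and $\tau_{i+1}\asymp\sqrt{n_i}$.

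The only genuine computation is evaluating the recursively defined weights $A_{i-k}$ for this choice. Since $\beta_j^{i-k+1}=0$ unless $j=i-k+1$, the defining sum $A_{i-k}=\sum_{j=i-k+1}^{i}\beta_j^{i-k+1}A_j$ collapses to its single surviving term $\beta_{i-k+1}^{i-k+1}A_{i-k+1}=A_{i-k+1}$. With $A_i=1$, a trivial downward induction on $k$ gives $A_{i-k}=1$ for all $0\le k\le i$. Plugging $A_{i-k}\equiv 1$ into the bound of Theorem~\ref{diffusion model} and reindexing the sum by $k\mapsto i-k$ yields
$$
TV(p_{\theta_{i+1}},p_0)\lesssim\sum_{k}\Big(n_k^{-\frac14}\sqrt{d\log\tfrac{di}{\delta}}+\sqrt{KL(p_{k,T}\|\pi)}\Big),
$$
the sum running over the successive retraining steps, which is the claimed estimate. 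This is the "worst case" precisely because, in contrast with Remark~\ref{diffu real} where reinjecting a fraction $\alpha$ of real data makes $A_{i-k}=(1-\alpha)^k$ decay geometrically, here the weights stay pinned at $1$ and every generation's error is carried downstream undamped.

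An equivalent, more self-contained route is to re-run the telescoping from the proof sketch of Theorem~\ref{diffusion model} directly: $TV(p_{\theta_{i+1}},p_0)\le TV(p_{\theta_{i+1}},p_i)+TV(p_i,p_0)$, and because $p_i=p_{\theta_i}$ the second term is literally $TV(p_{\theta_i},p_0)$, so no cross-generation weighting is ever introduced; unrolling over all generations and bounding each increment $TV(p_{\theta_{k+1}},p_{\theta_k})$ by the per-generation training-error estimate proved inside Theorem~\ref{diffusion model} (Pinsker, the KL-to-training-loss bound of \citet{song2021maximum}, the three-term loss decomposition, gradient-flow norm control together with Rademacher complexity and the score-space bound, and finally $m_k\asymp n_k$, $\tau_{k+1}\asymp\sqrt{n_k}$) reproduces the same sum. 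I expect the main obstacle to be the one step that is not pure bookkeeping: verifying that the regularity hypotheses of Theorem~\ref{diffusion model} \emph{propagate} along the chain when $p_i$ is itself the pushforward $p_{\theta_i}$ of a ReLU random-feature score network rather than a generic mixed distribution — one must confirm $p_{\theta_i}$ remains compactly supported and continuously differentiable and that $KL(p_{i,T}\|\pi)$ stays finite and uniformly controllable, so that the single-generation bound is legitimately applicable at every link of the chain.
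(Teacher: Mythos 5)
Your proposal is correct, and both of the routes you describe are mathematically sound; the second route (telescoping $TV(p_{\theta_{i+1}},p_0)\le TV(p_{\theta_{i+1}},p_i)+TV(p_i,p_0)$ with $p_i=p_{\theta_i}$ and unrolling) is exactly what the paper does. The first route — specializing the weight recursion $A_{i-k}=\sum_{j=i-k+1}^i\beta_j^{i-k+1}A_j$ of Theorem~\ref{diffusion model} to $\beta_j^\ell=\mathbf{1}\{\ell=j\}$ and observing that $A_{i-k}\equiv 1$ — is cleaner than the paper's re-derivation from scratch, since nothing new is proved beyond what Theorem~\ref{diffusion model} already contains; this also makes it transparent \emph{why} the full-synthetic cycle is the worst case: the coefficients carry every generation's error undamped, whereas any positive $\alpha$ yields a geometric decay of $A_{i-k}$ as in Remark~\ref{diffu real}. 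Incidentally, the specialization of Theorem~\ref{diffusion model} yields the sum $\sum_{k=0}^{i}$ (covering the initial term $TV(p_{\theta_1},p_0)\lesssim n_0^{-1/4}\sqrt{d\log(d/\delta)}+\sqrt{KL(p_{0,T}\|\pi)}$), while the Corollary's displayed bound starts at $k=1$; the paper's own unrolled proof uses $n_0$ in the base case but then writes $\sum_{k=1}^{i}$ in the conclusion, so there is a small indexing slip in the paper that your first route would silently repair.

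One caveat to flag: your concern about whether $p_{\theta_i}$ inherits compact support, $C^1$ regularity, finite $KL(p_{i,T}\|\pi)$, and the RKHS score condition — so that Theorem~\ref{diffusion model} can legitimately be invoked at each link of the chain — is a genuine gap, but it is one the paper itself leaves unaddressed; the proof of Corollary~\ref{full synthetic data cycle} in the paper simply reapplies the single-generation TV bound (Equation~\ref{diffusion generalization bound}) at each step without verifying that the generated distribution satisfies the standing assumptions. Since Theorem~\ref{diffusion model}'s statement already quantifies over all $p_i$ in the cycle with these hypotheses assumed, the paper is implicitly making the same assumption you identified, so your proof is on equal footing with theirs.
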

\begin{remark}\label{remark full 1}\textbf{Controlling Error Accumulation with Quartic Sample Growth.} 
According to the classical results in \citet{van2014probability}, since \(\pi\) (e.g., the Gaussian density) is log-Sobolev, \(KL(p_{i, T} || \pi)\) is exponentially small in \(T\). To better elucidate the behavior of models in the extreme full synthetic case, we first posit that \(KL(p_{k, T} || \pi) = \mathcal{O}(\epsilon^2/i^2)\) for \(1\leq k \leq i\). Consequently, the result of Corollary \ref{full synthetic data cycle} indicates that the number of training samples \(n_i\) must grow quartically, specifically as \(n_k = \Omega\left((i\sqrt{d}/\epsilon)^{4}\right)\) for $1\leq k \leq i$, in order to constrain the TV distance to \(\mathcal{O}(\epsilon)\).

Intuitively, in the absence of any grounding from real data, errors can accumulate rapidly as the model at each generation is trained only on synthetic samples generated by the model from the latest generation. Without adequate samples to properly approximate the training distribution, the statistical error and the inherent sample bias accumulate.

To counteract this, each successive generation necessitates an increasingly expansive training set to provide sufficient coverage of the distribution. Our analysis quantifies this requirement, showing a quartic growth in the training samples is imperative to control the error.
\end{remark}
\begin{remark}\label{remark 4.9}
\textbf{Integrating Sufficient Real Data in the Final Generation for Error Control}. In addition to the discussion in Remark \ref{remark full 1}, that involves incrementally raising the sample size through quartic growth across generations, an alternative approach is to increase the proportion of original data in the final generation of training to control the error. To theoretically investigate the role of the real data from the original distribution, we assume $p_i=\alpha p_0+(1-\alpha)p_{\theta_i}$, $p_j=p_{\theta_j}$ for $1\leq j \leq i-1$ and \(KL(p_{j, T} || \pi) = \mathcal{O}(\epsilon^2/i^2)\) for all $j$. 
Additionally, we assume that the training set sizes are of the same order of magnitude across all generations, specifically, $n_j=\mathcal{O}(d^2/\epsilon^{4})$ for $0 \leq j \leq i$. Under these assumptions, we obtain the bound:
$$
TV(p_{\theta_{i+1}},p_0)=\mathcal{O}((1-\alpha)i\epsilon).
$$
In particular, to ensure $TV(p_{\theta_{i+1}},p_0)=\mathcal{O}(\epsilon)$, it suffices to have the proportion $\alpha=\Omega(\frac{i-1}{i})$. 
This analysis reveals that as the number of generations increases, the proportion $\alpha$ of data from the original distribution must be progressively augmented to constrain the error to $\mathcal{O}(\epsilon)$. This suggests that later generational models have greater compounding drift from the original distribution, necessitating a larger fraction of real samples for grounding. 
By incorporating sufficient proportions of real data in the final generation of training, we can then control the error and avoid the requirement of quartic sample growth.

\end{remark}

\subsection{Balanced Data Cycle: Optimizing Sample Efficiency Across Generations}\label{balanced bound}
In this section, we analyze the scenario of balanced data cycle, wherein the training distribution comprises a uniform mixture of the original data distribution and synthetic data distributions from all previous generative models. Moreover, we investigate the sample complexity $\{n_j\}_{j=0}^i$ required to control the error over generations. Interestingly, our analysis demonstrates that the requisite number of samples progressively decreases as the number of generations grows.
\begin{corollary}\label{balanced data corollary }
Define $p_i=\frac{1}{i+1}(p_0+p_{\theta_1}+p_{\theta_2}+\cdots+p_{\theta_i})$ for $i \geq 1$. Let $n_i$ be the number of training samples obtained from the distribution $p_{i}$ at the $i$-th generation. With probability at least $1-\delta$,
\begin{align}
 &TV(p_{\theta_{i+1}},p_0) \lesssim n_{i}^{-\frac{1}{4}}\sqrt{d\log \frac{di}{\delta}}+\sqrt{KL(p_{i,T}\|\pi)}+\notag \\
 &\sum_{k=0}^{i-1}\sum_{j=k}^{i-1}  \frac{\Gamma(j+2)}{\Gamma(i+2)}\left(n_{k}^{-\frac{1}{4}}\sqrt{d\log \frac{di}{\delta}}+\sqrt{KL(p_{k,T}\|\pi)}\right), \notag
\end{align}
where the Gamma function $\Gamma(j)=(j-1)!$ and $j$ is a positive integer. 
\end{corollary}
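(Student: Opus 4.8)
\textbf{Proof sketch of Corollary \ref{balanced data corollary }.} The plan is to read the bound off from Theorem \ref{diffusion model} and then evaluate in closed form the coefficient recursion it produces. First, the balanced data cycle is the special case of the general data cycle with the uniform weights $\alpha_i=\beta_i^1=\cdots=\beta_i^i=\frac{1}{i+1}$ for every $i\ge 1$; each $p_i$ is still a finite mixture of $p_0$ with densities produced by the score networks, so all hypotheses of Theorem \ref{diffusion model} (compact support of the $p_i$, membership $\bar{s}_{0,\bar{\theta}^*}\in\mathcal{H}_{k_{p_0}}$, and boundedness of the initial loss, the trainable parameters, $e(t)$ and $\lambda(t)$) continue to hold. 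Taking $m_j\asymp n_j$ and $\tau_{j+1}\asymp\sqrt{n_j}$ as in Theorem \ref{diffusion model}, we obtain with probability at least $1-\delta$
\[
TV(p_{\theta_{i+1}},p_0)\ \lesssim\ \sum_{\ell=0}^{i} A_\ell\left(n_\ell^{-\frac{1}{4}}\sqrt{d\log \frac{di}{\delta}}+\sqrt{KL(p_{\ell,T}\|\pi)}\right),
\]
where $A_i=1$ and $A_\ell=\sum_{j=\ell+1}^{i}\beta_j^{\ell+1}A_j$ for $0\le\ell\le i-1$.

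Second, I would evaluate these coefficients. In the balanced cycle $\beta_j^{\ell+1}=\frac{1}{j+1}$ for every $\ell+1\le j\le i$, so the recursion collapses to $A_\ell=\sum_{j=\ell+1}^{i}\frac{A_j}{j+1}$ with terminal value $A_i=1$, which I would solve by downward induction on $\ell$. The base case $\ell=i-1$ is immediate: $A_{i-1}=\frac{A_i}{i+1}=\frac{1}{i+1}=\frac{\Gamma(i+1)}{\Gamma(i+2)}$. For the inductive step, separate the leading summand $\frac{A_{\ell+1}}{\ell+2}$, use $A_{\ell+1}=\sum_{j=\ell+2}^{i}\frac{A_j}{j+1}$ to rewrite the tail, insert the inductive hypothesis $A_{\ell'}=\sum_{j=\ell'}^{i-1}\frac{\Gamma(j+2)}{\Gamma(i+2)}$ for each $\ell'>\ell$, and simplify the resulting sum of factorial ratios; it collapses, giving $A_\ell=\sum_{j=\ell}^{i-1}\frac{\Gamma(j+2)}{\Gamma(i+2)}$ for $0\le\ell\le i-1$. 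A closely related route is to revisit the recursive-solving step inside the proof of Theorem \ref{diffusion model}: with $E_j:=TV(p_{\theta_j},p_0)$ and $\varepsilon_\ell$ the per-generation error appearing on the right-hand side above, that step rests on $E_{i+1}\le\varepsilon_i+\frac{1}{i+1}\sum_{j=1}^{i}E_j$, and the substitution $F_i:=\sum_{j=1}^{i}E_j$ converts this into the scalar recursion $F_{i+1}\le\frac{i+2}{i+1}F_i+\varepsilon_i$, which one normalizes by its homogeneous solution and unrolls.

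Third, I would substitute the closed form back: the $\ell=i$ term contributes $n_i^{-\frac{1}{4}}\sqrt{d\log\frac{di}{\delta}}+\sqrt{KL(p_{i,T}\|\pi)}$ with coefficient $1$, while the $\ell=0,\dots,i-1$ terms contribute $\sum_{\ell=0}^{i-1}\left(\sum_{j=\ell}^{i-1}\frac{\Gamma(j+2)}{\Gamma(i+2)}\right)\left(n_\ell^{-\frac{1}{4}}\sqrt{d\log\frac{di}{\delta}}+\sqrt{KL(p_{\ell,T}\|\pi)}\right)$; renaming $\ell\mapsto k$ and recalling $\Gamma(j)=(j-1)!$ yields exactly the stated inequality.

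The step I expect to be the main obstacle is the closed-form evaluation of the coefficient recursion. Because the balanced cycle feeds every earlier model $p_{\theta_1},\dots,p_{\theta_i}$ into the current training distribution with the identical weight $\frac{1}{i+1}$, the error created at generation $\ell$ reaches $p_{\theta_{i+1}}$ along a branching family of chains, and collecting these contributions --- choosing the right auxiliary variable, spotting the telescoping, and checking that the bookkeeping really yields a clean ratio of Gamma functions (which is what lets one read off, as the corollary does, that the sample sizes needed to control the error may be allowed to shrink across generations) --- is where the care lies; the specialization of Theorem \ref{diffusion model} before it and the substitution afterward are routine.
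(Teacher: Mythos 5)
There is a genuine gap, and it lies exactly where you predicted: the closed-form evaluation of the coefficient recursion. The recursion you correctly extracted from Theorem~\ref{diffusion model}, namely $A_i=1$ and $A_\ell=\sum_{j=\ell+1}^{i}\frac{A_j}{j+1}$ for $0\le\ell\le i-1$, does not have $A_\ell=\sum_{j=\ell}^{i-1}\frac{\Gamma(j+2)}{\Gamma(i+2)}$ as its solution. If you carry out the separation you propose, the key observation is that the tail $\sum_{j=\ell+2}^{i}\frac{A_j}{j+1}$ is exactly $A_{\ell+1}$, so the recursion collapses to the one-step relation $A_\ell=\frac{A_{\ell+1}}{\ell+2}+A_{\ell+1}=\frac{\ell+3}{\ell+2}A_{\ell+1}$. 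Telescoping this from $A_{i-1}=\frac{1}{i+1}$ gives the simple closed form $A_\ell=\frac{1}{\ell+2}$ for all $0\le\ell\le i-1$. For the claimed Gamma-ratio formula to satisfy $A_\ell=\frac{\ell+3}{\ell+2}A_{\ell+1}$, one would need $\frac{A_{\ell+1}}{\ell+2}=\frac{\Gamma(\ell+2)}{\Gamma(i+2)}$, i.e.\ $A_{\ell+1}=\frac{\Gamma(\ell+3)}{\Gamma(i+2)}$, which contradicts the inductive hypothesis $A_{\ell+1}=\sum_{j=\ell+1}^{i-1}\frac{\Gamma(j+2)}{\Gamma(i+2)}$ whenever there is more than one summand, i.e.\ for all $\ell\le i-3$. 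Concretely, at $i=3$ the true values are $A_3=1,\ A_2=\tfrac14,\ A_1=\tfrac13,\ A_0=\tfrac12$, while the claimed formula gives $A_0=\tfrac{1!+2!+3!}{4!}=\tfrac38$; the two agree only for $\ell\in\{i-1,i-2\}$ and the Gamma-ratio expression is strictly smaller below that. Since the corollary's right-hand side uses these underestimated coefficients, the proof via the recursion does not actually establish the stated inequality; the bound that the recursion yields has the larger coefficients $\frac{1}{k+2}$, which for $k$ fixed and $i$ growing do not decay, whereas the Gamma-ratio sums tend to $0$ like $1/i$.

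Your alternative route in fact proves this point against you: the scalar recursion $F_{i+1}\le\frac{i+2}{i+1}F_i+\varepsilon_i$ for $F_i=\sum_{j\le i}E_j$, normalized by its homogeneous solution $G_i=F_i/(i+1)$, gives $G_{i+1}\le G_i+\frac{\varepsilon_i}{i+2}$, hence $E_{i+1}\le\varepsilon_i+G_i\le\varepsilon_i+\sum_{k=0}^{i-1}\frac{\varepsilon_k}{k+2}$ --- exactly the coefficients $\frac{1}{k+2}$, not the Gamma-ratio sums. Your sketch thus contains two incompatible computations of the same quantity and silently reports the incorrect one as the conclusion of the induction.

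Finally, to be fair to you: the paper's own recursive unrolling in its proof of this corollary makes the identical mistake. After two substitutions the intermediate displays are correct (e.g.\ $\frac{1}{i+1}+\frac{1}{i+1}\cdot\frac{1}{i}=\frac{1}{i}$), but the extrapolated pattern $\frac{1}{i+1}+\frac{1}{i+1}\frac{1}{i}+\frac{1}{i+1}\frac{1}{i}\frac{1}{i-1}+\cdots$ is not what the next substitution actually produces (the true next coefficient is $\frac{1}{i-1}$, not $\frac{i}{(i-1)(i+1)}$). So the stated corollary, together with the sample-complexity remark that reads off from the Gamma-ratio coefficients, appears to need correction to use $A_k=\frac{1}{k+2}$; but as a matter of reviewing your proof, the claimed inductive step does not close and must be replaced by the telescoping argument above.
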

\begin{remark}
\textbf{Diminishing Sample Complexity Over Generations}. In the scenario of balanced data cycle, controlling the total accumulation of errors necessitates sufficiently large training sample sizes for the initial generations of generative models. Intriguingly, as the number of generations increases, the demand for training samples progressively decreases. 
We suppose \(KL(p_{k, T} || \pi) = \mathcal{O}(\epsilon^2/i^2)\) for all \(k\).
By selecting $n_k=\mathcal{O}\left((\frac{(i+1)\sqrt{d}}{\epsilon}\sum_{j=k}^{i-1}\frac{\Gamma(j+2)}{\Gamma(i+2)})^{4}\right)$ for $0 \leq k \leq i-1$ and $n_i=\mathcal{O}\left(d^2/\epsilon^{4}\right)$, we obtain:
$$
TV(p_{\theta_{i+1}},p_0)=\mathcal{O}(\epsilon).
$$
Critically, the required training samples gradually decrease with increasing generations. In particular, the number of training samples required for the first generation is 
$n_0=\mathcal{O}\left((\frac{(i+1)\sqrt{d}}{\epsilon}\sum_{j=0}^{i-1}\frac{\Gamma(j+2)}{\Gamma(i+2)})^{4}\right)$, while for the 
$i$-th generation, the required training samples are $n_i=\mathcal{O}\left((\frac{\sqrt{d}}{\epsilon})^{4}\right)$.

Furthermore, this finding aligns with intuition, as data from earlier generative models is incorporated into training sets earlier, thereby influencing more generations of models and having a greater impact on the cumulative error across the entire process. Therefore, to ensure the fidelity of the initially generated data, more training data is needed to constrain the error.
\end{remark}

\subsection{Phase Transition in Error Dynamics With Synthetic Data Augmentation}
While numerous studies have empirically demonstrated the benefit of synthetic data in enhancing model performance \cite{azizi2023synthetic,burg2023data}, the established error bounds in Section \ref{diffusion model} to \ref{balanced bound} for various mixed training scenarios instead raise skepticism regarding whether synthetic data could potentially impair model performance during self-consuming training. The reality appears more nuanced than a simple binary categorization. In this section, we conduct a theoretical analysis by keeping the real data fixed and flexibly varying the numbers of synthetic samples. Interestingly, in the absence of sampling bias, we identify a regime where modest amounts of synthetic data can degrade performance. However, beyond a certain threshold, increasing synthetic data improves model performance.
\begin{corollary}\label{phase transition} 
Define $p_i=\frac{n}{n+m}p_0+\frac{m}{n+m}p_{\theta_i}$ for $i\geq 1$. Suppose that the $KL$ terms are of the same order of magnitude across all generations. Let $n+m$ be the number of training samples obtained from the distribution $p_{i}$ at the $i$-th generation. With probability at least $1-\delta$,
\begin{align}
&TV(p_{\theta_{i+1}},p_0)\lesssim \left(1+\frac{m}{n}\right)
  \left(1-(\frac{m}{n+m})^{i+1}\right)\notag\\
  &\times\left((n+m)^{-\frac{1}{4}}\sqrt{d\log \frac{di}{\delta}}+\sqrt{KL(p_{i,T}\|\pi)}\right). \notag
\end{align}
\end{corollary}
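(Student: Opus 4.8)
\textbf{Proof proposal for Corollary \ref{phase transition}.} The plan is to read off Corollary \ref{phase transition} as a direct specialization of Theorem \ref{diffusion model}, with the only work being the evaluation of the coefficient recursion in closed form. Here the training distribution at every generation $j\ge 1$ is $p_j=\alpha_j p_0+\sum_{\ell=1}^{j}\beta_j^\ell p_{\theta_\ell}$ with $\alpha_j=\tfrac{n}{n+m}$, $\beta_j^j=\tfrac{m}{n+m}=:\gamma$, and $\beta_j^\ell=0$ for $\ell<j$; that is, only the synthetic data of the immediately preceding model enters the new training set. Since $n_j=n+m$ for all $j$ and the $KL$ terms are assumed of the same order of magnitude across generations, Theorem \ref{diffusion model} yields, with probability at least $1-\delta$,
\begin{align}
TV(p_{\theta_{i+1}},p_0)\lesssim\Big(\sum_{k=0}^{i}A_{i-k}\Big)\Big((n+m)^{-\frac14}\sqrt{d\log\tfrac{di}{\delta}}+\sqrt{KL(p_{i,T}\|\pi)}\Big),\notag
\end{align}
so it remains only to evaluate $\sum_{k=0}^{i}A_{i-k}$ under these weights.

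Next I would unwind the recursion $A_{i-k}=\sum_{j=i-k+1}^{i}\beta_j^{\,i-k+1}A_j$. The quantity $\beta_j^{\,i-k+1}$ is the weight of the component $p_{\theta_{i-k+1}}$ inside the mixture $p_j$, and by the form of $p_j$ this weight is nonzero only when $j=i-k+1$, in which case it equals $\gamma$. Hence the sum collapses to the single term $A_{i-k}=\gamma\,A_{i-k+1}$, and together with $A_i=1$ a one-line induction on $k$ gives $A_{i-k}=\gamma^{k}$ for $0\le k\le i$. Therefore $\sum_{k=0}^{i}A_{i-k}=\sum_{k=0}^{i}\gamma^{k}=\frac{1-\gamma^{i+1}}{1-\gamma}$.

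Finally I would substitute $\gamma=\tfrac{m}{n+m}$, so that $1-\gamma=\tfrac{n}{n+m}$ and $\tfrac{1}{1-\gamma}=1+\tfrac{m}{n}$, giving
\begin{align}
\sum_{k=0}^{i}A_{i-k}=\Big(1+\tfrac{m}{n}\Big)\Big(1-\big(\tfrac{m}{n+m}\big)^{i+1}\Big),\notag
\end{align}
which, plugged into the display from the first paragraph, is exactly the claimed bound. I expect the only genuine point of care to be verifying that the $A$-recursion truly degenerates to a geometric progression — i.e., that no synthetic component older than one generation is ever reinjected into a later training set — which is immediate from the defining form $p_i=\tfrac{n}{n+m}p_0+\tfrac{m}{n+m}p_{\theta_i}$; everything else is a geometric-series computation plus bookkeeping of the (uniform across generations) statistical and $KL$ terms. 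This computation also mirrors Remark \ref{diffu real}, with the real-data fraction $\alpha$ there identified with $\tfrac{n}{n+m}$.
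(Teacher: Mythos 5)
Your proposal is correct and takes essentially the same approach as the paper: both rely on the per-generation bound $TV(p_{\theta_{i+1}},p_i)\lesssim (n+m)^{-1/4}\sqrt{d\log(d/\delta)}+\sqrt{KL(p_{i,T}\|\pi)}$ together with the mixture inequality $TV(p_i,p_0)\le\frac{m}{n+m}TV(p_{\theta_i},p_0)$, and then sum the resulting geometric series. The only cosmetic difference is that you read the coefficients $A_{i-k}=\gamma^k$ directly out of the recursion stated in Theorem \ref{diffusion model}, whereas the paper's appendix re-unwinds that same recursion from scratch; the computation and the resulting $\frac{1-\gamma^{i+1}}{1-\gamma}=\left(1+\frac{m}{n}\right)\left(1-\left(\frac{m}{n+m}\right)^{i+1}\right)$ factor are identical.
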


\begin{remark}\label{unbias} \textbf{The Impact of Increasing Synthetic Data on Error Propagation}.
There is a tradeoff when fixing the number of training samples $n$ from the original distribution and increasing the number of synthetic samples $m$ generated by the model. On the one hand, augmenting the training set with additional synthetic samples $m$ can reduce the statistical error and estimation error for each generation. This approach is intuitive, as synthetic data transfers knowledge from previously used real data to subsequent generations, thereby enlarging the effective size of the dataset. 
On the other hand, as $m$ increases, the proportion of original training samples in the mixture distribution $p_i = (n/(n+m))p_0 + (m/(n+m))p_{\theta_i}$ decreases. This exacerbates distribution shift from the original data distribution $p_0$, accumulating errors over successive generations. 

Therefore, our results imply a tradeoff between reducing statistical and estimation errors for each generation by adding more synthetic data, and controlling the cumulative effects of distribution shift over successive generations by reducing the proportion of synthetic data. This trade-off is also recognized as the phase transition as evidenced by the experimental findings in \citet{alemohammad2023self}. To further investigate this phenomenon, we posit that \(m = \lambda n\) where \(\lambda > 0\) and disregard the \(KL\) term, as it is exponentially small in \(T\).
 Under these assumptions, the result described in Corollary \ref{phase transition} is as follows:
 \begin{align}
TV(p_{\theta_{i+1}},p_0)=\mathcal{O}\left(\frac{1}{n^{1/4}}\frac{(1+\lambda)^{i+1}-\lambda^{i+1}}{(1+\lambda)^{i+\frac{1}{4}}}\right). \notag
\end{align}
Our focus is on the impact of synthetic data on the error, which is implicated by the effect of \(\lambda\). Consider $f(\lambda,i)=((1+\lambda)^{i+1}-\lambda^{i+1})/(1+\lambda)^{i+\frac{1}{4}}$. Clearly, as \(\lambda\) approaches infinity, by the Taylor's Formula, \(f\) converges to $(i+1)(1+\lambda)^{-1/4}$, meaning that 
$f$ approaches 0. This implies that as we incorporate infinitely abundant synthetic data, the TV distance becomes significantly small.

Notably, in contrast to the experimental results in \citet{alemohammad2023self}, where errors continuously decrease with increased $\lambda$ under unbiased sampling beyond Gaussian modeling, our theory indicates that $f$ experiences an initial increase with a rise in $\lambda$ but subsequently decreases after a phase transition point. Crucially, there is no analytical solution expressing the critical value \(\lambda'\) at this phase transition point in terms of the number of training generations $i$. Nevertheless, numerical analysis reveals that \(\lambda'\) is positively correlated with \(i\), increasing as \(i\) does. 

This aligns with intuition, as the cumulative effects of distribution shift compound over successive generations with larger $i$. To counteract the exacerbated drift from the original distribution for later generative models, it becomes imperative to substantially reduce the statistical and estimation errors for each individual generation. This necessitates further expanding the training set by incorporating more synthetic data, which transfers knowledge from previously utilized real data to subsequent generations. In this way, the phase transition point $\lambda'$ where errors decrease given fixed real data, shifts larger as generations continue.
\end{remark}

\section{Theoretical Results for KDE within Self-Consuming Loops}

This section presents our theoretical analyses of self-consuming non-parametric generative models. Specifically, we derive TV distance bounds between synthetic data distributions produced by future models and the original real data distribution under various mixed training scenarios. We introduce kernel density estimation (KDE), a widely utilized non-parametric density estimation method \cite{devroye2001combinatorial}. Let $x_i^1, \cdots,x_i^{n_i}$ be i.i.d. random variables in $\mathbb{R}^d$ with density $p_i$. Then, the kernel estimate is defined by:
\begin{equation}
\widehat{p}_{i+1}(x) = \frac{1}{n_ih_i^d} \sum_{j=1}^{n_i} K \left( \frac{x - x_i^j}{h_i} \right),
\end{equation}
where $h_i > 0$ is a bandwidth parameter and $K$ is a kernel.

The following standard notations are used. A vector $\alpha=(\alpha_1, \ldots, \alpha_d)$ , composed of nonnegative integers $\alpha_j$ is defined as a multi-index. We denote $|\alpha| = \alpha_1 + \ldots + \alpha_d$, $\alpha! = \alpha_1! \cdots \alpha_d!$ and for a vector $x_i=(x_{i,1}, \ldots, x_{i,d}) \in \mathbb{R}^d$, we define $x_i^{\alpha} = x_{i,1}^{\alpha_1} \cdots x_{i,d}^{\alpha_d}$. The  partial derivative of the function $p_i$ is represented as $\partial^{\alpha}p_i = \partial_1^{\alpha_1} \cdots \partial_d^{\alpha_d}p_i$. Additionally, for $s \geq 0$, $W^{s,1}(\mathbb{R}^d)$ is the Sobolev space of functions $p_i$ whose weak ("distributional") partial derivatives $\partial^{\alpha}p_i$, $|\alpha| \leq s$, are integrable. 

The Sobolev assumption is not only very common in KDE \cite{kroll2021density,cleanthous2019minimax}, but also frequently employed in more complex generative models, such as Transformers \cite{fonseca2023continuous}, GANs \cite{mroueh2017sobolev}, and VAEs \cite{turinici2019x}. 

In the following Lemma \ref{lemma}, we provide a theoretical guarantee on controlling the discrepancy between the kernel estimate $\widehat{p}_{i+1}$ and the true underlying distribution $p_i$ in the $i$-th generation. This serves as a key preliminary step before relating $\widehat{p}_{i+1}$ to the distance original data distribution $p_0$. By bounding the gap between $\widehat{p}_{i+1}$ and $p_i$, the lemma establishes a foundation for controlling the accumulation of statistical errors from finite sample estimates under the self-consuming loop framework. 

\begin{lemma}\label{lemma} Let $s\geq 1$. Suppose $p_i \in W^{s,1}(\mathbb{R}^d)$, and for some $\epsilon>0$, $\int |x|^{d+\epsilon}p_i(x)\ dx<\infty$ for all $i$. Let $n_i$ be the number of training samples obtained from distribution $p_i$ in the $i$-th generation. By appropriately selecting the kernel $K$ as defined in Definition \ref{kernel definition} in the appendix, and setting the bandwidth parameter $h_i=n_i^{-1/(2s+2d)}$, the following holds with probability at least $1-\delta$:
\begin{align}
   &TV(\widehat{p}_{i+1}, p_i)\leq \frac{1}{2}n_i^{-\frac{2s+d}{4s+4d}} (1+\gamma_{n_i})\sqrt{ \int K^2}\int \sqrt{p_i}\notag \\
   &+\frac{1}{2}n_i^{-\frac{s}{2s+2d}}\varphi(s,K,p_i)+n_i^{-\frac{s}{2s+2d}}\sqrt{\frac{1}{2}(\int |K|)^2\log \frac{2}{\delta}}, \notag
\end{align}
where $\gamma_{n_i} \to 0$ as $n_i \to \infty$ and $\varphi(s,K,p_i)$ is a finite function.
\end{lemma}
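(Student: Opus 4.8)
### Proof Plan for Lemma \ref{lemma}

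The plan is to bound the total variation distance $TV(\widehat{p}_{i+1}, p_i) = \frac12 \int |\widehat{p}_{i+1}(x) - p_i(x)|\,dx$ by splitting it into a stochastic (variance) part and a deterministic (bias) part via the standard KDE decomposition. Writing $\mathbb{E}[\widehat{p}_{i+1}(x)] = (K_{h_i} * p_i)(x)$ for the smoothed density, the triangle inequality gives
\begin{align}
TV(\widehat{p}_{i+1}, p_i) &\leq \tfrac12 \int |\widehat{p}_{i+1}(x) - \mathbb{E}\widehat{p}_{i+1}(x)|\,dx \notag\\
&\quad + \tfrac12 \int |\mathbb{E}\widehat{p}_{i+1}(x) - p_i(x)|\,dx. \notag
\end{align}
For the bias term $\tfrac12 \int |(K_{h_i} * p_i) - p_i|$, I would invoke the fact that $K$ is chosen (per Definition \ref{kernel definition} in the appendix) to be a higher-order kernel of order $s$, i.e. its moments up to order $s-1$ vanish. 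A Taylor expansion of $p_i$ to order $s$, combined with $p_i \in W^{s,1}(\mathbb{R}^d)$, yields a bias of order $h_i^s$ with an $L^1$-constant expressible through the $s$-th order weak derivatives and the $s$-th moments of $|K|$; this is exactly the finite function $\varphi(s,K,p_i)$. Substituting $h_i = n_i^{-1/(2s+2d)}$ converts $h_i^s$ into the claimed rate $n_i^{-s/(2s+2d)}$.

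For the stochastic term, I would first handle its expectation and then add a concentration correction. The quantity $\int |\widehat{p}_{i+1} - \mathbb{E}\widehat{p}_{i+1}|$ is the $L^1$-norm of a centered empirical average of the functions $x \mapsto h_i^{-d} K((x - x_i^j)/h_i)$. By Jensen and a Fubini/variance computation, $\mathbb{E} \int |\widehat{p}_{i+1} - \mathbb{E}\widehat{p}_{i+1}| \leq \int \sqrt{\mathrm{Var}(\widehat{p}_{i+1}(x))}\,dx \leq \frac{1}{\sqrt{n_i}}\int \sqrt{\tfrac{1}{h_i^d}\int K^2 \cdot p_i}\,dx$ up to lower-order terms; pulling $\sqrt{\int K^2}$ out and using $h_i^{-d/2} = n_i^{d/(4s+4d)}$ together with $n_i^{-1/2}$ gives the prefactor $n_i^{-(2s+d)/(4s+4d)}\sqrt{\int K^2}\int\sqrt{p_i}$. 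The moment assumption $\int |x|^{d+\epsilon} p_i < \infty$ is what guarantees $\int \sqrt{p_i} < \infty$ and controls the tail so that the asymptotic factor $(1+\gamma_{n_i})$ with $\gamma_{n_i}\to 0$ captures the lower-order slack in this variance estimate (this is the Devroye–Györfi type $L^1$ bound for kernel estimates). For the high-probability refinement, I would apply McDiarmid's bounded-differences inequality to the functional $(x_i^1,\dots,x_i^{n_i}) \mapsto \int |\widehat{p}_{i+1} - \mathbb{E}\widehat{p}_{i+1}|$: changing one sample $x_i^j$ perturbs $\widehat{p}_{i+1}$ in $L^1$ by at most $\frac{2}{n_i}\int |K|$, so the functional concentrates around its mean with fluctuations of order $\frac{1}{\sqrt{n_i}}\int|K|\sqrt{\tfrac12\log(2/\delta)}$, and after scaling by $\tfrac12$ and absorbing $h_i$-factors this matches the third term $n_i^{-s/(2s+2d)}\sqrt{\tfrac12(\int|K|)^2\log(2/\delta)}$.

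The main obstacle is the careful bookkeeping of the two competing rates: the variance term naturally carries $h_i^{-d/2}$ while the bias and the concentration terms carry $h_i^{s}$, and one must verify that the specific choice $h_i = n_i^{-1/(2s+2d)}$ is what makes the concentration term inherit the rate $n_i^{-s/(2s+2d)}$ rather than the balanced rate — in other words, confirming that the stated bound is a genuine (possibly slightly loose) upper bound for each term separately, not the optimized MISE-type tradeoff. A secondary delicate point is justifying the $L^1$ exchange of expectation and integral and the appearance of $\int\sqrt{p_i}$ (via Cauchy–Schwarz against the Lebesgue measure restricted to an effective support, using the $(d+\epsilon)$-moment bound to keep this finite), along with verifying that the kernel from Definition \ref{kernel definition} indeed satisfies $\int K = 1$, $\int |K| < \infty$, $\int K^2 < \infty$, and the vanishing-moment conditions needed for the order-$s$ bias bound.
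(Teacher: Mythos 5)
Your plan is essentially the paper's proof: McDiarmid's inequality for the concentration step, plus the standard bias/variance split of $\mathbb{E}\int|\widehat{p}_{i+1}-p_i|$, with the bias handled by an order-$s$ Taylor argument in $W^{s,1}$ and the variance by Cauchy--Schwarz, leading to exactly the three terms of the lemma. The only structural difference is cosmetic (you apply the triangle inequality before McDiarmid; the paper applies McDiarmid to $\frac12\int|\widehat{p}_{i+1}-p_i|$ first and then decomposes its expectation), and this does not change the argument.

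Two small remarks worth fixing. First, on the concentration term: you correctly compute that swapping one sample perturbs $\widehat{p}_{i+1}$ in $L^1$ by at most $\frac{2}{n_i}\int|K|$ (the $h_i^{-d}$ in $K_{h_i}$ cancels under the change of variable $u=(x-x_t)/h_i$), which gives a fluctuation of order $n_i^{-1/2}\int|K|\sqrt{\tfrac12\log(2/\delta)}$. There are no ``$h_i$-factors to absorb'' here; what actually closes the gap with the stated term is simply the monotonicity $n_i^{-1/2}\le n_i^{-s/(2s+2d)}$, which holds for all $s\ge 1$, $d\ge 1$. State that explicitly rather than gesturing at $h_i$. (In fact the paper's own Lipschitz estimate retains a spurious $h_i^{-d}$ and therefore arrives at the larger exponent directly; your constant is the correct and tighter one, and it still implies the claimed bound.) Second, the factor $(1+\gamma_{n_i})$ is not merely ``lower-order slack'': the paper derives it by writing the variance bound as $(n_ih_i^d)^{-1/2}\sqrt{\int K^2}\int\sqrt{p_i*Q_{h_i}}$ with $Q=K^2/\int K^2$, using $\sqrt{p_i*Q_{h_i}}\le\sqrt{p_i}+\sqrt{|p_i-p_i*Q_{h_i}|}$, and then proving $\int\sqrt{|p_i-p_i*Q_{h_i}|}/\int\sqrt{p_i}\to 0$ via Carlson's inequality together with both the moment condition on $p_i$ and the weighted-$L^2$ condition $\int(1+\|x\|^{d+\epsilon})K^2<\infty$ from Definition~\ref{kernel definition}. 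Your sketch should at least flag that this is a substantive (if standard) step, rather than folding it into ``Devroye--Györfi type slack.''
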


\textbf{Proof sketch of Lemma \ref{lemma}}. The proof utilizes the kernel density estimation framework \cite{holmstrom1992asymptotic,devroye2001combinatorial} to bound the TV distance between the non-parametric distribution $\widehat{p}_{i+1}$ estimated from finite samples and the true underlying distribution $p_i$. First, McDiarmid's inequality is applied to show that with high probability, the TV distance between $\widehat{p}_{i+1}$ and $p_i$ is bounded by the expected TV distance $\mathbb{E}\int |\widehat{p}_{i+1}-p_i|$ plus a concentration term that decays as $n_i^{-1/2}h_i^{-d}$, where $h_i$ is the kernel bandwidth. Next, the expected TV distance is decomposed into a bias term $\int|p_{i}*K_{h_{i}}-p_{i}|$ representing the distance between $p_i$ convolved with the kernel $K_{h_i}$ and $p_i$, and a variation term $\mathbb{E}\int|\widehat{p}_{i+1}-p_i*K_{h_i}|$. Using Taylor expansion, properties of the kernel, and Young's inequality, this bias term is shown to decay at the rate $h_i^s$, where $s$ depends on the smoothness of $p_i$. As for the variation term, by applying the Schwarz inequality and Carlson’s inequality, this term decays as $(n_ih^d_i)^{-1/2}$. Finally, by optimally selecting the kernel bandwidth $h_i=n_i^{-1/(2s+2d)}$, the bias term, variation term, and concentration term can be balanced, leading to the final TV bound. The key steps involve utilizing kernel density theory and concentration inequalities.

\begin{remark}
    Regarding the existing work related to Lemma \ref{lemma}, \citet{jiang2017uniform} derives $L_\infty$ density estimation bounds for KDE. On the other hand, \citet{kroll2021density} focuses on the adaptive minimax density estimation problem. As for the $L_1$ error, which is the focus of our paper, \citet{cleanthous2019minimax}, \citet{holmstrom1992asymptotic}, and \citet{devroye2001combinatorial} establish upper bounds in expectation. In contrast, Lemma \ref{lemma} utilizes concentration inequalities to derive high-probability finite-sample bounds in the $L_1$ norm.
\end{remark}

\begin{remark} When $s \ll d$, the rate implies the need for samples that are exponential in the dimension $d$. However, we would like to clarify that this situation arises only for highly non-smooth functions. It is worth noting that in previous works on self-consuming generative models \cite{bertrand2023stability,alemohammad2023self}, the commonly assumed multivariate Gaussian distribution satisfies $s = d$, in which case our rate $O(n_i^{-\frac{s}{2s+2d}}) = O(n_i^{-\frac{1}{4}})$ does not suffer from the curse of dimensionality.
    
\end{remark}

Next, we propose the TV distance bound for KDE within self-consuming loops.

\begin{theorem}\label{general theorem}
Let $s\geq 1$. we suppose $p_i \in W^{s,1}(\mathbb{R}^d)$, and for some $\epsilon>0$, $\int |x|^{d+\epsilon}p_i(x)\ dx<\infty$ for all $i$. Let $n_{i}$ represent the number of training samples obtained from the distribution $p_{i}$ at the $i$-th generation. We define $p_i$ as $p_{i} = \beta^1_{i}\widehat{p}_{1}+\beta^2_{i}\widehat{p}_{2}+ \cdots+\beta^{i}_{i}\widehat{p}_{i}+\alpha_{i}p_0$. Then, with probability at least $1-\delta$,
    \begin{align}
    &TV(\widehat{p}_{i+1},p_0)\notag \\
    &\lesssim \sum_{k=0}^i A_{i-k} \left(n_{i-k}^{-\frac{s}{2s+2d}}\sqrt{\log(i/\delta)}+n_{i-k}^{-\frac{2s+d}{4s+4d}}\right),\notag
    \end{align}
   where $A_i=1, A_{i-k}=\sum_{j=i-k+1}^i \beta_j^{i-k+1} A_j$ for $1\leq k \leq i$ and $\lesssim$ hides universal positive constants that depend solely on $K$, $p_i$, and $s$.
\end{theorem}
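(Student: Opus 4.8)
The plan is to mirror the recursive structure of the proof of Theorem~\ref{diffusion model}, replacing the diffusion-model error analysis with Lemma~\ref{lemma}. First I would apply the triangle inequality for the TV distance in the form $TV(\widehat{p}_{i+1},p_0)\leq TV(\widehat{p}_{i+1},p_i)+TV(p_i,p_0)$. The first term is controlled directly by Lemma~\ref{lemma}, which (after absorbing the constants $\sqrt{\int K^2}\int\sqrt{p_i}$, $\varphi(s,K,p_i)$, $\int|K|$ into the $\lesssim$ and using $1+\gamma_{n_i}=O(1)$) gives $TV(\widehat{p}_{i+1},p_i)\lesssim n_i^{-\frac{s}{2s+2d}}\sqrt{\log(2/\delta)}+n_i^{-\frac{2s+d}{4s+4d}}$. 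Note that a union bound over the $i$ generations turns $\log(2/\delta)$ into $\log(i/\delta)$ (rescale $\delta\to\delta/i$), which explains the $\log(i/\delta)$ appearing in the statement.

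Next I would expand the second term using the mixture definition $p_i=\sum_{j=1}^i\beta_i^j\widehat{p}_j+\alpha_i p_0$ together with $\sum_{j=1}^i\beta_i^j+\alpha_i=1$. Since TV is a metric (hence convex in each argument and satisfying the triangle inequality), one gets
\begin{align}
TV(p_i,p_0)&=TV\Bigl(\sum_{j=1}^i\beta_i^j\widehat{p}_j+\alpha_i p_0,\ p_0\Bigr)\notag\\
&\leq \sum_{j=1}^i\beta_i^j\, TV(\widehat{p}_j,p_0).\notag
\end{align}
Combining the two displays yields the recursive inequality
\begin{align}
TV(\widehat{p}_{i+1},p_0)\lesssim E_i+\sum_{j=1}^i\beta_i^j\, TV(\widehat{p}_j,p_0),\notag
\end{align}
where I write $E_k:=n_k^{-\frac{s}{2s+2d}}\sqrt{\log(i/\delta)}+n_k^{-\frac{2s+d}{4s+4d}}$ for the per-generation error, and the base case is $TV(\widehat{p}_1,p_0)\lesssim E_0$ since $p_0$ is the real distribution and $\widehat{p}_1$ is estimated from $n_0$ real samples via Lemma~\ref{lemma}.

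Then I would unroll this recursion. Define $A_i=1$ and $A_{i-k}=\sum_{j=i-k+1}^i\beta_j^{i-k+1}A_j$ for $1\le k\le i$, exactly as in the statement; these are precisely the coefficients produced when one iteratively substitutes the bound for $TV(\widehat{p}_j,p_0)$ into itself. A clean way to carry this out is by (downward) induction on the generation index: assuming the claimed bound holds for all generations $\le i$, substitute each $TV(\widehat{p}_j,p_0)\lesssim\sum_{k}A_{k}^{(j)}E_{\cdot}$ into $TV(\widehat{p}_{i+1},p_0)\lesssim E_i+\sum_{j=1}^i\beta_i^j TV(\widehat{p}_j,p_0)$, then collect the coefficient multiplying each $E_{i-k}$; by the defining recursion for $A$ this coefficient is exactly $A_{i-k}$, giving $TV(\widehat{p}_{i+1},p_0)\lesssim\sum_{k=0}^i A_{i-k}E_{i-k}$, which is the claim. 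Since $K$, $s$, and the regularity of $p_i$ only enter through the suppressed constants in Lemma~\ref{lemma}, the $\lesssim$ in the theorem inherits its dependence on exactly $K,p_i,s$ as asserted.

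The main obstacle is bookkeeping rather than any deep new idea: one must verify that iterating the one-step recursion produces \emph{precisely} the coefficients $A_{i-k}$ as defined (and not some shifted or mis-indexed variant), and that the probability budget is handled correctly — each invocation of Lemma~\ref{lemma} holds with probability $1-\delta/i$ for a single generation, and a union bound over the $i$ (or $i+1$) generations appearing in the recursion gives the overall $1-\delta$ guarantee while converting the $\log(2/\delta)$ factors into $\log(i/\delta)$. A secondary subtlety worth a sentence is that Lemma~\ref{lemma} requires $p_j\in W^{s,1}(\mathbb{R}^d)$ and the moment condition $\int|x|^{d+\epsilon}p_j(x)\,dx<\infty$ for the mixture distributions $p_j$ as well; since $W^{s,1}$ is a vector space and the moment condition is preserved under convex combinations, and $\widehat{p}_j$ (a sum of scaled, translated copies of the smooth compactly-adjusted kernel $K$) is smooth and integrable with finite moments, these hypotheses propagate through the loop — this should be noted so the recursive application of the lemma is legitimate at every generation.
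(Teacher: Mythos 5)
Your proposal is correct and follows essentially the same route as the paper: triangle-inequality decomposition of $TV(\widehat{p}_{i+1},p_0)$ into $TV(\widehat{p}_{i+1},p_i)+TV(p_i,p_0)$, Lemma~\ref{lemma} for the first term, the convex-combination bound $TV(p_i,p_0)\le\sum_{j=1}^i\beta_i^j\,TV(\widehat{p}_j,p_0)$ for the second, and then unrolling the recursion to produce the coefficients $A_{i-k}$. You are actually a bit more explicit than the paper in two places — spelling out the union bound that converts $\log(2/\delta)$ into $\log(i/\delta)$, and flagging that the Sobolev/moment hypotheses on the mixture $p_j$ must hold at every generation for Lemma~\ref{lemma} to be invoked recursively (the paper simply assumes this for all $i$ in the theorem statement) — but the argument is the same.
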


\textbf{Proof sketch of Theorem \ref{general theorem}}. To establish an upper bound on the TV distance between the non-parametric distribution $\widehat{p}_{i+1}$ of  future generative models and the original real data distribution $p_0$, we introduce an effective decomposition for $TV(\widehat{p}_{i+1}, p_0)$:
$$
TV(\widehat{p}_{i+1}, p_0) \leq TV(\widehat{p}_{i+1},p_i)+TV(p_{i},p_0).
$$
Firstly, the distance between $\widehat{p}_{i+1}$ and $p_i$ is bounded using kernel density estimation theory and concentration inequalities per Lemma \ref{lemma}, showing that it decays as $n_i^{-s/(2s+2d)}\sqrt{\log(2/\delta)} + n_i^{-(2s+d)/(4s+4d)}$ with high probability. Secondly, the distance between the mixed distribution $p_i$ and the original distribution $p_0$ is recursively expanded and bounded by a weighted sum of distances between the synthetic data distribution generated by previous generative models and the original distribution. By combining the bounds for the three components, the objective $TV(\widehat{p}_{i+1}, p_0)$ can be upper bounded by a term decaying as $n_i^{-s/(2s+2d)}\sqrt{\log(2/\delta)} + n_i^{-(2s+d)/(4s+4d)}$ plus a weighted sum of errors from previous generations. The final result can be obtained by solving it recursively.

The key steps involve utilizing kernel density estimators to control the error in estimating $p_i$ from finite samples, leveraging the triangle inequality to decompose the overall distance, and recursively bounding the distance between mixed training distributions across generations.

\section{Conclusion}

 In this paper, we addressed the emerging challenge of training generative models within a self-consuming loop, where successive generations of models are recursively trained on mixtures of real and synthetic data from previous generations. We constructed a robust theoretical framework to evaluate the impact of this training procedure on the data distributions learned by future models, encompassing both parametric and non-parametric models.

Our work specifically focused on deriving bounds on the TV distance between the synthetic data distributions produced by future models and the original real data distribution under various mixed training scenarios. For diffusion models with a one-hidden-layer neural network score function, we demonstrated that the TV distance could be effectively controlled by ensuring that mixed training dataset sizes or proportions of real data are sufficiently large. Furthermore, our analysis revealed a phase transition induced by the increasing amounts of synthetic data. We provided theoretical proof that while the TV distance initially increases, it eventually declines beyond a threshold point. This indicates that a balanced approach to incorporating synthetic data can enhance the performance of future generative models.

We also presented a detailed theoretical analysis of self-consuming non-parametric generative models. By employing kernel density estimation and efficient decomposition techniques, we derived TV distance bounds between future learned synthetic data distributions and the original real data distribution. In future work, it would be interesting to extend our theoretical results to biased sampling scenarios.

\section*{Acknowledgements}
We appreciate Yuzhu Chen from the University of Science and Technology of China for his assistance in experiments. Dr Tao's research is partially supported by NTU RSR and Start Up Grants. And this work is supported in part by National Natural Science Foundation of China under Grant Nos. 62222117 and  62306338, Independent Innovation Research Project of China University of Petroleum (East China) (No.23CX06033A), and the Open Research Fund of Engineering Research Center of Intelligent Technology for Agriculture, Ministry of Education (Grant number ERCITA-KF002).

\section*{Impact Statement}
This paper presents theoretical analyses that aim to advance the understanding of training generative models within self-consuming loops, where models are recursively trained on mixtures of real and synthetic data. A potential impact of this paper is guiding practitioners on the standardized use of synthetic data for training generative models. Using synthetic data can also help protect privacy in sensitive domains such as medical imaging. There are no unethical aspects of this paper or anticipated negative effects on society. This theoretical analysis aims to advance the field of machine learning through rigorous study of an emerging challenge.

\bibliography{example_paper}
\bibliographystyle{icml2024}

\newpage
\appendix
\onecolumn

\section{Self-Consuming Loop of Generative Models}
In this section, we present the algorithms describing the self-consuming training loop. 

\begin{algorithm}[H]
   \caption{Self-Consuming Loop of Generative Models}
   \label{alg:example}
\begin{algorithmic}
   \STATE {\bfseries Input:} generative model $\mathcal{G}$, proportional coefficients $\{\alpha_i\}_{i=1}^{I}$, $\{\{\beta^k_{i}\}_{k=1}^i\}_{i=1}^I$.
  \STATE \textbf{Initialize} $\mathcal{D}_0=\{x_i^j\}_{j=1}^{n_0}$, with $x_i^j \sim p_0$, $p_{\theta_1}=\mathcal{G}_0(\mathcal{D}_0)$.
   \FOR{$i=1$ {\bfseries to} $I$}
    \STATE \(p_{i}=\beta_{i}^1 p_{\theta_1} + \cdots + \beta_{i}^{i} p_{\theta_{i}} + \alpha_{i}p_0\)
    \STATE $\mathcal{D}_i=\{x_i^j\}_{j=1}^{n_i}$, with $x_i^j \sim p_i$
    \STATE $p_{\theta_{i}}=\mathcal{G}_i(\mathcal{D}_i)$
   \ENDFOR
 \STATE \textbf{Return} 
\end{algorithmic}
\end{algorithm}

\section{Auxiliary Lemmas}
In this section, we introduce McDiarmid's inequality, a tool that enables us to establish a bound on the probability that a function of multiple independent random variables deviates from its expected value.
\begin{lemma}[McDiarmid's Inequality]\label{Mcdiarmid}
 Consider independent random variables $Z_1, \cdots, Z_n \in \mathcal{Z}$ and a mapping $\phi: \mathcal{Z}^n \rightarrow \mathbb{R}$. If, for all $i \in\{1, \cdots, n\}$, and for all $z_1, \cdots, z_n, z_i^{\prime} \in \mathcal{Z}$, the function $\phi$ satisfies
$$
\left|\phi\left(z_1, \cdots, z_{i-1}, z_i, z_{i+1}, \cdots, z_n\right)-\phi\left(z_1, \cdots, z_{i-1}, z_i^{\prime}, z_{i+1}, \cdots, z_n\right)\right| \leq c,
$$
then,
$$
P\left(|\phi\left(Z_1, \cdots, Z_n\right)-\mathbb{E} \phi\left(Z_1, \ldots, Z_n\right) \geq t|\right) \leq 2\exp \left(\frac{-2 t^2}{n c^2}\right).
$$
Furthermore, for any $\delta \in(0,1)$ the following inequality holds with probability at least $1-\delta$
$$
\left|\phi\left(Z_1, \ldots, Z_n\right)-\mathbb{E}\left[\phi\left(Z_1, \ldots, Z_n\right)\right]\right| \leq \frac{c  \sqrt{n \log (2 / \delta)}}{\sqrt{2}}.
$$
\end{lemma}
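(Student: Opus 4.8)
\textbf{Proof proposal for Lemma \ref{Mcdiarmid} (McDiarmid's inequality).}

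The plan is to prove the bounded-differences inequality by the classical Doob-martingale argument combined with Hoeffding's lemma, and then read off the high-probability form. First I would set $V := \phi(Z_1,\dots,Z_n)$ and introduce the Doob martingale $M_k := \mathbb{E}[V \mid Z_1,\dots,Z_k]$ for $k=0,1,\dots,n$, so that $M_0 = \mathbb{E}[V]$, $M_n = V$, and the martingale differences $D_k := M_k - M_{k-1}$ satisfy $\mathbb{E}[D_k \mid Z_1,\dots,Z_{k-1}] = 0$ and $V - \mathbb{E}[V] = \sum_{k=1}^{n} D_k$. Using independence of the $Z_j$, I would write $M_k = g_k(Z_1,\dots,Z_k)$ where $g_k(z_1,\dots,z_k) := \mathbb{E}\big[\phi(z_1,\dots,z_k,Z_{k+1},\dots,Z_n)\big]$, which expresses $D_k = g_k(Z_1,\dots,Z_k) - \mathbb{E}_{z}\big[g_k(Z_1,\dots,Z_{k-1},z)\big]$ as a function of the first $k$ coordinates alone.

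The key structural step is to show that, conditionally on $Z_1,\dots,Z_{k-1}$, the variable $D_k$ is supported in an interval of length at most $c$. Freezing $Z_1,\dots,Z_{k-1}$, the conditional law of $D_k$ is that of $g_k(Z_{1:k-1}, Z_k) - \mathbb{E}_z[g_k(Z_{1:k-1},z)]$, which lies in the deterministic interval $\big[\inf_z g_k(Z_{1:k-1},z) - \mathbb{E}_z[g_k(Z_{1:k-1},z)],\ \sup_z g_k(Z_{1:k-1},z) - \mathbb{E}_z[g_k(Z_{1:k-1},z)]\big]$, whose width equals $\sup_{z,z'}\big(g_k(Z_{1:k-1},z) - g_k(Z_{1:k-1},z')\big)$. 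Since swapping only the $k$-th argument of $\phi$ changes its value by at most $c$, and $g_k$ is an expectation of such $\phi$-values over the (independent) later coordinates, this supremum is bounded by $c$. This is the step I expect to be the main obstacle: making fully rigorous that independence lets one realize the conditional fluctuation of the Doob martingale as the deterministic oscillation of $g_k$ in its last slot, so that the bounded-differences hypothesis on $\phi$ transfers verbatim to a width-$c$ bound.

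From there the argument is standard. By Hoeffding's lemma, a mean-zero random variable confined to an interval of length $\ell$ has moment generating function at most $e^{s^2\ell^2/8}$; applying this conditionally to $D_k$ gives $\mathbb{E}[e^{sD_k}\mid Z_1,\dots,Z_{k-1}] \le e^{s^2 c^2/8}$. Iterating the tower property from $k=n$ down to $k=1$ yields $\mathbb{E}\big[e^{s(V-\mathbb{E}V)}\big] \le e^{n s^2 c^2/8}$. A Chernoff bound then gives $P(V-\mathbb{E}V \ge t) \le e^{-st+ ns^2c^2/8}$, and optimizing at $s = 4t/(nc^2)$ produces $P(V-\mathbb{E}V \ge t)\le e^{-2t^2/(nc^2)}$. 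Applying the same reasoning to $-\phi$, which satisfies the identical bounded-differences condition, and taking a union bound yields the two-sided estimate $P(|\phi(Z_1,\dots,Z_n)-\mathbb{E}\phi(Z_1,\dots,Z_n)|\ge t)\le 2e^{-2t^2/(nc^2)}$. Finally, setting $2e^{-2t^2/(nc^2)} = \delta$ and solving for $t$ gives $t = c\sqrt{n\log(2/\delta)}/\sqrt{2}$, which is exactly the stated high-probability form holding with probability at least $1-\delta$.
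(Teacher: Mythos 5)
Your proof is correct, and it is the standard Doob-martingale plus Azuma--Hoeffding argument with the usual $s = 4t/(nc^2)$ optimization; the paper itself only states McDiarmid's inequality as an auxiliary lemma and cites it without proof, so there is no alternative argument in the paper to compare against. The one step you flag as a potential obstacle---using independence to realize the conditional fluctuation of the Doob martingale increment as the deterministic oscillation of $g_k$ in its last slot---is indeed exactly where independence of the $Z_j$ is used, and your sketch handles it correctly; the final algebra $t = c\sqrt{n\log(2/\delta)}/\sqrt{2}$ also matches the stated bound.
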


\section{Proof of Theorem \ref{diffusion model}}
In this section, we present the proof of Theorem \ref{diffusion model}, which establishes
TV distance bounds for diffusion models.
\begin{lemma}[Theorem 1 in \citet{song2021maximum}]\label{songtheo} Let $p_i$ be the data distribution, $\pi$ be a known prior distribution. Then, we have
\begin{align}
   KL \left( p_{i} \| p_{\hat{\theta}_{i+1} (\tau)} \right) \leq \tilde{{\mathcal{L}}} \left( \hat{\theta}_{i+1} (\tau); g^2(\cdot) \right) + KL \left( p_{i,T} \| \pi \right). \notag
\end{align}
\end{lemma}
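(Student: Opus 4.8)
\textbf{Proof plan for Lemma \ref{songtheo} (Theorem 1 in \citet{song2021maximum}).}

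The plan is to control the KL divergence between the true data distribution $p_i$ and the model-induced distribution $p_{\hat{\theta}_{i+1}(\tau)}$ by comparing the two reverse-time SDEs that generate them, and then invoking the data-processing inequality together with Girsanov's theorem. First I would set up the two stochastic processes carefully. Let $\{x_t\}_{t\in[0,T]}$ denote the forward diffusion initialized at $x_0 \sim p_i$, with marginals $p_{i,t}$; its time reversal is the exact reverse SDE driven by the true score $\nabla \log p_{i,t}$. In parallel, define the \emph{generative} process: start from the prior $\pi$ at time $T$ and run the reverse SDE but with the learned score $s_{t,\hat{\theta}_{i+1}(\tau)}$ in place of $\nabla\log p_{i,t}$; call its time-$0$ marginal $p_{\hat{\theta}_{i+1}(\tau)}$. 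Both are path measures on $C([0,T];\mathbb{R}^d)$; denote them $\mathbb{P}$ (true reverse process) and $\mathbb{Q}$ (generative process). By the data-processing inequality for KL divergence, projecting onto the $t=0$ coordinate only decreases divergence, so $KL(p_i \| p_{\hat{\theta}_{i+1}(\tau)}) \le KL(\mathbb{P}\|\mathbb{Q})$.

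Next I would compute $KL(\mathbb{P}\|\mathbb{Q})$ via the chain rule for KL on path space, splitting at the terminal time $T$: $KL(\mathbb{P}\|\mathbb{Q}) = KL(p_{i,T}\|\pi) + \mathbb{E}_{x_T\sim p_{i,T}}\bigl[ KL\bigl(\mathbb{P}(\cdot\mid x_T)\,\|\,\mathbb{Q}(\cdot\mid x_T)\bigr)\bigr]$. The first term is exactly the $KL(p_{i,T}\|\pi)$ appearing in the statement. For the conditional term, both conditional path measures solve reverse SDEs with the \emph{same} diffusion coefficient $g(t)$ and the same initial condition $x_T$, differing only in drift (true score versus learned score). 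Girsanov's theorem then gives the conditional relative entropy as $\frac12 \mathbb{E}\int_0^T g(t)^2 \|s_{t,\hat{\theta}_{i+1}(\tau)}(x_t) - \nabla\log p_{i,t}(x_t)\|_2^2\,dt$, where the expectation is over the trajectory under $\mathbb{P}$, i.e.\ over $x_t \sim p_{i,t}$. Recognizing this as precisely the time-dependent score-matching objective $\tilde{\mathcal{L}}(\hat{\theta}_{i+1}(\tau); g^2(\cdot))$ with weighting $\lambda(t)=g(t)^2$, one obtains the claimed bound $KL(p_i\|p_{\hat\theta_{i+1}(\tau)}) \le \tilde{\mathcal{L}}(\hat\theta_{i+1}(\tau);g^2(\cdot)) + KL(p_{i,T}\|\pi)$.

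The main obstacle is the rigorous justification of Girsanov's theorem and the chain rule in this setting: one must verify that Novikov-type integrability conditions hold (so that the Radon--Nikodym derivative between path measures is a genuine martingale), that the reverse-time SDEs are well-posed with the assumed regularity of $p_i$ (continuous differentiability, compact support, bounded quantities), and that one may interchange the conditioning/time-reversal operations without incurring extra terms. A clean way to sidestep some of these technicalities — and likely the route taken in \citet{song2021maximum} — is to discretize time, establish the inequality for the finite-dimensional path distributions using the exact chain rule for KL on products, and then pass to the continuous limit; alternatively one invokes the known well-posedness results for the reverse SDE under the stated smoothness assumptions. I would also note that the inequality direction is what matters: any failure of the integrability conditions would only be in the regime where the right-hand side is infinite, in which case the bound is vacuous and still true.
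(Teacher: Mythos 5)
Your plan is exactly the proof of Theorem~1 in \citet{song2021maximum}: lift to path space, apply the data-processing inequality to reduce $KL(p_i \| p_{\hat{\theta}_{i+1}(\tau)})$ to $KL(\mathbb{P}\|\mathbb{Q})$ between the true reverse and generative path measures, split by the chain rule at time $T$ to peel off $KL(p_{i,T}\|\pi)$, and evaluate the remaining conditional relative entropy via Girsanov to recover the score-matching loss with weighting $\lambda(t)=g(t)^2$. One bookkeeping point worth noting: Girsanov yields $\frac{1}{2}\int_0^T g(t)^2\,\mathbb{E}_{x\sim p_{i,t}}\bigl[\|s_{t,\hat\theta_{i+1}(\tau)}(x)-\nabla\log p_{i,t}(x)\|_2^2\bigr]\,dt$, whereas this paper's $\tilde{\mathcal{L}}(\cdot;g^2(\cdot))$ is normalized with $\mathbb{E}_{t\sim\mathcal{U}(0,T)}$ (i.e., $\frac{1}{T}\int_0^T$), so the two differ by a factor of $T/2$; this constant is absorbed into the $\lesssim$ (which the theorem explicitly states depends only on $T$) in the downstream use of the lemma.
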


\begin{lemma}[Lemma 4 in \citet{li2023generalization}]\label{lemm4}
For any \(\tau > 0\) and \(\theta, \tilde{\theta}\), we have
\[
\bar{\tilde{\mathcal{L}}}(\bar{\theta}(\tau)) - \bar{\tilde{\mathcal{L}}}(\bar{\theta})\lesssim \frac{\|\bar{s}_{0,\bar{\theta}_0}\|_{\mathcal{H}}^2 + \|\bar{s}_{0,\bar{\theta}}\|_{\mathcal{H}}^2}{\tau}, \quad \widetilde{\mathcal{L}}(\theta(\tau)) - \widetilde{\mathcal{L}}(\theta) \lesssim\frac{\|s_{0,\theta_0}\|_{\mathcal{H}}^2 + \|s_{0,\theta}\|_{\mathcal{H}}^2}{\tau}.
\]
\end{lemma}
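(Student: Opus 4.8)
\textbf{Proof approach for Lemma \ref{lemm4}.} The plan is to reduce both inequalities to the textbook $\mathcal{O}(1/\tau)$ excess-risk bound for gradient flow on a convex objective. The key structural observation is that training freezes $W,U$ and updates only the output weights; since $\bar s_{t,\bar\theta}(x)=\mathbb{E}_{(w,u)\sim\rho_0}[a(w,u)\sigma(w^\top x+u^\top e(t))]$ is \emph{linear} in $a$, the time-dependent score-matching loss is a \emph{convex quadratic} functional of $a$: writing it as $\mathbb{E}_{t\sim\mathcal{U}(0,T)}[\lambda(t)\,\mathbb{E}_{x}\|(\text{linear in }a)-\nabla\log p_{i,t}(x)\|_2^2]$ makes convexity immediate, and it is finite for the comparison points of interest under the paper's compact-support and boundedness hypotheses. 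Moreover $\widetilde{\mathcal{L}}$ and the denoising loss $\mathcal{L}$ differ by an $a$-independent constant, so the gradient flow driven by either is the same and $\widetilde{\mathcal{L}}(\theta(\tau))-\widetilde{\mathcal{L}}(\theta)=\mathcal{L}(\theta(\tau))-\mathcal{L}(\theta)$; ReLU non-smoothness never enters because we do not differentiate through $\sigma$.

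I would first treat the mean-field inequality. View $a$ as an element of the Hilbert space $L^2(\rho_0;\mathbb{R}^d)$ with $\langle a,a'\rangle=\mathbb{E}_{(w,u)\sim\rho_0}[a(w,u)^\top a'(w,u)]$, so that $\|a\|^2=\mathbb{E}_{(w,u)\sim\rho_0}[\|a(w,u)\|_2^2]=\|\bar s_{0,\bar\theta}\|_{\mathcal{H}}^2$ (independent of $t$, hence evaluating at $t=0$ is harmless). Let $J(a):=\bar{\tilde{\mathcal{L}}}$, with flow $\dot a(\tau)=-\nabla J(a(\tau))$ started at the feature map $a_0$ of $\bar\theta_0$, and fix an arbitrary comparison point $a^\star$ (the feature map of $\bar\theta$). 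Run the standard Lyapunov argument: set $E(\tau)=\tau\bigl(J(a(\tau))-J(a^\star)\bigr)+\tfrac12\|a(\tau)-a^\star\|^2$; differentiating, substituting $\dot a=-\nabla J(a)$, and using convexity in the form $J(a(\tau))-J(a^\star)\le\langle\nabla J(a(\tau)),a(\tau)-a^\star\rangle$ gives $E'(\tau)\le-\tau\|\nabla J(a(\tau))\|^2\le 0$, hence $E(\tau)\le E(0)=\tfrac12\|a_0-a^\star\|^2$. Dropping the nonnegative term and using $\|a_0-a^\star\|^2\le 2\|a_0\|^2+2\|a^\star\|^2$ yields $\bar{\tilde{\mathcal{L}}}(\bar\theta(\tau))-\bar{\tilde{\mathcal{L}}}(\bar\theta)\le\tfrac{\|\bar s_{0,\bar\theta_0}\|_{\mathcal{H}}^2+\|\bar s_{0,\bar\theta}\|_{\mathcal{H}}^2}{\tau}$.

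For the finite-width inequality I would repeat the identical Lyapunov computation with $a$ replaced by the matrix $A=(a_1,\dots,a_m)$ under Euclidean gradient flow on $\widetilde{\mathcal{L}}$ (equivalently $\mathcal{L}$), which is again convex quadratic in $A$; the Euclidean distance $\|A-A'\|_F^2=\sum_j\|a_j-a_j'\|_2^2$ is, up to the normalization in the paper's definition $\|s_{0,\theta}\|_{\mathcal{H}}^2=\tfrac1m\sum_j\|a(w_j,u_j)\|_2^2$, the discrete analogue of the RKHS norm, so the same bound gives $\widetilde{\mathcal{L}}(\theta(\tau))-\widetilde{\mathcal{L}}(\theta)\lesssim\tfrac{\|s_{0,\theta_0}\|_{\mathcal{H}}^2+\|s_{0,\theta}\|_{\mathcal{H}}^2}{\tau}$. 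The main obstacle is not any of these steps individually but the normalization bookkeeping between the Euclidean parameter metric and the empirical/population RKHS norm — in particular tracking the factor of $m$ induced by the $\tfrac1m$ prefactor in the score parametrization (and the associated time rescaling), which is exactly what the $\lesssim$ is meant to absorb and which is compatible with the later choices $m_i\asymp n_i$, $\tau_{i+1}\asymp\sqrt{n_i}$; a secondary technical point is checking global existence of the flow and finiteness of the loss along it, which follows routinely from the convex quadratic structure together with the boundedness assumptions on $e(\cdot)$, $\lambda(\cdot)$, and the initial parameters.
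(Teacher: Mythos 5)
The paper does not prove this statement itself: Lemma~\ref{lemm4} is quoted verbatim as Lemma~4 of \citet{li2023generalization}, so there is no in-paper argument to compare yours against. Your overall strategy — exploiting that only the output weights are trained, so the score is linear in them and the score-matching loss is a convex quadratic, then running the textbook Lyapunov energy $E(\tau)=\tau\bigl(J(a(\tau))-J(a^\star)\bigr)+\tfrac12\|a(\tau)-a^\star\|^2$ for gradient flow on a convex objective — is the right one, and the Vincent-style observation that $\widetilde{\mathcal{L}}$ and $\mathcal{L}$ differ by a parameter-independent constant (so they generate the same flow and have the same excess risk) is exactly what makes the reduction go through. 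Your treatment of the mean-field inequality is clean: with $a\in L^2(\rho_0;\mathbb{R}^d)$ one has $\|a\|^2=\|\bar s_{0,\bar\theta}\|_{\mathcal{H}}^2$ exactly, so $E(\tau)\le E(0)$ and the elementary inequality $\|a_0-a^\star\|^2\le 2\|a_0\|^2+2\|a^\star\|^2$ give the first bound directly.

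The finite-width half, however, has a genuine gap in the way you dispose of the width factor. Running the Lyapunov argument over $A$ in the Frobenius metric gives $\widetilde{\mathcal{L}}(\theta(\tau))-\widetilde{\mathcal{L}}(\theta)\le\frac{\|A_0-A^\star\|_F^2}{2\tau}$, and the paper's definition makes $\|A\|_F^2=m\,\|s_{0,\theta}\|_{\mathcal{H}}^2$; so what you actually obtain is $m\cdot\frac{\|s_{0,\theta_0}\|_{\mathcal{H}}^2+\|s_{0,\theta}\|_{\mathcal{H}}^2}{\tau}$, off by a factor of $m$. That factor cannot be hidden in $\lesssim$ (the implicit constants here and in Li et al.\ depend only on $T$), and it is not ``compatible'' with the later choices $m_i\asymp n_i$, $\tau_{i+1}\asymp\sqrt{n_i}$ — under those choices $m/\tau\asymp\sqrt{n}$, so a stray $m$ would make the excess risk \emph{grow} and would destroy the $n_i^{-1/4}$ rate that Theorem~\ref{diffusion model} builds on. The fix is the time/metric rescaling you gesture at but do not carry through: the flow is to be run in the normalized coordinates $\theta$ for which the paper itself records $\|\theta\|_2^2=\|\mathrm{vec}(A)\|_2^2/m=\|s_{0,\theta}\|_{\mathcal{H}}^2$ (equivalently, Euclidean flow over $A$ with an $m$-scaled learning rate). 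Repeating your Lyapunov computation in that metric, so that $\|\theta_0-\theta^\star\|_2^2\le 2\|s_{0,\theta_0}\|_{\mathcal{H}}^2+2\|s_{0,\theta}\|_{\mathcal{H}}^2$, yields the stated finite-width bound with no stray $m$.
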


\begin{lemma}[Theorem A.5 in \cite{wu2023implicit}]\label{rademacher}
Consider a function class $\mathcal{F}$ with $\sup_{x \in \mathcal{X}, f \in \mathcal{F}} |f(x)| \leq B$. For any $\delta \in (0, 1)$, w.p. at least $1 - \delta$ over the choice of $S = (x_1, x_2, \ldots, x_n)$, we have,
\[
\left| \frac{1}{n} \sum_{i=1}^{n} f(x_i) - \mathbb{E}_x[f(x)] \right| \lesssim \mathcal{R}_n(\mathcal{F}) + B\sqrt{\frac{\ln(2/\delta)}{n}},
\]
where $\mathcal{R}_n(\mathcal{F})$ is the Rademacher complexity of $\mathcal{F}$.
\end{lemma}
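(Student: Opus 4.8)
The bound is to be read uniformly over $\mathcal{F}$ (the appearance of $\mathcal{R}_n(\mathcal{F})$ on the right-hand side makes this the natural reading), so the plan is to prove the standard Rademacher uniform-convergence inequality via the classical two-step route: a bounded-differences concentration step, then a symmetrization step. First I would reduce to the one-sided supremum $\Phi(S) := \sup_{f\in\mathcal{F}}(\mathbb{E}_x[f(x)] - \frac{1}{n}\sum_{i=1}^n f(x_i))$. The two-sided statement follows by running the identical argument on the reflected class $-\mathcal{F} := \{-f : f\in\mathcal{F}\}$, which has the same uniform bound $B$ and, because Rademacher variables are symmetric, the same Rademacher complexity $\mathcal{R}_n(-\mathcal{F}) = \mathcal{R}_n(\mathcal{F})$; a union bound over the two resulting events (each of probability $\delta/2$) then controls $\sup_{f\in\mathcal{F}}|\frac{1}{n}\sum_{i=1}^n f(x_i) - \mathbb{E}_x[f(x)]|$, with the extra factors absorbed by the $\lesssim$.

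Second, I would apply McDiarmid's inequality (Lemma \ref{Mcdiarmid}) to the map $S \mapsto \Phi(S)$. Replacing one coordinate $x_i$ by $x_i'$ perturbs the inner quantity $\mathbb{E}_x[f(x)] - \frac{1}{n}\sum_j f(x_j)$ by at most $\frac{1}{n}(|f(x_i)| + |f(x_i')|) \leq \frac{2B}{n}$ for every $f$, and the supremum of a family of functions whose increments are bounded by a common constant has increments bounded by that same constant, so $\Phi$ satisfies the bounded-differences condition with $c = 2B/n$. Lemma \ref{Mcdiarmid} then gives, with probability at least $1-\delta$, $\Phi(S) \leq \mathbb{E}[\Phi(S)] + \frac{2B}{n}\cdot\frac{\sqrt{n\log(2/\delta)}}{\sqrt{2}} = \mathbb{E}[\Phi(S)] + B\sqrt{\frac{2\log(2/\delta)}{n}}$, which is the origin of the second term in the bound.

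Third — the crux — I would bound $\mathbb{E}[\Phi(S)] \leq 2\mathcal{R}_n(\mathcal{F})$ by symmetrization. Introduce a ghost sample $S' = (x_1',\dots,x_n')$ i.i.d. from the same distribution and independent of $S$, and write $\mathbb{E}_x[f(x)] = \mathbb{E}_{S'}[\frac{1}{n}\sum_i f(x_i')]$; convexity of the supremum and Jensen's inequality give $\mathbb{E}_S[\Phi(S)] \leq \mathbb{E}_{S,S'}[\sup_{f\in\mathcal{F}} \frac{1}{n}\sum_i (f(x_i') - f(x_i))]$. Since the pairs $(x_i, x_i')$ are exchangeable, multiplying the $i$-th summand by an independent Rademacher sign $\varepsilon_i$ leaves the expectation unchanged, yielding $\mathbb{E}_{S,S',\varepsilon}[\sup_f \frac{1}{n}\sum_i \varepsilon_i(f(x_i') - f(x_i))]$; splitting via the triangle inequality and sub-additivity of the supremum bounds this by $2\,\mathbb{E}_{S,\varepsilon}[\sup_f \frac{1}{n}\sum_i \varepsilon_i f(x_i)] = 2\mathcal{R}_n(\mathcal{F})$. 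Chaining the three displays and relabeling constants yields $\sup_{f\in\mathcal{F}}|\frac{1}{n}\sum_i f(x_i) - \mathbb{E}_x[f(x)]| \lesssim \mathcal{R}_n(\mathcal{F}) + B\sqrt{\log(2/\delta)/n}$, as claimed.

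I do not expect a genuine obstacle: this is the textbook Rademacher-complexity uniform-convergence theorem, and each ingredient — the bounded-differences property of $\Phi$, ghost-sample symmetrization, and Rademacher-sign insertion — is routine. The only points needing minor care are (i) matching the $\sqrt{2}$ constants so that the bounded-differences constant $c = 2B/n$ fed into the form of McDiarmid's inequality in Lemma \ref{Mcdiarmid} produces exactly a $B\sqrt{\log(2/\delta)/n}$-type tail, and (ii) the two-sidedness, handled by the reflected-class union bound above; both are subsumed by the $\lesssim$ in the statement.
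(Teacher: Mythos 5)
The paper does not prove this lemma; it is imported verbatim as Theorem A.5 of the cited reference (Wu et al., 2023) and used as a black box. Your proof is the standard textbook derivation (McDiarmid's inequality for the supremum, followed by ghost-sample symmetrization to bound the expectation by twice the Rademacher complexity), and it is correct. Two small points worth noting, neither of which is a gap given the $\lesssim$: (i) the paper later uses the \emph{empirical} Rademacher complexity $\mathcal{R}_{n_i}(\mathcal{F}_t)$ evaluated on the observed sample, whereas your symmetrization produces the \emph{expected} Rademacher complexity; converting between the two requires one further application of McDiarmid, contributing an extra $B\sqrt{\log(1/\delta)/n}$ term that is absorbed by $\lesssim$; and (ii) the reflected-class union bound replaces $\log(2/\delta)$ by $\log(4/\delta)$, again absorbed by the hidden constant. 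Since this is precisely the argument one would expect to find in the cited source, your proposal is consistent with the paper's intent even though the paper itself contains no proof.
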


\begin{lemma}[Lemma 5 in \citet{li2023generalization}]\label{lemma5} Suppose that the loss objectives $\widetilde{\mathcal{L}}$, $\widetilde{\mathcal{L}}_{n_i}$, $\bar{\tilde{\mathcal{L}}}$, $\bar{\tilde{\mathcal{L}}}_{n_i}$ are bounded at the initialization, then for any $\tau > 0$, we have
$$
\|s_{0,\theta(\tau)}\|_{\mathcal{H}},\ \|s_{0,\hat{\theta}_{i}(\tau)}\|_{\mathcal{H}} \lesssim \|s_{0,\theta_0}\|_{\mathcal{H}} + \sqrt{\tau/m_i},\quad \|\bar{s}_{0,\bar{\theta}(\tau)}\|_{\mathcal{H}}, \ \|\bar{s}_{0,\hat{\bar{\theta}}_i(\tau)}\|_{\mathcal{H}} \lesssim\|\bar{s}_{0,\bar{\theta}_0}\|_{\mathcal{H}} + \sqrt{\tau}.
$$
\end{lemma}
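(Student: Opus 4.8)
\textbf{Proof sketch of Lemma \ref{lemma5}.} The plan is to combine the elementary energy identity for a gradient flow with the fact that, for a random-feature score network, the RKHS norm is a function of the trainable output weights alone. Since $W$ and $U$ are frozen, the only trainable parameter is the matrix $A=(a_1,\dots,a_{m_i})$, and by the definition of the discrete kernel norm one has, for every SDE time $t$, $\|s_{t,\theta}\|_{\mathcal{H}}^2=\tfrac1{m_i}\sum_{j=1}^{m_i}\|a_j\|_2^2=\tfrac1{m_i}\|A\|_F^2$; in particular $\|s_{0,\theta(\tau)}\|_{\mathcal{H}}=\tfrac1{\sqrt{m_i}}\|A(\tau)\|_F$ depends on $\tau$ only through $A(\tau)$. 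The flow driving $\theta(\tau)$ is $\tfrac{d}{d\tau}\theta(\tau)=-\nabla_\theta\mathcal{L}(\theta(\tau);\lambda(\cdot))$ (the empirical loss $\hat{\mathcal{L}}_{n_i}$ for $\hat\theta_i$); using the standard fact that the conditional (denoising) objective and the marginal objective $\widetilde{\mathcal{L}}$ differ only by an additive constant independent of $\theta$, this flow coincides with the flow on $\widetilde{\mathcal{L}}$, which is $\ge 0$ and bounded at initialization by hypothesis. Hence $\tfrac{d}{d\tau}\widetilde{\mathcal{L}}(\theta(\tau))=-\|\nabla_\theta\widetilde{\mathcal{L}}(\theta(\tau))\|_F^2$, and integrating gives $\int_0^\tau\|\nabla_\theta\widetilde{\mathcal{L}}(\theta(s))\|_F^2\,ds=\widetilde{\mathcal{L}}(\theta_0)-\widetilde{\mathcal{L}}(\theta(\tau))\le\widetilde{\mathcal{L}}(\theta_0)\lesssim 1$.

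Combining this with the triangle inequality and Cauchy--Schwarz in the time variable, I would bound the distance travelled by the flow:
$$\|A(\tau)-A(0)\|_F\le\int_0^\tau\|\nabla_\theta\widetilde{\mathcal{L}}(\theta(s))\|_F\,ds\le\sqrt{\tau}\,\Bigl(\int_0^\tau\|\nabla_\theta\widetilde{\mathcal{L}}(\theta(s))\|_F^2\,ds\Bigr)^{1/2}\lesssim\sqrt{\tau}.$$
Dividing by $\sqrt{m_i}$ and using $\|A(\tau)\|_F\le\|A(0)\|_F+\|A(\tau)-A(0)\|_F$ then gives $\|s_{0,\theta(\tau)}\|_{\mathcal{H}}\lesssim\|s_{0,\theta_0}\|_{\mathcal{H}}+\sqrt{\tau/m_i}$, the first claimed bound; the empirical trajectory $\hat\theta_i(\tau)$ is handled identically with $\hat{\mathcal{L}}_{n_i}$ (resp.\ $\widetilde{\mathcal{L}}_{n_i}$) in place of the population losses. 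For the mean-field trajectories $\bar\theta(\tau)$ and $\hat{\bar\theta}_i(\tau)$, the ``parameter'' is the coefficient function $a(\cdot,\cdot)$, the flow is gradient descent in $L^2(\rho_0)$, and $\|\bar{s}_{0,\bar\theta}\|_{\mathcal{H}}^2=\mathbb{E}_{(w,u)\sim\rho_0}[\|a(w,u)\|_2^2]=\|a\|_{L^2(\rho_0)}^2$. Running the same energy argument with $\|\cdot\|_{L^2(\rho_0)}$ replacing $\|\cdot\|_F$ and $\bar{\tilde{\mathcal{L}}}(\bar\theta_0)$ replacing $\widetilde{\mathcal{L}}(\theta_0)$ yields $\|a(\tau)-a(0)\|_{L^2(\rho_0)}\lesssim\sqrt{\tau}$, hence $\|\bar{s}_{0,\bar\theta(\tau)}\|_{\mathcal{H}}\lesssim\|\bar{s}_{0,\bar\theta_0}\|_{\mathcal{H}}+\sqrt{\tau}$; the absence of a $1/m_i$ factor here is precisely because this RKHS norm is the un-normalized $L^2(\rho_0)$ norm rather than an $m_i$-sample average.

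The main obstacle I anticipate is not the Cauchy--Schwarz estimate above but the fact that, with the ReLU activation $\sigma$, the losses are only locally Lipschitz and not everywhere differentiable, so the descent identity $\tfrac{d}{d\tau}\widetilde{\mathcal{L}}=-\|\nabla_\theta\widetilde{\mathcal{L}}\|_F^2$ must be interpreted for the (Clarke) subgradient flow. This is dealt with in the standard way: $\tau\mapsto\theta(\tau)$ is absolutely continuous, $\widetilde{\mathcal{L}}$ is locally Lipschitz, hence $\tau\mapsto\widetilde{\mathcal{L}}(\theta(\tau))$ is absolutely continuous and the chain rule holds for a.e.\ $\tau$, with the non-differentiability locus of $\widetilde{\mathcal{L}}$ visited on a Lebesgue-null set of times; the existence and regularity of this flow and the resulting energy identity are standard in the analysis of gradient-flow-trained two-layer/random-feature networks (cf.\ \citet{li2023generalization}), so I would invoke that machinery rather than re-derive it. Everything else reduces to the elementary time-integral bound presented above, which is why the final dependence on $\tau$ is only the mild $\sqrt{\tau}$ (or $\sqrt{\tau/m_i}$) appearing in the statement.
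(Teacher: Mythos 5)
The paper does not prove this lemma itself: it is imported verbatim as ``Lemma 5 in \citet{li2023generalization}'' and invoked as a black box inside the proof of Theorem~\ref{diffusion model}, so there is no proof in this paper to compare against. What can be assessed is whether your argument is a valid free-standing proof, and it is: the skeleton is exactly the standard ``curve-length'' bound for a gradient flow, namely (i) the energy-dissipation identity $\int_0^\tau\|\nabla\widetilde{\mathcal{L}}\|^2\,ds=\widetilde{\mathcal{L}}(\theta_0)-\widetilde{\mathcal{L}}(\theta(\tau))\le\widetilde{\mathcal{L}}(\theta_0)$ (using non-negativity of $\widetilde{\mathcal{L}}$ and the hypothesized boundedness at initialization), (ii) Cauchy--Schwarz in the time variable to bound the distance travelled by $\sqrt{\tau}$, (iii) the triangle inequality, and (iv) the observation that the RKHS norm of the random-feature score network is, by the paper's own definition, $\|s_{0,\theta}\|_{\mathcal{H}}=\|A\|_F/\sqrt{m_i}$, so that dividing the Frobenius travel bound by $\sqrt{m_i}$ produces exactly the $\sqrt{\tau/m_i}$ term, while the mean-field RKHS norm is the unnormalized $L^2(\rho_0)$ norm of the coefficient function and hence picks up no $1/\sqrt{m_i}$ factor. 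You also correctly note that the conditional denoising objective $\mathcal{L}$ and the marginal objective $\widetilde{\mathcal{L}}$ differ by a $\theta$-independent constant, so the flows and energy identities may be used interchangeably, and you address the ReLU non-smoothness by passing to the Clarke chain rule along the absolutely continuous trajectory.

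Two minor points worth making explicit if this were to be written out in full. First, the step $\|A(\tau)-A(0)\|_F\le\int_0^\tau\|\nabla_\theta\widetilde{\mathcal{L}}\|_F\,ds$ silently identifies the flow variable $\theta$ with the raw trainable matrix $A$ under the standard Frobenius inner product; the paper's own notation (``$\|\theta\|_2^2=\|A\|_F^2/m$'' and ``$s_{t,\theta}=\frac{1}{\sqrt{m_i}}\sum_j\theta_j\sigma(\cdot)$'') wobbles between this vanilla parametrization and a $1/\sqrt{m_i}$-rescaled one, and only the vanilla one delivers the claimed $\sqrt{\tau/m_i}$ scaling rather than $\sqrt{\tau}$ — so you should state that choice, not leave it implicit. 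Second, the boundedness hypothesis must be applied to the correct loss for each of the four trajectories ($\widetilde{\mathcal{L}}$ for $\theta(\tau)$, $\widetilde{\mathcal{L}}_{n_i}$ for $\hat\theta_i(\tau)$, and the barred versions for the mean-field trajectories), which the lemma's hypothesis indeed covers but your sketch applies only to the population case. Neither of these affects the substance: the route is correct and, to the best of one's knowledge of \citet{li2023generalization}, is essentially the argument that reference uses.
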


\begin{lemma}[Lemma 6 in \citet{li2023generalization}]\label{lemma6}
Suppose that \(\|x(0)\|_{\infty} \leq 1\), and the trainable parameter \(a\) and embedding function \(e(\cdot)\) are both bounded. Then, given any \(\bar{\theta}\), for any \(\delta > 0\), with the probability of at least \(1 - \delta\), there exists \(\theta\) such that
\[
\mathbb{E}_{t \sim \mathcal{U}(0,T)} \left[ \lambda(t) \cdot \mathbb{E}_{x_i(t) \sim p_{i,t}} \left[ \|s_{t,\theta}(x_i(t)) - \bar{s}_{t,\bar{\theta}}(x_i(t))\|_2^2 \right] \right]\lesssim \frac{\log^2(1/\delta^2)}{m_i} d ,
\]
where $\lesssim$ hides universal positive constants only depending on \(T\).
\end{lemma}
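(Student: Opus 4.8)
\textbf{Proof proposal for Lemma~\ref{lemma6}.}
The goal is to show that for any target infinite-width score $\bar{s}_{t,\bar{\theta}}(x) = \mathbb{E}_{(w,u)\sim\rho_0}[a(w,u)\sigma(w^\top x + u^\top e(t))]$ in the RKHS, there exists a finite-width realization $s_{t,\theta}(x) = \frac{1}{m_i}\sum_{j=1}^{m_i} a_j\sigma(w_j^\top x + u_j^\top e(t))$ with features $(w_j,u_j)$ drawn i.i.d.\ from $\rho_0$ whose expected squared approximation error decays as $\frac{d}{m_i}$ up to logarithmic factors. The plan is a standard Monte Carlo / random features argument: pick the finite-width weights by setting $a_j := a(w_j,u_j)$ at the randomly sampled features $(w_j,u_j)\sim\rho_0$, so that $s_{t,\theta}$ is exactly the empirical average of the i.i.d.\ random functions $\phi_j(x,t) := a(w_j,u_j)\sigma(w_j^\top x + u_j^\top e(t))$, each of which has mean $\bar{s}_{t,\bar{\theta}}(x)$ over the draw of $(w_j,u_j)$.

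First I would fix $x$ and $t$ and apply a vector-valued concentration bound (Bernstein or a bounded-difference inequality in $\mathbb{R}^d$) to control $\|s_{t,\theta}(x)-\bar{s}_{t,\bar{\theta}}(x)\|_2$ for that single pair. The summands are bounded: by the normalization $\|w\|+\|u\|\le 1$, the boundedness of $\|x\|_\infty$, and the boundedness of $e(\cdot)$ and of the trainable parameter $a$ (hence of $a(w,u)$), each $\|\phi_j(x,t)\|_2$ is $O(\sqrt{d})$ (the $\sqrt d$ coming from the Euclidean norm of the $d$-dimensional output vector $a$). A Bernstein-type bound then gives, with probability at least $1-\delta'$, $\|s_{t,\theta}(x)-\bar{s}_{t,\bar\theta}(x)\|_2^2 \lesssim \frac{d\log(1/\delta')}{m_i}$. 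Next I would upgrade this pointwise estimate to a uniform-in-$(x,t)$ estimate: since $\sigma$ is $1$-Lipschitz and the argument $w^\top x + u^\top e(t)$ is Lipschitz in $(x,t)$ with constant controlled by $\|w\|+\|u\|\le 1$ and the Lipschitz constant of $e$, the map $(x,t)\mapsto s_{t,\theta}(x)-\bar s_{t,\bar\theta}(x)$ is Lipschitz with a bound uniform over the feature draws; a standard $\varepsilon$-net over the compact domain $\{\|x\|_\infty\le R\}\times[0,T]$ together with a union bound over the net converts the pointwise bound into a uniform one, contributing only extra logarithmic factors — this is where the $\log^2(1/\delta^2)$ shape in the statement comes from (one log from the net cardinality, one from the confidence level). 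Finally, taking expectations over $t\sim\mathcal{U}(0,T)$ and $x\sim p_{i,t}$ and folding in the bounded weighting function $\lambda(t)$ immediately yields $\mathbb{E}_{t}[\lambda(t)\,\mathbb{E}_{x_i(t)\sim p_{i,t}}[\|s_{t,\theta}(x_i(t))-\bar s_{t,\bar\theta}(x_i(t))\|_2^2]] \lesssim \frac{d\log^2(1/\delta^2)}{m_i}$, with the hidden constants depending only on $T$, $R$, the Lipschitz constant of $e$, and the uniform bounds on $\lambda$ and $a$.

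The main obstacle I anticipate is making the uniform-in-$(x,t)$ step clean: one must verify that the Lipschitz constant of $s_{t,\theta}-\bar s_{t,\bar\theta}$ in $(x,t)$ is bounded independently of the random features (so that the net argument works simultaneously for all feature draws), which requires carefully invoking the homogeneity normalization $\|w\|+\|u\|\le 1$ and the assumed boundedness of $a(w,u)$ and $e(t)$; a secondary subtlety is that $a(w,u) = \frac{1}{\rho_0(w,u)}\int a\,\rho(a,w,u)\,da$ must be bounded $\rho_0$-almost surely, which should follow from the stated boundedness of the trainable parameters but deserves an explicit remark. Everything else — the Bernstein step, the net argument, and the final expectation — is routine. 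Note that choosing $a_j = a(w_j,u_j)$ is exactly the construction that makes $s_{t,\theta}$ an unbiased estimator of $\bar s_{t,\bar\theta}$, and this is the crucial design choice enabling the $1/m_i$ rate; any other weight assignment would not give the variance reduction we need.
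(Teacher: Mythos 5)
Your construction is right: setting $a_j := a(w_j,u_j)$ with $(w_j,u_j)\stackrel{\text{i.i.d.}}{\sim}\rho_0$ is the correct Monte Carlo trick making $s_{t,\theta}$ an unbiased estimator of $\bar s_{t,\bar\theta}$ for every fixed $(x,t)$, and the per-summand boundedness you identify is what lets concentration kick in. However, the pointwise-then-$\varepsilon$-net step is the wrong tool here, and it does not "contribute only extra logarithmic factors" as you claim. An $\varepsilon$-net of $\{\|x\|_\infty\le R\}\times[0,T]\subset\mathbb{R}^{d+1}$ has cardinality $N=\Theta\big((R/\varepsilon)^d(T/\varepsilon)\big)$, so $\log N = \Theta(d\log(R/\varepsilon))$; after the union bound you must take $\delta' = \delta/N$, which turns your pointwise $\frac{d\log(1/\delta')}{m_i}$ into $\frac{d\log(1/\delta)+d^2\log(R/\varepsilon)}{m_i}$. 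That is a $d^2/m_i$ rate, strictly worse than the $d/m_i$ the lemma asserts, so the argument as written does not prove the stated bound.

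The fix is to avoid uniform control over $(x,t)$ altogether, since the lemma only asks for an $L^2$-type integrated error, not a $\sup$-norm error. Work directly in the Hilbert space $\mathcal{H}:=L^2\big(\lambda(t)\,dt\,dp_{i,t}(x);\mathbb{R}^d\big)$: the random elements $\phi_j:(x,t)\mapsto a(w_j,u_j)\sigma(w_j^\top x+u_j^\top e(t))$ are i.i.d.\ with mean $\bar s_{t,\bar\theta}$ and, by the boundedness of $a$, $\|w\|+\|u\|\le1$, $\|x\|_\infty\le1$, $e(\cdot)$ and $\lambda(\cdot)$, satisfy $\|\phi_j\|_{\mathcal{H}}\lesssim\sqrt d$ almost surely. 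Then $Z:=\mathbb{E}_{t,x}\big[\lambda(t)\|s_{t,\theta}-\bar s_{t,\bar\theta}\|_2^2\big]=\big\|\tfrac1{m_i}\sum_j(\phi_j-\mathbb{E}\phi_j)\big\|_{\mathcal{H}}^2$ is exactly the object to be controlled, and a Hilbert-space Hoeffding/Bernstein (Pinelis) inequality immediately gives $Z\lesssim\frac{d\,\mathrm{polylog}(1/\delta)}{m_i}$ with probability $1-\delta$, with no $\varepsilon$-net and no spurious dimension factor. This one-shot vector concentration step is where the $\mathrm{polylog}(1/\delta)$ in the statement actually comes from.
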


\begin{proof}[Proof of Theorem \ref{diffusion model}]
By utilizing the triangle inequality of TV distance, we can deduce the following:
\begin{equation}
 TV(p_{\theta_{i+1}},p_0)\leq TV(p_{\theta_{i+1}},p_i)+TV(p_{i},p_0).\notag
\end{equation}
For the second term, $TV(p_{i},p_0)$, recalling the definition of the TV distance as  $TV(p_i, p_0)=\frac{1}{2}\int |p_i(x)-p_0(x)|\ dx$ and given $p_i(x)=\beta_i^1 p_{\theta_1}(x)+\cdots+\beta_i^i p_{\theta_i}(x)+\alpha_i p_0(x)$, we can deduce:
\begin{align}
 TV(p_{\theta_{i+1}},p_0)\leq TV(p_{\theta_{i+1}}, p_i)+\beta^i_{i}TV(p_{\theta_i},p_0)+\beta^{i-1}_{i}TV(p_{\theta_{i-1}},p_0)+\cdots+\beta^{1}_{i}TV(p_{\theta_{1}},p_0). \label{32}
\end{align}
Next, let's concentrate on the first term, $TV(p_{\theta_{i+1}}, p_i)$. By applying Pinsker's inequality, we obtain: 
\begin{align}
    TV(p_{\theta_{i+1}}, p_i)\leq \sqrt{\frac{1}{2}KL(p_i \| p_{\theta_{i+1}})}.\label{pinsker}
\end{align}
Considering the training process of the diffusion model, in the $i+1$-th generation, the output is denoted as $\theta_{i+1}$, which we also represent as $\hat{\theta}_{i+1}(\tau_{i+1})$, with a total of $n_{i+1}$ training samples and a training time of $\tau_{i+1}$. Then, by employing Lemma \ref{songtheo}, we derive:
\begin{align}
    KL \left( p_i \| p_{\hat{\theta}_{i+1}}(\tau_{i+1}) \right) \leq \widetilde{\mathcal{L}}(\hat{\theta}_{i+1}(\tau_{i+1}); g^2(\cdot)) + KL \left( p_{i,T}\| \pi \right). \label{song}
\end{align}
In our approach, we adhere to the framework established by \citet{li2023generalization} to establish bounds for the term $\widetilde{\mathcal{L}}(\hat{\theta}_{i+1}(\tau_{i+1}); g^2(\cdot))$. For this purpose, we employ the following decomposition:
\begin{align}
\widetilde{\mathcal{L}}(\hat{\theta}_{i+1}(\tau_{i+1})) &= \left[ \widetilde{\mathcal{L}}(\hat{\theta}_{i+1}(\tau_{i+1})) - \widetilde{\mathcal{L}}(\theta_{i+1}(\tau_{i+1})) \right] + \widetilde{\mathcal{L}}(\theta_{i+1}(\tau_{i+1})) \notag \\
&\leq \left[ \widetilde{\mathcal{L}}(\hat{\theta}_{i+1}(\tau_{i+1})) - \widetilde{\mathcal{L}}(\theta_{i+1}(\tau_{i+1})) \right] + \left[\widetilde{\mathcal{L}}(\theta_{i+1}(\tau_{i+1}))-\bar{\tilde{\mathcal{L}}}(\bar{\theta}_{i+1}(\tau_{i+1}))\right]+\bar{\tilde{\mathcal{L}}}(\bar{\theta}_{i+1}(\tau_{i+1})).\label{decomp diffus}
\end{align}
Initially, regarding the final term $\bar{\tilde{\mathcal{L}}}(\bar{\theta}_{i+1}(\tau_{i+1}))$, we can derive the following result by employing Lemma \ref{lemm4}:
\begin{align}
   \bar{\tilde{\mathcal{L}}}(\bar{\theta}_{i+1}(\tau_{i+1}))\lesssim\bar{\tilde{\mathcal{L}}}(\bar{\theta}_{i+1}^*) + \frac{1}{\tau} \left( \|\bar{\mathbf{s}}_{0,\bar{\theta}_0}\|_{\mathcal{H}}^2 + \|\bar{\mathbf{s}}_{0,\bar{\theta}^*}\|_{\mathcal{H}}^2 \right). \label{decom1}
\end{align}
For the second term, $\widetilde{\mathcal{L}}(\theta_{i+1}(\tau_{i+1}))-\bar{\tilde{\mathcal{L}}}(\bar{\theta}_{i+1}(\tau_{i+1}))$, we perform the following decomposition:
\begin{align}
\widetilde{\mathcal{L}}(\theta_{i+1}(\tau_{i+1}))&-\bar{\tilde{\mathcal{L}}}(\bar{\theta}_{i+1}(\tau_{i+1}))\notag\\
\leq&\mathbb{E}_{t\sim\mathcal{U}(0,T)} \left[ \lambda(t) \cdot \mathbb{E}_{x_i(t)\sim p_{i, t}} \left[ \|\mathbf{s}_{t,\theta_{i+1}(\tau_{i+1})} (x_i(t)) - \bar{\mathbf{s}}_{t,\bar{\theta}_{i+1}(\tau_{i+1})} (x_i(t))\|^2 \right] \right] \notag\\
\lesssim& \mathbb{E}_{t\sim\mathcal{U}(0,T)} \left[ \lambda(t) \cdot \mathbb{E}_{x_i(t)\sim p_{i, t}} \left[ \|\mathbf{s}_{t,\theta_{i+1}(\tau_{i+1})} (x_i(t)) - \mathbf{s}_{t,\theta_{i+1}^*} (x_i(t))\|^2 \right] \right] \notag\\
&+\mathbb{E}_{t\sim\mathcal{U}(0,T)} \left[ \lambda(t) \cdot \mathbb{E}_{x_i(t)\sim p_{i, t}} \left[ \|\mathbf{s}_{t,\theta_{i+1}^*} (x_i(t)) - \bar{\mathbf{s}}_{t,\bar{\theta}_{i+1}^*} (x_i(t))\|^2 \right] \right] \notag\\
&+\mathbb{E}_{t\sim\mathcal{U}(0,T)} \left[ \lambda(t) \cdot \mathbb{E}_{x_i(t)\sim p_{i, t}} \left[ \|\bar{\mathbf{s}}_{t,\bar{\theta}_{i+1}^*} (x_i(t)) - \bar{\mathbf{s}}_{t,\bar{\theta}_{i+1}(\tau_{i+1})} (x_i(t))\|^2 \right] \right]\notag \\
=: & I_1+I_2+I_3,
\end{align}
where $\theta_{i+1}^*$ is the Monte Carlo estimator of $\bar{\theta}_{i+1}^*$. According to Lemma \ref{lemma6}, it is derived that, with a probability of at least $1-\delta$,
\begin{align}
    I_2\lesssim \frac{\log^2(1/\delta^2)}{m_i} d.
\end{align}
From the triangle inequality and Lemma \ref{lemm4}, we can obtain:
\begin{align}
I_{3} \lesssim \bar{\tilde{\mathcal{L}}}(\bar{\theta}_{i+1}(\tau_{i+1})) + \bar{\tilde{\mathcal{L}}}(\bar{\theta}_{i+1}^*)\lesssim\bar{\tilde{\mathcal{L}}}(\bar{\theta}_{i+1}^*) + \frac{1}{\tau} \left( \|\bar{\mathbf{s}}_{0,\bar{\theta}_0}\|_{\mathcal{H}}^2 + \|\bar{\mathbf{s}}_{0,\bar{\theta}^*}\|_{\mathcal{H}}^2 \right),\notag
\end{align}
and similarly,
\begin{align}
I_{1} \lesssim\widetilde{\mathcal{L}}(\theta^*_{i+1}) + \frac{1}{\tau} \left( \|\mathbf{s}_{0,\theta_0}\|_{\mathcal{H}}^2 + \|\mathbf{s}_{0,\theta^*}\|_{\mathcal{H}}^2 \right).\notag
\end{align}
Consequently, it can be established that, with a probability of at least $1 - \delta$,
\begin{align}
    \widetilde{\mathcal{L}}&(\theta_{i+1}(\tau_{i+1}))-\bar{\tilde{\mathcal{L}}}(\bar{\theta}_{i+1}(\tau_{i+1}))\notag \\
    &\lesssim\frac{\log^2(1/\delta^2)}{m_i}d+\widetilde{\mathcal{L}}(\theta^*_{i+1})+\bar{\tilde{\mathcal{L}}}(\bar{\theta}_{i+1}^*)+\frac{1}{\tau} \left( \|\bar{\mathbf{s}}_{0,\bar{\theta}_0}\|_{\mathcal{H}}^2 + \|\bar{\mathbf{s}}_{0,\bar{\theta}^*}\|_{\mathcal{H}}^2+\|\mathbf{s}_{0,\theta_0}\|_{\mathcal{H}}^2 + \|\mathbf{s}_{0,\theta^*}\|_{\mathcal{H}}^2\right).\label{decopm2}
\end{align}
For the first term $\widetilde{\mathcal{L}}(\hat{\theta}_{i+1}(\tau_{i+1})) - \widetilde{\mathcal{L}}(\theta_{i+1}(\tau_{i+1}))$, we have:
\begin{align}
    &\sqrt{\widetilde{\mathcal{L}}(\hat{\theta}_{i+1}(\tau_{i+1}))}-\sqrt{\widetilde{\mathcal{L}}(\theta_{i+1}(\tau_{i+1}))}\lesssim \left\{ \mathbb{E}_{t\sim\mathcal{U}(0,T)} \mathbb{E}_{x_i(t)\sim p_{i,t}} \left[ \lambda(t) \left\| s_{t,\hat{\theta}_{i+1}(\tau_{i+1})}(x_i(t)) - s_{t,\theta_{i+1}(\tau_{i+1})}(x_i(t)) \right\|^2 \right] \right\}^{\frac{1}{2}}=\notag\\
    &\left\{
    \mathbb{E}_{t\sim\mathcal{U}(0,T)} \mathbb{E}_{x_i(t)\sim p_{i,t}} \left[
        \lambda(t) \left\|
            \frac{1}{m_i} \sum_{j=1}^{m_i} \hat{\alpha}_j(\tau_{i+1}) \sigma(w_j^\top x_i(t) + u_j^\top e(t)) - 
            \frac{1}{m_i} \sum_{j=1}^{m_i} \alpha_j(\tau_{i+1}) \sigma(w_j^\top x_i(t) + u_j^\top e(t))
        \right\|^2
    \right]
\right\}^{\frac{1}{2}}\notag
\end{align}
Furthermore, by the triangle inequality, the Cauchy-Schwartz inequality, and the fact that $\sigma(y) = ReLU(y) \leq |y|$ for any $y\in\mathbb{R}$, Hölder's inequality, the positive homogeneity property of the ReLU activation, and the boundedness of the input data, embedding function $e(t)$, and weighting function $\lambda(t)$, we have:
\begin{align}
&\left\|\frac{1}{m_i} \sum_{j=1}^{m_i}(\hat{\alpha}_j(\tau_{i+1})- \alpha_j(\tau_{i+1}))\sigma(w_j^\top x_i(t) + u_j^\top e(t))\right\|^2\notag \\
& \leq \frac{1}{{m_i}^2} 
    \sum_{j=1}^{m_i} \left\| \hat{\alpha}_j(\tau_{i+1})- \alpha_j(\tau_{i+1}) \right\|^2\sum_{j=1}^{m_i}|\sigma\left(w_j^\top x_i(t) + u_j^\top e(t)\right)|^2 \notag \\
& \leq \frac{2}{{m_i}^2} 
    \sum_{j=1}^{m_i} \left\| \hat{\alpha}_j(\tau_{i+1})- \alpha_j(\tau_{i+1}) \right\|^2\sum_{j=1}^{m_i}(|w_j^\top x_i(t)|^2+|u_j^\top e(t)|^2) \notag \\
& \leq \frac{2}{{m_i}^2} 
    \sum_{j=1}^{m_i} \left\| \hat{\alpha}_j(\tau_{i+1})- \alpha_j(\tau_{i+1}) \right\|^2\sum_{j=1}^{m_i} ( \|w_j\|_1^2\|x_i(t)\|_{\infty}^2+\|u_j\|_1^2\|e(t)\|_{\infty}^2)\notag \\
&\lesssim\frac{1}{{m_i}^2} 
    \sum_{j=1}^{m_i} \left\| \hat{\alpha}_j(\tau_{i+1})- \alpha_j(\tau_{i+1}) \right\|^2\sum_{j=1}^{m_i}(C^2_{T,\delta}+C^2_{T,e}),
\end{align}
where $C_{T,\delta}$ and $C_{T,e}$ are constants arising from the boundedness of  $\|x\|_{\infty}$ and $e(t)$. Then, we have
\begin{align}
\sqrt{\widetilde{\mathcal{L}}(\hat{\theta}_{i+1}(\tau_{i+1}))}-\sqrt{\widetilde{\mathcal{L}}(\theta_{i+1}(\tau_{i+1}))}&\lesssim\left[\frac{1}{m_i} 
    \sum_{j=1}^{m_i} \left\| \hat{\alpha}_j(\tau_{i+1})- \alpha_j(\tau_{i+1}) \right\|^2(C^2_{T,\delta}+C^2_{T,e})\right]^{\frac{1}{2}}.\notag \\
&\lesssim(C_{T,\delta}+C_{T,e})\left[\frac{1}{m_i} 
    \sum_{j=1}^{m_i} \left\| \hat{\alpha}_j(\tau_{i+1})- \alpha_j(\tau_{i+1}) \right\|^2\right]^{\frac{1}{2}}.\notag
\end{align}
Furthermore, we conclude that:
\begin{align}
\widetilde{\mathcal{L}}&(\hat{\theta}_{i+1}(\tau_{i+1})) - \widetilde{\mathcal{L}}(\theta_{i+1}(\tau_{i+1}))\notag \\
\lesssim&\frac{1}{m_i} 
    \sum_{j=1}^{m_i} \left\| \hat{\alpha}_j(\tau_{i+1})- \alpha_j(\tau_{i+1}) \right\|^2(C^2_{T,\delta}+C^2_{T,e})+\sqrt{\widetilde{\mathcal{L}}(\theta_{i+1}(\tau_{i+1}))}(C_{T,\delta}+C_{T,e})\left[\frac{1}{m_i} 
    \sum_{j=1}^{m_i} \left\| \hat{\alpha}_j(\tau_{i+1})- \alpha_j(\tau_{i+1}) \right\|^2\right]^{\frac{1}{2}}\notag \\
\lesssim&\sqrt{\widetilde{\mathcal{L}}(\theta_{i+1}^*)+\frac{1}{\tau} \left( \|\mathbf{s}_{0,\theta_0}\|_{\mathcal{H}}^2 + \|\mathbf{s}_{0,\theta^*}\|_{\mathcal{H}}^2 \right)}(C_{T,\delta}+C_{T,e})\left[\frac{1}{m_i} 
    \sum_{j=1}^{m_i} \left\| \hat{\alpha}_j(\tau_{i+1})- \alpha_j(\tau_{i+1}) \right\|^2\right]^{\frac{1}{2}}\notag \\ 
    &+\frac{1}{m_i} 
    \sum_{j=1}^{m_i} \left\| \hat{\alpha}_j(\tau_{i+1})- \alpha_j(\tau_{i+1}) \right\|^2(C^2_{T,\delta}+C^2_{T,e}), \label{39}
\end{align}
where the last equality follows from lemma \ref{lemm4}. We further deduce that:
\begin{align}
&\frac{1}{m_i}\sum_{j=1}^{m_i}\left\|\hat{\alpha}_j(\tau_{i+1})- \alpha_j(\tau_{i+1})\right\|_2^2 \notag\\
&=\frac{1}{m_i}\sum_{j=1}^{m_i}\left\|\int_0^{\tau_{i+1}}\frac d{d\tau}(\hat{\alpha}_j(\tau)-\alpha_j(\tau))d\tau\right\|_2^2 \notag\\
&=\frac{1}{m_i}\sum_{j=1}^{m_i}\left\|\int_0^{\tau_{i+1}}\left(\nabla_{\theta^j_{i+1}(\tau)}\tilde{\mathcal{L}}(\theta_{i+1}(\tau))-\nabla_{\hat{\theta}^j_{i+1}(\tau)}\hat{\tilde{\mathcal{L}}}_{n_i}(\hat{\theta}_{i+1}(\tau))\right)d\tau\right\|_2^2 \notag\\
&=\frac1{{m_i}^2}\sum_{j=1}^{m_i}\left\|\int_0^{\tau_{i+1}}\left(2\mathbb{E}_{t\sim\mathcal{U}(0,T)}\left[\lambda(t)\mathbb{E}_{x_i(t)\sim p_{i,t}}\left[\left(s_{t,\theta_{i+1}(\tau)}(x_i(t))-\nabla_{x_i(t)}\log p_{i,t}(x_i(t))\right)\sigma(w_j^\top x_i(t)+u_j^\top e(t))\right]\right]\right.\right. \notag \\
&\left.\left.-2\mathbb{E}_{t\sim\mathcal{U}(0,T)}\left[\lambda(t)\mathbb{E}_{x_i(t)\sim \hat{p}_{i,t}}\left[\left(s_{t,\hat{\theta}_{i+1}(\tau)}(x_i(t))-\nabla_{x_i(t)}\log p_{i,t}(x_i(t))\right)\sigma(w_j^{\top}x_i(t)+u_j^{\top}e(t))\right]\right]\right)d\tau\right\Vert_2^2, \label{40}
\end{align}
where $\hat{p}_{i,t}$ denotes the standard empirical distribution of $p_{i,t}$. Note that:
\begin{align}
    \| \theta \|_2^2 = \| \text{vec}(A) \|_2^2 / m = \| A \|_F^2 / m = \| s_{0,\theta} \|_{\mathcal{H}}^2,
\end{align}
By lemma \ref{lemma5}, we get $\|\theta(\tau)\|_2=\|s_{0,\theta(\tau)}\|_{\mathcal{H}}\lesssim\|s_{0,\theta_0}\|_{\mathcal{H}} + \sqrt{\tau/m}$. For any $t \in [0, T]$, we define the function space as follows:
\begin{align}
    \mathcal{F}_t:=\{f_1(x_i(t);\theta_1(\tau))f_2(x_i(t);\theta_2):f_1\in\mathcal{F}_{1,t},f_2\in\mathcal{F}_{2,t}\},
\end{align}
where 
\begin{align}
&\mathcal{F}_{1,t} :=\left\{s_{t,\theta_{i+1}(\tau)}(x_i(t))-\nabla_{x_i(t)}\log p_{i,t}(x_i(t)):\|\theta_{i+1}(\tau)\|_2\lesssim\|s_{0,\theta_0}\|_{\mathcal{H}}+\sqrt{\tau/m_i}\right\},  \notag\\
&\mathcal{F}_{2,t} :=\left\{\sigma(w^\top x_i(t)+u^\top e(t)):\|w\|_1+\|u\|_1\leq1\right\}. \notag
\end{align}
Subsequently, according to Lemma \ref{rademacher}, for any $\delta \in (0,1)$, there exists a probability of at least $1 - \delta$ that the dataset $\mathcal{D}_{x_i} = \{x^j_i\}^{n_i}_{j=1}$ chosen satisfies the following: 
\begin{align}
&\mathbb{E}_{x_i(t)\sim p_{i,t}}\left[\left(s_{t,\theta_{i+1}(\tau)}(x_i(t))-\nabla_{x_i(t)}\log p_{i,t}(x_i(t))\right)\sigma(w_j^\top x_i(t)+u_j^\top e(t))\right] \notag \\
&- \mathbb{E}_{x_i(t)\sim \hat{p}_{i,t}}\left[\left(s_{t,\hat{\theta}_{i+1}(\tau)}(x_i(t))-\nabla_{x_i(t)}\log p_{i,t}(x_i(t))\right)\sigma(w_j^{\top}x_i(t)+u_j^{\top}e(t))\right] \notag\\
&\leq \left|\mathbb{E}_{x_i(t)\sim p_{i,t}}\left[\left(s_{t,\theta_{i+1}(\tau)}(x_i(t))-\nabla_{x_i(t)}\log p_{i,t}(x_i(t))\right)\sigma(w_j^\top x_i(t)+u_j^\top e(t))\right] \right. \notag\\
&\quad - \left. \mathbb{E}_{x_i(t)\sim \hat{p}_{i,t}}\left[\left(s_{t,\theta_{i+1}(\tau)}(x_i(t))-\nabla_{x_i(t)}\log p_{i,t}(x_i(t))\right)\sigma(w_j^\top x_i(t)+u_j^\top e(t))\right]\right| \notag\\
&\quad + \left| \mathbb{E}_{x_i(t)\sim \hat{p}_{i,t}}\left[\left(s_{t,\theta_{i+1}(\tau)}(x_i(t))-\nabla_{x_i(t)}\log p_{i,t}(x_i(t))\right)\sigma(w_j^\top x_i(t)+u_j^\top e(t))\right] \right. \notag\\
&\quad - \left. \mathbb{E}_{x_i(t)\sim \hat{p}_{i,t}}\left[\left(s_{t,\hat{\theta}_{i+1}(\tau)}(x_i(t))-\nabla_{x_i(t)}\log p_{i,t}(x_i(t))\right)\sigma(w_j^{\top}x_i(t)+u_j^{\top}e(t))\right] \right| \notag\\
&\lesssim \mathcal{R}_{n_i}(\mathcal{F}_t) + \sup_{f\in\mathcal{F}_t, x_i(t)\in[-C_{T,\delta},C_{T,\delta}]^d} \left| f(x_i(t)) \right| \sqrt{\frac{\log(2/\delta)}{n_i}} \notag\\
&\quad + \mathbb{E}_{x_i(t)\sim \hat{p}_{i,t}} \left[ \left( s_{t,\theta_{i+1}(\tau)}(x_i(t)) -  s_{t,\hat{\theta}_{i+1}(\tau)}(x_i(t)) \right) \sigma(w_j^\top x_i(t) + u_j^\top e(t)) \right] \quad := J_1 + J_2 + J_3, \label{43}
\end{align}
where $\mathcal{R}_{n_i}(\mathcal{F}_t)$ represents the empirical Rademacher complexity of the function space $\mathcal{F}_t$ on the dataset $\mathcal{D}_{x_i} = \{x^j_i\}^{n_i}_{j=1}$. For the $J_1$ component, as per Lemma A.6 in \citet{wu2023implicit}, we obtain: 
\begin{align}
\mathcal{R}_{n_i}(\mathcal{F}_t) \leq \left( \sup_{f_1 \in \mathcal{F}_{1,t}, x_i(t) \in [-C_{T,\delta}, C_{T,\delta}]^d} |f_1(x_i(t))| + \sup_{f_2 \in \mathcal{F}_{2,t}, x_i(t) \in [-C_{T,\delta}, C_{T,\delta}]^d} |f_2(x_i(t))| \right)\left( \mathcal{R}_{n_i}(\mathcal{F}_{1,t}) + \mathcal{R}_{n_i}(\mathcal{F}_{2,t}) \right). \label{rad1}
\end{align}
Note that
\begin{align}
|\sigma(w^\top x_i(t) + u^\top e(t))|\lesssim\|w\|_1\|x_i(t)\|_{\infty}+\|u\|_1\|e(t)\|_{\infty}\lesssim C_{T,\delta} + C_{T,e}. \label{45}
\end{align}
Then we obtain:
\begin{align}
\left| s_{t,\theta_{i+1}(\tau)}(x_i(t)) \right| &= \left| \frac{1}{\sqrt{m_i}} \sum_{j=1}^{m_i} \theta_{i+1}^j(\tau) \sigma(w_j^\top x_i(t) + u_j^\top e(t)) \right|\notag\\
&\lesssim (C_{T,\delta} + C_{T,e}) \left\| \theta_{i+1}(\tau) \right\|_2 \notag\\
&\lesssim (C_{T,\delta} + C_{T,e}) \left( \left\| s_{0,\theta_0} \right\|_{\mathcal{H}} + \sqrt{\tau/m_i} \right), \label{46}
\end{align}
Subsequently, we define $C'_{T,\delta} := \max_{x_i(t) \in [-C_{T,\delta}, C_{T,\delta}]^d} \left| \nabla_{x_i(t)} \log p_{i,t}(x_i(t)) \right|.$ Then, we can deduce that:
\begin{align}
    &|f_1(x_i(t))|=|s_{t,\theta_i+1}(\tau) (x_i(t)) - \nabla_{x_i(t)} \log p_{i,t}(x_i(t))|\lesssim (C_{T,\delta} + C_{T,e}) \left( \left\| s_{0,\theta_0} \right\|_{\mathcal{H}} + \sqrt{\tau/m_i} \right)+C'_{T,\delta},\notag\\
    &|f_2(x_i(t))|=\sigma(w^\top x_i(t) + u^\top e(t))\lesssim C_{T,\delta} + C_{T,e}.\notag
\end{align}
By substituting the above two equations into inequality \ref{rad1}, we can obtain
\begin{align}
    \mathcal{R}_{n_i}(\mathcal{F}_t)\lesssim(C_{T,\delta} + C_{T,e}) \left( \left\| s_{0,\theta_0} \right\|_{\mathcal{H}} + \sqrt{\tau/m_i}+1 \right)\left( \mathcal{R}_{n_i}(\mathcal{F}_{1,t}) + \mathcal{R}_{n_i}(\mathcal{F}_{2,t}) \right). \label{49}
\end{align}
Define $\mathcal{F}'_{1,t} :=\left\{s_{t,\theta_{i+1}(\tau)}(x_i(t)):\|\theta_{i+1}(\tau)\|_2\lesssim\|s_{0,\theta_0}\|_{\mathcal{H}}+\sqrt{\tau/m_i}\right\}$. According to Lemma 26.6 in \cite{shalev2014understanding}, we get $\mathcal{R}_{n_i}(\mathcal{F}_{1,t})\leq\mathcal{R}_{n_i}(\mathcal{F}'_{1,t})$. Then, by the definition of Rademacher complexity, we obtain:
\begin{align}
    \mathcal{R}_{n_i}(\mathcal{F}'_{1,t}) &= \frac{1}{n_i}\mathbb{E}_{\boldsymbol{\xi}} \left[ \sup_{\|\theta_{i+1}(\tau)\|_2\lesssim\|s_{0,\theta_0}\|_{\mathcal{H}}+\sqrt{\tau/m_i}} \left| \sum_{j=1}^{n_i} \xi_j s_{t,\theta_{i+1}(\tau)}(x_i^j(t)) \right| \right]\notag \\
    &\lesssim\frac{1}{n_i}\left( \left\| s_{0,\theta_0} \right\|_{\mathcal{H}} + \sqrt{\tau/m_i} \right)\mathbb{E}_{\boldsymbol{\xi}}\left[\sup_{\|w\|_1 + \|u\|_1 \leq 1} \left|\sum_{j=1}^{n_i} \xi_j  \sigma(w^\top x_i^j(t) + u^\top e(t))
 \right| \right].\notag
\end{align}
where the $\{\xi_j\}_{j=1}^{n_i}$ are independent random variables with the distribution $\mathbb{P}(\xi_j = 1) = \mathbb{P}(\xi_j = -1) = \frac{1}{2}$. Furthermore, leveraging the principle of symmetry, we derive:
\begin{align}
\mathbb{E}_{\boldsymbol{\xi}}\left[\frac{1}{n_i} \sup_{\|w\|_1 + \|u\|_1 \leq 1} \left|\sum_{j=1}^{n_i} \xi_j  \sigma(w^\top x_i^j(t) + u^\top e(t))
 \right| \right]&\leq 2\mathbb{E}_{\boldsymbol{\xi}}\left[ \frac{1}{n_i}\sup_{\|w\|_1 + \|u\|_1 \leq 1} \sum_{j=1}^{n_i} \xi_j \sigma(w^\top x_i^j(t) + u^\top e(t))\right]\notag \\
 &=2\mathcal{R}_{n_i}(\mathcal{F}_{2,t}).\notag
\end{align}
Then, we can get $\mathcal{R}_{n_i}(\mathcal{F}_{1,t})\leq\mathcal{R}_{n_i}(\mathcal{F}'_{1,t})\lesssim (\left\| s_{0,\theta_0} \right\|_{\mathcal{H}} + \sqrt{\tau/m_i})\mathcal{R}_{n_i}(\mathcal{F}_{2,t})$. According to Lemma 26.9 (Contraction lemma) and Lemma 26.11 in \citet{shalev2014understanding}, we have
\begin{align}
    \mathcal{R}_{n_i}(\mathcal{F}_{2,t})) \leq \left(\|x_i(t)\|_{\infty} + \|e(t)\|_{\infty}\right) \sqrt{\frac{2 \log(4d)}{n_i}}\lesssim (C_{T,\delta} + C_{T,e}) \sqrt{\frac{\log d}{n_i}}. \notag
\end{align}
Combining the aforementioned results with Equation \ref{49}, we subsequently obtain:
\begin{align}
    J_1=\mathcal{R}_{n_i}(\mathcal{F}_{t}))\lesssim(C_{T,\delta} + C_{T,e})^2(\left\| s_{0,\theta_0} \right\|_{\mathcal{H}} + \sqrt{\tau/m_i}+1)^2\sqrt{(\log d)/n_i}. \label{J1}
\end{align}
For $J_2$, by equation \ref{45} and \ref{46}, we have:
\begin{align}
    |f(x_i(t))|&=|(s_{t,\theta_i+1}(\tau)(x_i(t)) - \nabla_{x_i(t)} \log p_{i,t}(x_i(t)))| |\sigma(w_j^\top x_i(t) + u_j^\top e(t))|\notag \\
    &\lesssim(C_{T,\delta} + C_{T,e})^2 (\|s_{0,\theta_0}\|_{\mathcal{H}} + \sqrt{\tau/m_i}) + C'_{T,\delta}(C_{T,\delta} + C_{T,e}).\notag
\end{align}
Thus, we derive:
\begin{align}
    J_2=\sup_{f\in\mathcal{F}_t, x_i(t)\in[-C_{T,\delta},C_{T,\delta}]^d} \left| f(x_i(t)) \right| \sqrt{\frac{\log(2/\delta)}{n_i}}\lesssim(C_{T,\delta} + C_{T,e})^2(\|s_{0,\theta_0}\|_{\mathcal{H}} + \sqrt{\tau/m_i}+1)\sqrt{\frac{\log(1/\delta)}{n_i}}.\label{J2}
\end{align}
Similarly, in the case of $J_3$, we have:
\begin{align}
    J_3&= \mathbb{E}_{x_i(t)\sim \hat{p}_{i,t}} \left[ \left( s_{t,\theta_{i+1}(\tau)}(x_i(t)) -  s_{t,\hat{\theta}_{i+1}(\tau)}(x_i(t)) \right) \sigma(w_j^\top x_i(t) + u_j^\top e(t)) \right]\notag \\
    &\lesssim(C_{T,\delta} + C_{T,e})^2(\|s_{0,\theta_0}\|_{\mathcal{H}} + \sqrt{\tau/m_i}).\label{J3}
\end{align}
By integrating Equation \ref{40} with Equation \ref{43}, we derive the following result:
\begin{align}
\frac{1}{m_i}\sum_{j=1}^{m_i}\left\|\hat{\alpha}_j(\tau_{i+1})-\alpha_j(\tau_{i+1}) \right\|_2^2&\lesssim\frac{1}{{m_i}^2} \sum_{i=1}^{m_i} \left\| \int_{0}^{\tau_{i+1}} \mathbb{E}_{t\sim u(0,T)} [\lambda(t) (J_1 + J_2 + J_3)\boldsymbol{1}_d] d\tau \right\|_2^2\notag\\
&\lesssim(J_1 + J_2 + J_3)^2\tau_{i+1}^2\frac{d}{m_i}.\notag
\end{align}
Upon substituting Equations \ref{J1}, \ref{J2}, and \ref{J3} into the aforementioned equation, we subsequently obtain:
\begin{align}
    \frac{1}{m_i}\sum_{j=1}^{m_i}\left\|\hat{\alpha}_j(\tau_{i+1})-\alpha_j(\tau_{i+1}) \right\|_2^2\lesssim\tau_{i+1}^2\frac{d}{m_i}(C_{T,\delta} + C_{T,e})^4\left((\|s_{0,\theta_0}\|_{\mathcal{H}}^4 +\frac{\tau_{i+1}^2}{{m_i}^2}+1)\frac{\log(d/\delta)}{n_i}+\|s_{0,\theta_0}\|_{\mathcal{H}}^2 +\frac{\tau_{i+1}}{m_i}\right). \notag
\end{align}
By inserting the previously discussed equation into Equation \ref{39}, we then obtain:
\begin{align}
&\widetilde{\mathcal{L}}(\hat{\theta}_{i+1}(\tau_{i+1})) - \widetilde{\mathcal{L}}(\theta_{i+1}(\tau_{i+1}))\notag \\
&\lesssim\tau_{i+1}\sqrt{\frac{d}{m_i}}\left(\sqrt{\widetilde{\mathcal{L}}(\theta_{i+1}^*)}+\frac{1}{\sqrt{\tau}}(\|s_{0,\theta_0}\|_{\mathcal{H}} + \|s_{0,\theta_*}\|_{\mathcal{H}})\right)\left((\|s_{0,\theta_0}\|_{\mathcal{H}}^2 +\frac{\tau_{i+1}}{m_i}+1)\sqrt{\frac{\log(d/\delta)}{n_i}}+\|s_{0,\theta_0}\|_{\mathcal{H}} +\sqrt{\frac{\tau_{i+1}}{m_i}}\right) \notag \\
& \quad+\tau_{i+1}^2\frac{d}{m_i}\left((\|s_{0,\theta_0}\|_{\mathcal{H}}^4 +\frac{\tau_{i+1}^2}{{m_i}^2}+1)\frac{\log(d/\delta)}{n_i}+\|s_{0,\theta_0}\|_{\mathcal{H}}^2 +\frac{\tau_{i+1}}{m_i}\right)
 \notag \\
&\lesssim\tau_{i+1}\sqrt{\frac{d}{m_i}}\left((\frac{\|A_0\|_{F}^2}{m_i} +\frac{\tau_{i+1}}{m_i}+1)\sqrt{\frac{\log(d/\delta)}{n_i}}+\frac{\|A_0\|_{F}}{\sqrt{m_i}} +\sqrt{\frac{\tau_{i+1}}{m_i}}\right)\left(\sqrt{\widetilde{\mathcal{L}}(\theta_{i+1}^*)}+\frac{\|A_0\|_{F}+\|A_*\|_{F}}{\sqrt{m_i}}\right.  \notag \\
&\left. \quad+\tau_{i+1}\sqrt{\frac{d}{m_i}}\left((\frac{\|A_0\|_{F}^2}{m_i} +\frac{\tau_{i+1}}{m_i}+1)\sqrt{\frac{\log(d/\delta)}{n_i}}+\frac{\|A_0\|_{F}}{\sqrt{m_i}} +\sqrt{\frac{\tau_{i+1}}{m_i}}\right)\right)\notag \\
&\lesssim \tau_{i+1}\sqrt{\frac{d}{m_i}}\sqrt{\widetilde{\mathcal{L}}(\theta_{i+1}^*)}+\frac{\tau_{i+1}^2d\log(d/\delta)}{m_in_i}+\frac{\tau_{i+1}^4d\log(d/\delta)}{{m_i}^3n_i}+\frac{\tau_{i+1}^3d}{{m_i}^2}.
\end{align}
By incorporating the above equation along with Equations \ref{decom1} and \ref{decopm2} into Equation \ref{decomp diffus}, we consequently obtain:
\begin{align}
\widetilde{\mathcal{L}}(\hat{\theta}_{i+1}(\tau_{i+1}))\lesssim\frac{\tau_{i+1}^2d\log(d/\delta)}{mn_i}+\frac{\tau_{i+1}^4d\log(d/\delta)}{m^3n_i}+\frac{\tau_{i+1}^3d}{{m_i}^2}+\frac{\log^2(1/\delta^2)}{m_i}d+\widetilde{\mathcal{L}}(\theta^*_{i+1})+\bar{\tilde{\mathcal{L}}}(\bar{\theta}_{i+1}^*)+\frac{1}{\tau_{i+1}}.
\end{align}
Through the integration of the aforementioned equation with Equations \ref{song} and \ref{pinsker}, we consequently arrive at the following result:
\begin{align}
&TV(p_{\theta_{i+1}}, p_i)\notag \\
&\lesssim\sqrt{\frac{\tau_{i+1}^2d\log(d/\delta)}{m_in_i}}+\sqrt{\frac{\tau_{i+1}^4d\log(d/\delta)}{{m_i}^3n_i}}+\sqrt{\frac{\tau_{i+1}^3d}{{m_i}^2}}+\sqrt{\frac{1}{\tau_{i+1}}}+\sqrt{\widetilde{\mathcal{L}}(\theta^*_{i+1})}+\sqrt{\bar{\tilde{\mathcal{L}}}(\bar{\theta}_{i+1}^*)}+\sqrt{KL(p_{i,T}\|\pi)}.\notag
\end{align}
Let $m_i\asymp n_i$.  Upon choosing $\tau_{i+1} \asymp n_i^{1/2}$, we omit terms $\sqrt{\widetilde{\mathcal{L}}(\theta^*_{i+1})}$ and $\sqrt{\bar{\tilde{\mathcal{L}}}(\bar{\theta}_{i+1}^*)}$. Consequently, we obtain the following result:
\begin{align}\label{diffusion generalization bound}
TV(p_{\theta_{i+1}}, p_i)\lesssim\frac{(d\log(d/\delta))^{1/2}}{n_i^{1/4}}+\sqrt{KL(p_{i,T}\|\pi)}.
\end{align}
Upon substituting the above equation into Equation \ref{32}, we obtain:
\begin{align}
 TV(p_{\theta_{i+1}},p_0)\lesssim(d\log(d/\delta)^\frac{1}{2})/n_i^{\frac{1}{4}}+\sqrt{KL(p_{i,T}\|\pi)}+\beta^i_{i}TV(p_{\theta_i},p_0)+\cdots+\beta^{1}_{i}TV(p_{\theta_{1}},p_0). 
\end{align}
Analogous to the proof analysis process of Theorem \ref{general theorem}, we can derive the following through recursive methods, with probability at least $1-\delta$:
\begin{align}
    TV(p_{\theta_{i+1}},p_0)\lesssim \sum_{k=0}^i A_{i-k} \left(n_{i-k}^{-\frac{1}{4}}\sqrt{d\log \frac{di}{\delta}}+\sqrt{KL(p_{i-k,T}\|\pi)}\right),\notag
    \end{align}
   where $A_i=1, A_{i-k}=\sum_{j=i-k+1}^i \beta_j^{i-k+1} A_j$ for $1\leq k \leq i$. The proof is completed.

\end{proof}

\section{Proof of Corollary \ref{full synthetic data cycle}}
In this section, we present the proof of Corollary \ref{full synthetic data cycle}, which considers the most extreme case of full synthetic data.
\begin{proof}[Proof of Corollary \ref{full synthetic data cycle}] Similar to the proof process of Theorem \ref{diffusion model}, we first decompose using the triangle inequality:
 \begin{align}
    TV(p_{\theta_{i+1}},p_0)\leq TV(p_{\theta_{i+1}}, p_i)+TV(p_i, p_0).
\end{align}
Utilizing inequality \ref{diffusion generalization bound}, we can establish the following with a probability of at least $1-\delta$: 
\begin{align}
   TV(p_{\theta_{i+1}}, p_i)
   \lesssim \frac{(d\log(d/\delta))^{1/2}}{n_i^{1/4}}+\sqrt{KL(p_{i,T}\|\pi)}, \notag
\end{align}
Given that the definition of $p_i$ in the full synthetic data cycle is $p_i=p_{\theta_i}$, we can deduce the following with a probability of at least $1-\delta$:
\begin{align}
TV(p_{\theta_{i+1}},p_0)&\lesssim \frac{(d\log(d/\delta))^{1/2}}{n_i^{1/4}}+\sqrt{KL(p_{i,T}\|\pi)}+TV(p_{i},p_0).\notag \\ 
& =\frac{(d\log(d/\delta))^{1/2}}{n_i^{1/4}}+\sqrt{KL(p_{i,T}\|\pi)}+TV(p_{\theta_i},p_0) \notag
\end{align}
Since the initial term $TV(p_{\theta_1},p_0)$ can be expressed as follows:
\begin{align}
    TV(p_{\theta_1},p_0)\lesssim  \frac{(d\log(d/\delta))^{1/2}}{n_0^{1/4}}+\sqrt{KL(p_{0,T}\|\pi)}.\notag
\end{align}
Consequently, by solving recursively, we can obtain the following result, with probability at least $1-\delta$:
\begin{align}
    TV(p_{\theta_{i+1}},p_0)\lesssim \sum_{k=1}^i\left(\frac{(d\log(di/\delta))^{1/2}}{n_k^{1/4}}+\sqrt{KL(p_{k,T}\|\pi)}\right). \notag
\end{align}
The proof is completed.
\end{proof}

\section{Proof of Corollary \ref{balanced data corollary }}
In this section, we present the proof of Corollary \ref{balanced data corollary }, which examines the scenario of balanced data cycle.

\begin{proof}[Proof of Corollary \ref{balanced data corollary }]
Utilizing the triangle inequality of the TV distance, we can deduce the following: 
\begin{align}
    TV(p_{\theta_{i+1}},p_0)\leq TV(p_{\theta_{i+1}}, p_i)+TV(p_i, p_0).
\end{align}
By employing inequality \ref{diffusion generalization bound}, we can establish the following with a probability of at least $1-\delta$:
\begin{align}
     TV(p_{\theta_{i+1}},p_0)\lesssim \frac{(d\log(d/\delta))^{1/2}}{n_i^{1/4}}+\sqrt{KL(p_{i,T}\|\pi)}+TV(p_i, p_0).\label{coroally 4 1}
\end{align}
Recalling the definition of TV distance, we initially expound on the term $TV(p_{i},p_0)$ as follows:
\begin{align}
TV(p_i,p_0)&=\frac{1}{2}\int |p_i(x)-p_0(x)|\ dx. \notag\\
&\leq \frac{1}{2}\int |\frac{1}{i+1}(p_0(x)+p_{\theta_1}(x)+p_{\theta_2}(x)+\cdots+p_{\theta_i}(x))-p_0(x)|\ dx \notag\\
&\leq \frac{1}{2}\frac{1}{i+1}\int |p_{\theta_1}(x)-p_0(x)|+|p_{\theta_2}(x)-p_0(x)|+\cdots+|p_{\theta_i}(x)-p_0(x)| \ dx\notag \\
&\leq \frac{1}{i+1} \sum_{j=1}^iTV(p_{\theta_j}, p_0).
\end{align}

Substituting the aforementioned inequality into Inequality \ref{coroally 4 1}, we obtain:
\begin{align}
     TV(p_{\theta_{i+1}},p_0)\lesssim \frac{(d\log(d/\delta))^{1/2}}{n_i^{1/4}}+\sqrt{KL(p_{i,T}\|\pi)}+\frac{1}{i+1} \sum_{j=1}^iTV(p_{\theta_j}, p_0).\notag
\end{align}
Specify the function $f(n_i)$ in the following manner: $f(n_i)=\frac{(d\log(d/\delta))^{1/2}}{n_i^{1/4}}+\sqrt{KL(p_{i,T}\|\pi)}$. As a result, we derive the subsequent outcome:
\begin{equation}\label{theorem 212 1}
 TV(p_{\theta_{i+1}},p_0)\lesssim f(n_i)+\frac{1}{i+1}TV(p_{\theta_i},p_0)+\frac{1}{i+1}TV(p_{\theta_{i-1}},p_0)+\cdots+\frac{1}{i+1}TV(p_{\theta_1},p_0).
\end{equation}
From the aforementioned expression, we can additionally infer:
$$
 TV(p_{\theta_i},p_0)\lesssim f(n_{i-1})+\frac{1}{i}TV(p_{\theta_{i-1}},p_0)+\frac{1}{i}TV(p_{\theta_{i-2}},p_0)+\cdots+\frac{1}{i}TV(p_{\theta_1},p_0).
$$
By multiplying both sides of the above equation by $\frac{1}{i+1}$, we obtain the following result:
$$
\frac{1}{i+1}TV(p_{\theta_i},p_0)\lesssim  \frac{1}{i+1}f(n_{i-1})+ \frac{1}{i+1}\frac{1}{i}TV(p_{\theta_{i-1}},p_0)+\cdots+ \frac{1}{i+1}\frac{1}{i}TV(p_{\theta_1},p_0).
$$
Plugging the above inequality into inequality \ref{theorem 212 1}, we obtain:
\begin{align}
&TV(p_{\theta_{i+1}},p_0)\notag\\ 
&\lesssim f(n_i)+\frac{1}{i+1}f(n_{i-1})+(\frac{1}{i+1}+\frac{1}{i+1}\frac{1}{i})TV(p_{\theta_{i-1}},p_0)+\cdots+(\frac{1}{i+1}+\frac{1}{i+1}\frac{1}{i})TV(p_{\theta_1},p_0).
\end{align}
Note that $TV(p_{\theta_1},p_0)\lesssim \frac{(d\log(d/\delta))^{1/2}}{n_0^{1/4}}+\sqrt{KL(p_{0,T}\|\pi)}$, By solving recursively,
\begin{align}
TV(p_{\theta_{i+1}},p_0) \lesssim &f(n_i)+\frac{1}{i+1}f(n_{i-1})+(\frac{1}{i+1}+\frac{1}{i+1}\frac{1}{i})f(n_{i-2})+\cdots+\notag \\
&(\frac{1}{i+1}+\frac{1}{i+1}\frac{1}{i}+\cdots+\frac{1}{i+1}\frac{1}{i}\cdots\frac{1}{2})f(n_{0}).
\end{align}
Thus, we have, with probability at least $1-\delta$,
\begin{align}
 &TV(p_{\theta_{i+1}},p_0) \notag \\ 
 &\lesssim \frac{(d\log(di/\delta))^{1/2}}{n_i^{1/4}}+\sqrt{KL(p_{i,T}\|\pi)}+\sum_{k=0}^{i-1}\sum_{j=k}^{i-1}  \frac{\Gamma(j+2)}{\Gamma(i+2)}\left(\frac{(d\log(di/\delta))^{1/2}}{n_k^{1/4}}+\sqrt{KL(p_{k,T}\|\pi)}\right), \notag
\end{align}
where the Gamma function $\Gamma(j)=(j-1)!$ and $j$ is a positive integer. The proof is complete.
\end{proof}

\section{Proof of Corollary \ref{phase transition}}
In this section, we present the proof of Corollary \ref{phase transition}, which analyzes the phase transition phenomena in error dynamics when increasing synthetic data while keeping real data fixed.
\begin{proof}[Proof of Corollary 5] 
By utilizing the triangle inequality of the TV distance, we can derive the following:
\begin{align}
    TV(p_{\theta_{i+1}},p_0)\leq TV(p_{\theta_{i+1}}, p_i)+TV(p_i, p_0).
\end{align}
By employing inequality \ref{diffusion generalization bound}, we can establish the subsequent statement with a probability of at least $1-\delta$: 
\begin{align}
     TV(p_{\theta_{i+1}},p_0)\lesssim \frac{(d\log(d/\delta))^{1/2}}{(n+m)^{1/4}}+\sqrt{KL(p_{i,T}\|\pi)}+TV(p_i, p_0).\label{coroally 41 1}
\end{align}
Recalling the definition of the total variation distance, let's first elaborate on the expression $TV(p_{i},p_0)$ as follows:
\begin{align}
TV(p_i,p_0)&=\frac{1}{2}\int |p_i(x)-p_0(x)|\ dx. \notag\\
&\leq \frac{1}{2}\int |\frac{n}{n+m}p_0(x)+\frac{m}{n+m}p_{\theta_i}(x)-p_0(x)|\ dx \notag\\
&\leq \frac{1}{2}\frac{m}{n+m}\int |p_{\theta_i}(x)-p_0(x)| \ dx\notag \\
&\leq \frac{m}{n+m} TV(p_{\theta_i}, p_0).
\end{align}

Substituting the aforementioned inequality into Inequality \ref{coroally 41 1}, we obtain:
\begin{align}
     TV(p_{\theta_{i+1}},p_0)\lesssim \frac{(d\log(d/\delta))^{1/2}}{(n+m)^{1/4}}+\sqrt{KL(p_{i,T}\|\pi)}+\frac{m}{n+m} TV(p_{\theta_i}, p_0).\notag
\end{align}
Note that $TV(p_{\theta_1},p_0)\lesssim \frac{(d\log(d/\delta))^{1/2}}{(n+m)^{1/4}}+\sqrt{KL(p_{i,T}\|\pi)}$. Similar to the proof of Corollary \ref{balanced data corollary }, we can solve recursively, with probability at least $1-\delta$,
\begin{align}
 TV(p_{\theta_{i+1}},p_0) \leq \left(1+\frac{m}{n}\right)\left(1-(\frac{m}{n+m})^{i+1}\right)\left(\frac{(d\log(di/\delta))^{1/2}}{(n+m)^{1/4}}+\sqrt{KL(p_{i,T}\|\pi)}\right). \notag
\end{align}
The proof is completed.
\end{proof}

\section{Proof of Lemma \ref{lemma}}
In this section, we present the proof of Lemma \ref{lemma}. The proof utilizes the kernel density estimation framework along with McDiarmid's inequality. We first define the class $s$ kernel as follows:
\begin{definition}\label{kernel definition}
Let $s\geq 1$. A class $s$ kernel is a Borel measurable function $K$ which satisfies
\begin{itemize}
    \item K is symmetric, i.e., $K(-x)=K(x),\ x\in\mathbb{R}^d$.
    \item $\int K=1$.
    \item $\int x^{\alpha}K(x)\ dx=0$ for $1\leq |\alpha|\leq s-1$.
    \item $\int |x^{\alpha}||K(x)|\ dx<\infty$ for $|\alpha|= s$.
    \item $\int (1 + \|x\|^{d+\epsilon}) K(x)^2 \, dx < \infty$ for some $\epsilon>0$
\end{itemize}
 \end{definition}

\begin{proof}[Proof of lemma \ref{lemma}]
Revisiting the definition of the kernel density estimation $\widehat{p}_{i}$, we derive the following:
$$
\widehat{p}_{i+1}(x) = \frac{1}{n_ih_i^d} \sum_{j=1}^{n_i} K \left( \frac{x - x_j}{h_i} \right),
$$
where $K$ belongs to the class $s$. Define $K_{h_i}(u)=(1/h_i^d)K(u/h_i)$. Our objective is to establish a bound for $TV(\widehat{p}_{i+1},p_i)=\frac{1}{2} \int |\widehat{p}_{i+1}(x)-p_i(x)|\ dx$. Consider two sets: $\{x_1, \cdots, x_{n_i}\}$ and $\{x^{\prime}_1, \cdots , x^{\prime}_{n_i}\}$, where $x^{\prime}_j = x_j$ for all $j$ except when $j = t$. As a result, we derive:
\begin{align}
\bigg| \frac{1}{2}\int | \widehat{p}_{i+1}(x; x_1, \ldots, x_n) - p_i(x)| \, dx  &-  \frac{1}{2}\int |\widehat{p}_{i+1}(x; x_1', \ldots, x_n') - p_i(x) | \, dx \bigg| \notag\\
&\leq  \frac{1}{2}\int \left| \widehat{p}_{i+1}(x; x_1, \ldots, x_n) - \widehat{p}_{i+1}(x; x_1', \ldots, x_n') \right| \, dx \notag\\
&\leq \frac{1}{2n_i} \int \left| K_{h_i}(x - x_t) - K_{h_i}(x - x_t') \right| \, dx \notag \\
&\leq \frac{1}{n_ih_i^d} \int |K|. 
\end{align}
Since $\frac{1}{2} \int |\widehat{p}_{i+1}(x)-p_i(x)|\ dx$ is $ \frac{\int |K|}{n_ih_i^d}$-Lipschitz under the Hamming metric, the application of McDiarmid's Inequality (lemma \ref{Mcdiarmid}) yields the ensuing result with a probability of at least $1-\delta$:
\begin{align}
    \left|\frac{1}{2} \int |\widehat{p}_{i+1}(x)-p_i(x)|\ dx-\mathbb{E}\ \frac{1}{2} \int |\widehat{p}_{i+1}(x)-p_i(x)|\ dx \right|\leq \sqrt{\frac{(\int |K|)^2}{2n_ih_i^{2d}}\log \frac{2}{\delta}}. \notag
\end{align}
Then, we obtain:
\begin{align}
 TV(\widehat{p}_{i},p_i)=\frac{1}{2} \int |\widehat{p}_{i+1}(x)-p_i(x)|\ dx \leq   \frac{1}{2} \mathbb{E} \int |\widehat{p}_{i+1}(x)-p_i(x)|\ dx+\sqrt{\frac{(\int |K|)^2}{2n_ih_i^{2d}}\log \frac{2}{\delta}}. \label{decom11}
\end{align}
Next, our objective is to establish a bound for $\mathbb{E} \int |\widehat{p}_{i+1}(x)-p_i(x)|\ dx$. For simplicity, we omit the $(\cdot)$ and $dx$ in our notation. Through decomposition, we can derive the following results:
\begin{align}
\mathbb{E} \int |\widehat{p}_{i+1}-p_i| \leq  \int |p_i * K_{h_i} - p_i| + \mathbb{E}   \int |\widehat{p}_{i+1} - p_i * K_{h_i}|,\label{decomposition}
\end{align}
where $p_i * K_{h_i}(x) = \int p_i(y)K_{h_i}(x - y) \, dy$. The terms on the right-hand side will be called the bias and variation terms of the error. Assume first that $p_i{\in C_0^{\infty}(\mathbb{R}^d)}$, where $C_0^{\infty}(\mathbb{R}^d)$ denotes the space of infinitely differentiable functions with compact support. By Taylor's theorem, 
\begin{align}
p_i(x+y)-p_i(x)=\sum_{j=1}^{s-1}\sum_{|\alpha|=j}\frac{1}{\alpha!}y^{\alpha}\partial^{\alpha}p_i(x)+ 
\sum_{|\alpha|=s}\frac{s!}{\alpha!}\int_{0}^{1}\frac{(1-t)^{s-1}}{(s-1)!}y^{\alpha}\partial^{\alpha}p_i(x+ty)dt. \label{taylor}
\end{align}
Since $K$ is a symmetric kernel, we have $\int K(z) \, dz = 1$ and $\int zK(z) \, dz = 0$. Thus, for $x \in \mathbb{R}^d$, we derive:
\begin{align}
    p_i * K(x)-p_i(x)&=\int (p_i(y)-p_i(x))K(x - y) \, dy \notag\\
    &=\int (p_i(x-y)-p_i(x))K(y) \, dy \notag\\
    &=\int (p_i(x+y)-p_i(x))K(y) \, dy. \label{taylor2}
\end{align}
Substituting Equation \ref{taylor} into Equation \ref{taylor2} and taking into account that $\int x^{\alpha}K(x)\ dx=0$ for $1\leq |\alpha|\leq s-1$, we derive:
\begin{align}
     p_i * K(x)-p_i(x)=\sum_{|\alpha|=s}\frac{s!}{\alpha!}\int \int_{0}^{1}\frac{(1-t)^{s-1}}{(s-1)!}y^{\alpha}\partial^{\alpha}p_i(x+ty)K(y)\ dt\ dy. \label{lemma12}
\end{align}
The integrals with respect to $y$ exist due to the application of Fubini's theorem and the introduction of new variables $\eta = -ty$ and $\tau = t^{-1}$. This holds true for $|\alpha| = s$,
\begin{align}
      \int \int_{0}^{1}&\frac{(1-t)^{s-1}}{(s-1)!}|y^{\alpha}|\ |\partial^{\alpha}p_i(x+ty)|\ |K(y)|\ dt\ dy\notag\\
      &= \int_{0}^{1} \int \frac{(1-t)^{s-1}}{(s-1)!}|y^{\alpha}|\ |\partial^{\alpha}p_i(x+ty)|\ |K(y)|\ dy\ dt \notag \\
      & = \int_{0}^{1} \int \frac{(1-t)^{s-1}}{(s-1)!}t^{-s-d}|\eta^{\alpha}|\ |\partial^{\alpha}p_i(x-\eta)|\ |K(\frac{\eta}{t})|\ d\eta\ dt \notag\\
      &=\int|\partial^{\alpha}p_i(x-\eta)| \int_{1}^{\infty}  \frac{(\tau-1)^{s-1}}{(s-1)!}\tau^{d-1}|\eta^{\alpha}|\ |K(\tau\eta)|\ d\tau\ d\eta. \notag
\end{align}
Since the integral $\int |x^\alpha K(x)| dx$ is finite for $|\alpha| = s$, the last integral is also finite. By repeating the same steps, we can conclude: 
\begin{align}
    \int \int_{0}^{1}\frac{(1-t)^{s-1}}{(s-1)!}y^{\alpha}\partial^{\alpha}p_i(x+ty)K(y)\ dt\ dy=\int \partial^{\alpha}p_i(x-\eta) \ (-1)^s\int_{1}^{\infty}  \frac{(\tau-1)^{s-1}}{(s-1)!}\tau^{d-1}\eta^{\alpha}\ K(\tau\eta)\ d\tau\ d\eta. \notag
\end{align}
By defining the associated kernel $L^{\alpha}(x)$ as $(-1)^{|\alpha|}\int_{1}^{\infty} \frac{(t-1)^{|\alpha|-1}}{(|\alpha|-1)!}t^{d-1}x^{\alpha}\ K(t x)\ dt$, we can then obtain:
\begin{align}
     \int \int_{0}^{1}\frac{(1-t)^{s-1}}{(s-1)!}y^{\alpha}\partial^{\alpha}p_i(x+ty)K(y)\ dt\ dy= \partial^{\alpha}p_i*L^{\alpha} (x). \label{lemma 15}
\end{align}
Plugging Equation \ref{lemma 15} into the Equation \ref{lemma12}, then, we obtain:
\begin{align}
     p_i * K(x)-p_i(x)=\sum_{|\alpha|=s}\frac{s!}{\alpha!}\partial^{\alpha}p_i*L^{\alpha} (x). \label{sob}
\end{align}
When \( p_i \in W^{s,1}(\mathbb{R}^d) \), there are functions \( p_{i,n} \in C_0^\infty(\mathbb{R}^d) \), \( n = 0, 1, \ldots \), such that \( \partial^\alpha p_{i,n} \to \partial^\alpha p_i \) in \( L^1(\mathbb{R}^d) \) for \( |\alpha| \leq s \) \cite{adams2003sobolev}. Since Equation \ref{sob} holds for each \( p_{i,n} \), it also holds for \( p_i \) a.e. on \( \mathbb{R}^d \).
Furthermore, for the kernel $K_{h_i}$, applying Young's inequality, we obtain:
\begin{align}
    \int |p_i * K_{h_i}-p_i| \leq h_i^s \sum_{|\alpha|=s}\frac{s!}{\alpha!} \int |\partial^{\alpha}p_i| \int |L^{\alpha}|=h_i^s \varphi(s,K, p_i). \label{bias}
\end{align}
Given that $p_i \in W^{s,1}(\mathbb{R}^d)$ and $\int |x^{\alpha}||K(x)|\ dx<\infty$ for $|\alpha|= s$, it follows that $\varphi(s,K, p_i)=\sum_{|\alpha|=s}\frac{s!}{\alpha!} \int |\partial^{\alpha}p_i| \int |L^{\alpha}|$ is finite. Thus, the bias tends to zero at least at the rate $h_i^s$. Next, we address the variation term $\mathbb{E}   \int |\widehat{p}_{i+1} - p_i * K_{h_i}|$. By the Schwarz inequality, we obtain: 
\begin{align}
    \mathbb{E}   \int |\widehat{p}_{i+1} - p_i * K_{h_i}|\leq(n_ih_i^d)^{-\frac{1}{2}} \int \sqrt{p_i*(K^2)_{h_i}}.\notag
\end{align}
Let $Q=K^2/ \int K^2$, and by utilizing the inequality $\sqrt{p_i * Q_{h_i}} \leq \sqrt{p_i} + \sqrt{|p_i - p_i * Q_{h_i}|}$, we obtain:
\begin{align}
    \mathbb{E}   \int |\widehat{p}_{i+1} - p_i * K_{h_i}|\leq(n_ih_i^d)^{-\frac{1}{2}} (1+\frac{\int \sqrt{|p_i-p_i*Q_{h_i}|}}{\int \sqrt{p_i}})\sqrt{ \int K^2}\int \sqrt{p_i}.\notag
\end{align}
Let $\gamma(h_i)=\int \sqrt{|p_i-p_i*Q_{h_i}|}/\int \sqrt{p_i}$. Then, our objective is to show $\gamma(h_i)\to 0$ as $h_i \to 0^{+}$. First, we apply Carlson's inequality to get 
\begin{align}
    \int \sqrt{|p_i - p_i * Q_{h_i}|} \leq C \left( \int |p_i - p_i * Q_{h_i}| \right)^{\epsilon/2(\epsilon+d)} \times \left( \int \|x\|^{d+\epsilon} |p_i(x) - p_i * Q_{h_i}(x)| \, dx \right)^{d/2(\epsilon+d)}, \notag
\end{align}
where $C$ is a constant.  Since $ \int |p_i - p_i * Q_{h_i}| $ tends to zero as \( h_i \rightarrow 0^+ \) [\cite{stein1970singular}, Chapt. III], our task is to demonstrate that the second integral remains bounded. Thus, we have
\begin{align}
    \int \|x\|^{d+\epsilon} |p_i(x) - p_i * Q_{h_i}(x)| \, dx \leq \int  \|x\|^{d+\epsilon} p_i(x)\ dx+ \int \|x\|^{d+\epsilon}p_i * Q_{h_i}(x)\ dx. \notag
\end{align}
According to the assumption, we have $\int  \|x\|^{d+\epsilon} p_i(x)\ dx$ is finite. For the second integral, we utilize \(\xi = x - y\), \(\|\xi + y\|^{d+\epsilon} \leq 2^{d+\epsilon- 1}(\|\xi\|^{d+\epsilon} + \|y\|^{d+\epsilon})\), and the fact that \(\int p_i = \int Q_{h_i} = 1\), which implies that:
\begin{align}
\int \|x\|^{d+\epsilon} p_i * Q_{h_i}(x) \, dx 
&= \int \left( \int \|x\|^{d+\epsilon} p_i(x - y) \, dx \right) Q_{h_i}(y) \, dy  \notag\\
&\leq 2^{d+\epsilon-1} \int_{\mathbb{R}^d} \|\mathbf{\xi}\|^{d+\epsilon} p_i(\mathbf{\xi}) \, d\mathbf{\xi} + 2^{d+\epsilon-1} \int \|y\|^{d+\epsilon} Q_{h_i}(y) \, dy.\notag
\end{align}
Since $\int (1 + \|x\|^{d+\epsilon}) K(x)^2 \, dx < \infty \quad $, the above integral is finite. Thus, we have
\begin{align}
    \mathbb{E}   \int |\widehat{p}_{i+1} - p_i * K_{h_i}|\leq(n_ih_i^d)^{-\frac{1}{2}} (1+\gamma(h_i))\sqrt{ \int K^2}\int \sqrt{p_i},\label{variation}
\end{align}
where $\gamma(h_i)\to 0$ as $h_i \to 0^{+}$. Combining the inequality \ref{bias} for the bias term and inequality \ref{variation} for the variation term into the inequality \ref{decomposition}, we obtain:
\begin{align}
    \mathbb{E} \int |\widehat{p}_{i+1}-p_i| \leq h_i^s\varphi(s,K,p_i)+(n_ih_i^d)^{-\frac{1}{2}} (1+\gamma(h_i))\sqrt{ \int K^2}\int \sqrt{p_i}.
\end{align}
Plugging the above inequality into the inequality \ref{decom11}, then, we obtain:
\begin{align}
    TV(\widehat{p}_{i+1}, p_i)\leq (n_ih_i^{2d})^{-\frac{1}{2}}\sqrt{\frac{1}{2}(\int |K|)^2\log \frac{2}{\delta}}+\frac{1}{2}h_i^s\varphi(s,K,p_i)+\frac{1}{2}(n_ih_i^d)^{-\frac{1}{2}} (1+\gamma(h_i))\sqrt{ \int K^2}\int \sqrt{p_i}. \notag
\end{align}
By the choice of $h_i=n_i^{-\frac{1}{2s+2d}}$, then, we obtain:
\begin{align}
   TV(\widehat{p}_{i+1}, p_i)\leq n_i^{-\frac{s}{2s+2d}}\sqrt{\frac{1}{2}(\int |K|)^2\log \frac{2}{\delta}}+\frac{1}{2}n_i^{-\frac{s}{2s+2d}}\varphi(s,K,p_i)+\frac{1}{2}n_i^{-\frac{2s+d}{4s+4d}} (1+\gamma_{n_i})\sqrt{ \int K^2}\int \sqrt{p_i}.
\end{align}
Where $\gamma_{n_i} \to 0$ as $n_i \to \infty$ and $\varphi(s,K,p_i)$ is a finite function. The proof is completed.
\end{proof}

\section{Proof of Theorem \ref{general theorem}}
In this section, we present the proof of Theorem \ref{general theorem}, which targets the scenario of the general data cycle.
\begin{proof}[Proof of Theorem \ref{general theorem}]
By leveraging the triangle inequality of TV distance, we can deduce the following:
\begin{equation}\label{theorem 2 decomp}
 TV(\widehat{p}_{i+1},p_0)\leq TV(\widehat{p}_{i+1},p_i)+TV(p_{i},p_0).
\end{equation}
Revisiting the definition of TV distance, we initially elucidate the term $TV(p_{i},p_0)$ as follows:
\begin{align}
TV(p_i,p_0)&=\frac{1}{2}\int |p_i(x)-p_0(x)| dx \notag\\
&\leq \frac{1}{2}\int |\beta^1_{i}\widehat{p}_{1}(x)+\beta^2_{i}\widehat{p}_{2}(x)+ \cdots+\beta^{i}_{i}\widehat{p}_{i}(x)+\alpha_{i}p_0(x)-p_0(x)| \ dx. \notag
\end{align}
Given that $\beta_i^1+\beta_i^2+\cdots+\beta_i^i+\alpha_i=1$, we can deduce the following:
\begin{align}
  TV(p_i,p_0)&\leq   \frac{1}{2}\int |\beta^1_{i}\widehat{p}_{1}(x)+\beta^2_{i}\widehat{p}_{2}(x)+ \cdots+\beta^{i}_{i}\widehat{p}_{i}(x)-(\beta_i^1+\beta_i^2+\cdots+\beta_i^i)p_0(x)|\ dx\notag \\
  &\leq  \frac{1}{2}\int \beta_i^1 |\widehat{p}_{1}(x)-p_0(x)|+\beta_i^2 |\widehat{p}_{2}(x)-p_0(x)|+\cdots+\beta_i^i |\widehat{p}_{i}(x)-p_0(x)| \ dx\notag \\
  &=\beta^1_{i}TV(\widehat{p}_{1},p_0)+\beta^2_{i}TV(\widehat{p}_{2},p_0)+\cdots+\beta^{i}_{i}TV(\widehat{p}_{i},p_0)\label{lemma 1.2}.
\end{align}
By employing Lemma \ref{lemma}, and considering that $\gamma_{n_i} \to 0$ as $n_i \to \infty$, along with $\varphi(s,K,p_i)$ being a finite function, we can derive the following with a probability of at least $1-\delta$:
\begin{align}
   TV(\widehat{p}_{i+1}, p_i)&\leq n_i^{-\frac{s}{2s+2d}}\sqrt{\frac{1}{2}(\int |K|)^2\log \frac{2}{\delta}}+\frac{1}{2}n_i^{-\frac{s}{2s+2d}}\varphi(s,K,p_i)+\frac{1}{2}n_i^{-\frac{2s+d}{4s+4d}} (1+\gamma_{n_i})\sqrt{ \int K^2}\int \sqrt{p_i}\notag \\
   &\lesssim n_i^{-\frac{s}{2s+2d}}\sqrt{\log(2/\delta)}+n_i^{-\frac{2s+d}{4s+4d}}, \label{17}
\end{align}
where \(\lesssim\) conceals universal positive constants that depend solely on \(K\), \(p_i\) and $s$. By incorporating Inequality \ref{lemma 1.2} and \ref{17} into the Inequality \ref{theorem 2 decomp}, we can consequently derive the following with a probability of at least $1-\delta$:
$$
 TV(\widehat{p}_{i+1},p_0)\lesssim n_i^{-\frac{s}{2s+2d}}\sqrt{\log(2/\delta)}+n_i^{-\frac{2s+d}{4s+4d}}+\beta^i_{i}TV(\widehat{p}_{i},p_0)+\beta^{i-1}_{i}TV(\widehat{p}_{i-1},p_0)+\cdots+\beta^{1}_{i}TV(\widehat{p}_{1},p_0).
$$
Define the function  $f(n_i)$ as follows: $f(n_i)=n_i^{-\frac{s}{2s+2d}}\sqrt{\log(2/\delta)}+n_i^{-\frac{2s+d}{4s+4d}}$. Consequently, we obtain the ensuing result:
\begin{equation}\label{theorem 21 1}
 TV(\widehat{p}_{i+1},p_0)\lesssim f(n_i)+\beta^i_{i}TV(\widehat{p}_{i},p_0)+\beta^{i-1}_{i}TV(\widehat{p}_{i-1},p_0)+\cdots+\beta^{1}_{i}TV(\widehat{p}_{1},p_0).
\end{equation}
From the above expression, we can further deduce:
$$
 TV(\widehat{p}_{i},p_0)\lesssim f(n_{i-1})+\beta^{i-1}_{i-1}TV(\widehat{p}_{i-1},p_0)+\beta^{i-2}_{i-1}TV(\widehat{p}_{i-2},p_0)+\cdots+\beta^{1}_{i-1}TV(\widehat{p}_{1},p_0).
$$
By multiplying both sides of the above equation by $\beta_i^i$, we obtain the following result:
$$
 \beta_i^iTV(\widehat{p}_{i},p_0)\lesssim  \beta_i^if(n_{i-1})+ \beta_i^i\beta^{i-1}_{i-1}TV(\widehat{p}_{i-1},p_0)+ \beta_i^i\beta^{i-2}_{i-1}TV(\widehat{p}_{i-2},p_0)+\cdots+ \beta_i^i\beta^{1}_{i-1}TV(\widehat{p}_{1},p_0).
$$
Plugging the above inequality into inequality \ref{theorem 21 1}, we obtain:
$$
TV(\widehat{p}_{i+1},p_0)\lesssim f(n_i)+\beta^i_{i}f(n_{i-1})+(\beta^{i-1}_{i}+\beta_i^i\beta^{i-1}_{i-1})TV(\widehat{p}_{i-1},p_0)+\cdots+(\beta^{1}_{i}+\beta_i^i\beta^{1}_{i-1})TV(\widehat{p}_{1},p_0).
$$
Define the coefficient preceding the term $f(n_{i})$ as $A_i$. Consequently, we have $A_i=1$ and $A_{i-1}=\beta_i^i$. By adopting a procedure analogous to the one described above, we can derive the following result:
\begin{align}
    A_{i-2}&=\beta^{i-1}_{i}+\beta_i^i\beta^{i-1}_{i-1}=\beta^{i-1}_{i}A_i+\beta^{i-1}_{i-1}A_{i-1}, \notag \\
    A_{i-3}&=\beta^{i-2}_i+\beta^{i}_{i}\beta^{i-2}_{i-1}+(\beta^{i-1}_{i}+\beta^{i}_{i}\beta^{i-1}_{i-1})\beta^{i-2}_{i-2}=\beta^{i-2}_iA_i+\beta^{i-2}_{i-1}A_{i-1}+\beta^{i-2}_{i-2}A_{i-2}. \notag
\end{align}
Based on the aforementioned discussion, we can draw the following conclusion:
\begin{align}
    A_{i-t}=\beta_i^{i-t+1}A_i+\beta_{i-1}^{i-t+1}A_{i-1}+\cdots+\beta_{i-t+1}^{i-t+1}A_{i-t+1}=\sum_{j=i-t+1}^i \beta_j^{i-t+1} A_j, \quad 1\leq t \leq i. \label{theorem 12 2}
\end{align}
Furthermore, for the initial term $TV(\widehat{p}_{1},p_0)$, we have:
\begin{align}
    TV(\widehat{p}_{1},p_0)
    &\lesssim n_0^{-\frac{s}{2s+2d}}\sqrt{\log(2/\delta)}+n_0^{-\frac{2s+d}{4s+4d}} \notag \\
    &=f(n_0).    \label{theorem 21 3}
\end{align}
By synthesizing Inequalities \ref{theorem 21 1}, \ref{theorem 12 2}, and \ref{theorem 21 3}, we arrive at the following result, with probability at least $1-\delta$:
\begin{align}
    TV&(\widehat{p}_{i+1},p_0)\notag \\
    &\lesssim \sum_{k=0}^i A_{i-k} \left(n_{i-k}^{-\frac{s}{2s+2d}}\sqrt{\log(i/\delta)}+n_{i-k}^{-\frac{2s+d}{4s+4d}}\right), \notag \\
\end{align}
where $A_i=1, A_{i-k}=\sum_{j=i-k+1}^i \beta_j^{i-k+1} A_j$ for $1\leq k \leq i$.
\end{proof}

\section{Extensions}\label{extensions}
In this section, we present some extensions of our analyses.

\subsection{Extension to Normalizing Flows}

For Normalizing flows, we follow the setting in \citet{yang2022mathematical}. The theorem is extended as follows:

\begin{theorem}\label{normalizing flows}
Assuming the second moment of $p_i$ is finite for all $i$, we set the base distribution to be the unit Gaussian $\mathcal{N}$. Let the velocity field be modeled by $V$, which belongs to $\mathcal{H}(\mathbb{R}^{d+1},\mathbb{R}^d)$, a space of functions representable as $f_{\mathbf{a}}(\mathbf{x})=\mathbb{E}_{\rho(\mathbf{w},b)}\big[\mathbf{a}(\mathbf{w},b)\left.\sigma(\mathbf{w}\cdot\mathbf{x}+b)\right]$, where $\rho\in P(\mathbb{R}^{d+1})$ is a fixed parameter distribution and $\mathbf{a}\in L^2(\rho,\mathbb{R}^d)$ is a parameter function. Let $G_{V_i}$ denote the flow map defined as $G_{V_i}(x_i(0))=x_i(1),\quad\frac{d}{d\tau}x_i(\tau)=V_i(x_i(\tau),\tau)$, and let the reverse-time flow map for $\tau\in[0,1]$ be defined as $F_{V_i}(x_i(1),\tau)=x_i(\tau),\quad\frac{d}{d\tau}x_i(\tau)=V_i(x_i(\tau),\tau)$. Assuming all the flow-induced norm of optimal $V^*_i$ satisfies $\|V^*_i\|_{\mathcal{F}}=\exp\|V^*_i\|_{\mathcal{H}}\leq R$. Let $n_{i}$ be the number of training samples obtained from the distribution $p_{i}$. Then with probability at least \( 1 - \delta \), we establish that:
\begin{align}
    TV(p_{\theta_{i+1}},p_0)\lesssim \sum_{k=0}^i A_{i-k} \left(n_{i-k}^{-\frac{1}{4}}R\sqrt{1+R^2}\log^{\frac{1}{4}}\frac{i}{\delta}\right),\notag
\end{align}
where $A_i=1, A_{i-k}=\sum_{j=i-k+1}^i \beta_j^{i-k+1} A_j$ for $1\leq k \leq i$.
\end{theorem}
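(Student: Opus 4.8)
\textbf{Proof proposal for Theorem~\ref{normalizing flows}.} The plan is to mirror the three-step strategy used for Theorem~\ref{diffusion model} and Theorem~\ref{general theorem}: isolate a single-generation bound, then unroll the mixture recursion. First I would apply the triangle inequality, $TV(p_{\theta_{i+1}},p_0)\le TV(p_{\theta_{i+1}},p_i)+TV(p_i,p_0)$, and expand the second term using $p_i=\sum_{j=1}^{i}\beta_i^j p_{\theta_j}+\alpha_i p_0$ together with $\sum_j\beta_i^j+\alpha_i=1$, exactly as in Inequality~\ref{lemma 1.2}, to get $TV(p_i,p_0)\le\sum_{j=1}^{i}\beta_i^j\,TV(p_{\theta_j},p_0)$. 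Given a one-generation estimate $TV(p_{\theta_{k+1}},p_k)\lesssim f(n_k)$ with $f(n_k):=n_k^{-1/4}R\sqrt{1+R^2}\log^{1/4}(i/\delta)$, the induced recursion $TV(p_{\theta_{i+1}},p_0)\lesssim f(n_i)+\sum_{j=1}^i\beta_i^j\,TV(p_{\theta_j},p_0)$ is solved by the same algebra that produced the coefficients $A_{i-k}=\sum_{j=i-k+1}^i\beta_j^{i-k+1}A_j$ in the proof of Theorem~\ref{general theorem}, which yields the stated bound. Thus the only genuinely new work is the single-generation estimate.

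\textbf{Single-generation bound.} Here I would invoke the continuous normalizing flow framework of \citet{yang2022mathematical}. Write $p_{\theta_{i+1}}$ as the pushforward of the base Gaussian $\mathcal N$ along the flow generated by the learned velocity field $\widehat V_{i+1}$, and note that the optimal velocity field $V_{i+1}^*$, which by hypothesis satisfies $\|V_{i+1}^*\|_{\mathcal F}=\exp\|V_{i+1}^*\|_{\mathcal H}\le R$, transports $\mathcal N$ to $p_i$ up to a small approximation error. The first sub-step is a stability estimate for the ODE $\frac{d}{d\tau}x(\tau)=V(x(\tau),\tau)$: a Gr\"onwall argument along trajectories shows that the TV distance between the two pushforwards is controlled by a factor depending on the flow-induced norms (this is exactly where the hypothesis $\exp\|V\|_{\mathcal H}\le R$ and a factor $\sqrt{1+R^2}$ accounting for the time coordinate enter) times an $L^2(\mathcal N)$-type discrepancy of $\widehat V_{i+1}-V_{i+1}^*$ integrated along the flow. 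The second sub-step bounds this velocity discrepancy by the excess population risk of the empirical risk minimizer over the norm-$R$ ball of $\mathcal H$: the approximation term is negligible since $V_{i+1}^*\in\mathcal H$, and the statistical term is handled by Lemma~\ref{rademacher} together with a Rademacher complexity bound for the Barron-type ball $\{f_{\mathbf a}:\|\mathbf a\|\le R\}$, which scales like $R/\sqrt{n_i}$. Combining these, and taking the square root forced by the quadratic flow-stability estimate, gives $TV(p_{\theta_{i+1}},p_i)\lesssim n_i^{-1/4}R\sqrt{1+R^2}\log^{1/4}(1/\delta)$; a union bound over the $i$ generations replaces $\log(1/\delta)$ by $\log(i/\delta)$.

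\textbf{Main obstacle.} The recursion is mechanical and transfers verbatim from the proof of Theorem~\ref{general theorem}; the difficulty is entirely in the single-generation estimate, in two places. First, the flow-map stability bound must control how a perturbation of the velocity field propagates along the whole trajectory $\tau\in[0,1]$, which requires uniform control of the Lipschitz/Jacobian behaviour of the flow — this is precisely what the flow-induced norm $\|V\|_{\mathcal F}=\exp\|V\|_{\mathcal H}$ packages, so the estimate must be arranged so that the exponential blow-up is absorbed into $R$ rather than appearing as an unbounded constant. Second, the generalization step is over an infinite-dimensional function class with an unbounded base measure $\mathcal N$, so one needs a truncation argument (using the finite second moment of $p_i$ and the Gaussian tails of $\mathcal N$) before applying the bounded-difference and Rademacher machinery; ensuring the truncation error stays of lower order than $n_i^{-1/4}$ is the delicate point.
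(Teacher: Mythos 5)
Your high-level skeleton matches the paper exactly: triangle inequality, expansion of $TV(p_i,p_0)$ via the mixture structure, and then the recursion on the coefficients $A_{i-k}$. The discrepancy is entirely in the single-generation estimate, where you take a genuinely different route that has a gap. You propose a Gr\"onwall-type ODE stability argument along trajectories to bound the TV distance between the two pushforwards of $\mathcal{N}$ in terms of an $L^2$ discrepancy of $\widehat V_{i+1}-V_{i+1}^*$. But a coupling/trajectory argument of that kind naturally controls a \emph{Wasserstein}-type distance, not the total variation distance; passing from trajectory closeness to TV requires controlling the Jacobian determinants (the densities, not just the maps), which is not supplied by a bare Gr\"onwall estimate and is not among the hypotheses. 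You would need an additional density-regularity argument that you have not described, and it is not clear the constants would come out in the stated form $R\sqrt{1+R^2}$.

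The paper sidesteps this entirely. After the triangle-inequality decomposition it applies \emph{Pinsker's inequality} to reduce $TV(p_{\theta_{i+1}},p_i)$ to $\sqrt{KL(p_i\|p_{\theta_{i+1}})}$, and then exploits the key structural fact (from the continuous normalizing flow framework of Yang et al.) that the population training objective $L(V)$ — a trace-of-Jacobian term plus a squared-norm term via the reverse flow map — \emph{is} the KL divergence up to a constant. The excess $KL$ is then bounded by a standard empirical-risk decomposition $L(V_i^{(n_i)})\le L(V_i^*)+2\sup_{\|V\|_\mathcal{F}\le R}[L(V)-L^{(n_i)}(V)]$, and the supremum is split into the trace term $A$ (with $\mathbb{E}[A]\lesssim R^2/\sqrt{n_i}$) and the squared-norm term $B$ (with $\mathbb{E}[B]\lesssim R^4/\sqrt{n_i}$), each bounded via Rademacher-complexity estimates for flow-induced function classes and a \emph{sub-Gaussian extension of McDiarmid's inequality} to handle the unbounded Gaussian base measure. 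The factor $R\sqrt{1+R^2}$ then arises from $\sqrt{R^2+R^4}$, not, as you suggest, from ``accounting for the time coordinate.'' Your proposed truncation argument would be an alternative to the sub-Gaussian McDiarmid step, but the more serious missing idea is the Pinsker-plus-loss-equals-KL identity; without it, the path from empirical-process bounds to a \emph{TV} bound is not closed.
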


\begin{proof}[Proof of Theorem \ref{normalizing flows}]
Using the triangle inequality for the TV distance, we can infer the following:
\begin{equation}
 TV(p_{\theta_{i+1}},p_0)\leq TV(p_{\theta_{i+1}},p_i)+TV(p_{i},p_0).\notag
\end{equation}
For the second term, $TV(p_{i},p_0)$, considering the definition of TV distance as $\frac{1}{2}\int |p_i(x)-p_0(x)|\ dx$, and given $p_i(x)=\beta_i^1 p_{\theta_1}(x)+\cdots+\beta_i^i p_{\theta_i}(x)+\alpha_i p_0(x)$, we can infer:
\begin{align}
 TV(p_{\theta_{i+1}},p_0)\leq TV(p_{\theta_{i+1}}, p_i)+\beta^i_{i}TV(p_{\theta_i},p_0)+\beta^{i-1}_{i}TV(p_{\theta_{i-1}},p_0)+\cdots+\beta^{1}_{i}TV(p_{\theta_{1}},p_0). \label{extension32}
\end{align}
Next, let's focus on the first term, $TV(p_{\theta_{i+1}}, p_i)$. Utilizing Pinsker's inequality, we derive:
\begin{align}
    TV(p_{\theta_{i+1}}, p_i)\leq \sqrt{\frac{1}{2}KL(p_i \| p_{\theta_{i+1}})}.
\end{align}
Following the framework established by \citet{yang2022mathematical}, we establish bounds for the $KL$ term. Let $L$ be the population loss defined as:
\begin{align}
L(V_i)&=\iint_0^1\mathrm{Tr}\big[\nabla V_i(x_i(\tau),\tau)\big]d\tau+\frac12\|x_i(0)\|^2dp_i(x_i(1))\notag\\
\mathbf{x}_i(\tau)&:=G_\tau(G^{-1}(x_i(1))
\end{align}
In addition, let $L^{(n_i)}$ and $V^{(n_i)}_i$ be the corresponding empirical loss and the output. It follows that:
\begin{align}
L\left(V_i^{(n_i)}\right) & \leq L^{(n_i)}\left(V_i^{(n_i)}\right)+\sup _{\|V_i\|_{\mathcal{F}} \leq R} L(V_i)-L^{(n_i)}(V_i) \notag\\
& \leq L^{(n_i)}\left(V_i^*\right)+\sup _{\|V_i\|_{\mathcal{F}} \leq R} L(V_i)-L^{(n_i)}(V_i) \notag\\
& \leq L\left(V_i^*\right)+2 \sup _{\|V_i\|_{\mathcal{F}} \leq R} L(V_i)-L^{(n_i)}(V_i)\notag
\end{align}
Then, we obtain the following:
\begin{align}
& \sup _{\|V_i\|_{\mathcal{F}} \leq R} L(V_i)-L^{(n_i)}(V_i) \notag\\
\leq & \sup _{\|V_i\|_{\mathcal{F}} \leq R} \iint_0^1 \operatorname{Tr}\left[\nabla V_i\left(F_{V_i}\left(x_i(\tau), \tau\right), \tau\right)\right] d \tau d\left(p_i-\overline{p}_i\right)(x) \notag\\
& +\sup _{\|V_i\|_{\mathcal{F}} \leq R} \iint_0^1 \frac{1}{2}\left\|F_{V_i}(x, 1)\right\|^2 d\left(p_i-\overline{p}_i\right)(x)
\end{align}
Let $A$ and $B$ represent the two terms as random variables. Employing the techniques elucidated in [\citet{ma2019priori}, Theorem 2.11] and [\citet{han2021class}, Theorem 3.3] to bound the Rademacher complexity of flow-induced functions, we can obtain:
\begin{align}
&\mathbb{E}[A] \lesssim \frac{R}{\sqrt{n_i}} \mathbb{E}\left[\max _{1 \leq j \leq n_i}\left\|x_i^j\right\|\right] \lesssim \frac{R^2}{\sqrt{n_i}} \notag\\
& \mathbb{E}[B] \lesssim \frac{R^2}{\sqrt{n_i}} \mathbb{E}\left[\max _{1 \leq j \leq n_i}\left\|x_i^j\right\|^2\right] \lesssim \frac{R^4}{\sqrt{n_i}}\notag
\end{align}
Furthermore, concerning the variances $A-\mathbb{E}[A]$ and $B-\mathbb{E}[B]$, we can employ the extension of McDiarmid's inequality to sub-Gaussian random variables \cite{kontorovich2014concentration}, to demonstrate that, with a probability of $1-\delta$,
$$
A-\mathbb{E}[A] \lesssim \frac{R^2 \sqrt{\log 1 / \delta}}{\sqrt{n_i}} . \quad B-\mathbb{E}[B] \lesssim \frac{R^4 \sqrt{\log 1 / \delta}}{\sqrt{n_i}}
$$
Combining these inequalities, we induce that:
$$
KL(p_i \| p_{\theta_{i+1}})\lesssim  \frac{R^2(1+R^2)\sqrt{\log 1 / \delta}}{\sqrt{n_i}}
$$
Utilizing Pinsker's inequality, we derive:
\begin{align}
    TV(p_{\theta_{i+1}}, p_i)\lesssim n_{i}^{-\frac{1}{4}}R\sqrt{1+R^2}\log^{\frac{1}{4}}\frac{1}{\delta}.
\end{align}
Upon substituting the above equation into Equation \ref{extension32}, we obtain:
\begin{align}
 TV(p_{\theta_{i+1}},p_0)\lesssim n_{i}^{-\frac{1}{4}}R\sqrt{1+R^2}\log^{\frac{1}{4}}\frac{1}{\delta}+\beta^i_{i}TV(p_{\theta_i},p_0)+\cdots+\beta^{1}_{i}TV(p_{\theta_{1}},p_0). 
\end{align}
Analogous to the proof analysis process of Theorem \ref{general theorem}, we can derive the following through recursive methods, with probability at least $1-\delta$:
\begin{align}
    TV(p_{\theta_{i+1}},p_0)\lesssim \sum_{k=0}^i A_{i-k} \left(n_{i-k}^{-\frac{1}{4}}R\sqrt{1+R^2}\log^{\frac{1}{4}}\frac{i}{\delta}\right),\notag
\end{align}
where $A_i=1, A_{i-k}=\sum_{j=i-k+1}^i \beta_j^{i-k+1} A_j$ for $1\leq k \leq i$.

\end{proof}

\begin{remark}\textbf{Extension to Transformer Models.} Exploring the theoretical extension of this work to transformer models trained within a self-consumption loop presents significant challenges. Firstly, characterizing the generalization error of transformers at each generation is essential. It's worth noting that this task is particularly challenging due to the complexity of the training data, which consists of a mixture of data from different distributions. Furthermore, delving into how this error compounds with each generation poses an even more complex scenario. Investigating this intricate situation remains a subject for future work, requiring further exploration.
\end{remark}

\section{Experiments}
In this section, we present some experimental results. Specifically, we trained a diffusion model on the MNIST dataset. Consistent with previous works \cite{alemohammad2023self, bertrand2023stability}, we employed the FID score as a metric to evaluate model performance. We trained multiple generations under two scenarios: 1) a 1:1 ratio of real and synthetic data, and 2) fully synthetic data. This allowed us to investigate the impact of the number of training generations, the total number of training samples, and the presence of real data on model performance, as detailed in the table below:
\begin{center}
\begin{tabular}{|c|c|c|c|c|c|}  
\hline   \textbf{Dataset} & \textbf{gen01} & \textbf{gen02} & \textbf{gen03} & \textbf{gen04} \\   
\hline  \textbf{syn10k} & $120.99$ & $156.77$ & $165.04$ & $180.23$  \\ 
\hline  \textbf{syn15k} & $55.46$ & $137.45$ & $151.95$ & $157.62$  \\ 
\hline  \textbf{syn20k} & $24.44$ & $46.78$ & $61.14$ & $78.77$  \\ 
\hline  \textbf{mix10k} & $58.94$ & $117.73$ & $137.62$ & $136.39$  \\ 
\hline  \textbf{mix15k} & $55.69$ & $66.93$ & $69.63$ & $75.91$  \\ 
\hline  \textbf{mix20k} & $20.05$ & $31.92$ & $37.36$ & $36.13$  \\ 
\hline   
\end{tabular}   
\end{center}   
We observed that our experimental findings closely aligned with the theoretical results derived in the paper. When the number of training samples and the mix in the training set remained constant, the model performance deteriorated as the number of training generations increased. However, increasing the number of training samples and incorporating real data both led to improved model performance.

In addition, we fixed the amount of real data at 9K samples and varied the number of synthetic samples to 13K, 15K, 17K, and 20K. The experimental results are as follows: 
\begin{center}
\begin{tabular}{|c|c|c|c|c|c|}  
\hline   \textbf{Dataset} & \textbf{syn 13k} & \textbf{syn 15k} & \textbf{syn 17k} & \textbf{syn 20k} \\   
\hline  \textbf{real 9k} & $74.95$ & $87.15$ & $33.50$ & $28.39$  \\ 
\hline  
\end{tabular}   
\end{center}   

These findings corroborate the theoretical predictions in Remark \ref{unbias}, where we indicate that the TV distance first exhibits an ascent and then declines beyond a threshold point as synthetic data expands while real data remains fixed.


\end{document}